\theoremstyle{plain}
\newtheorem{pro}{Property}
\newtheorem{thm}{Theorem}
\newtheorem{lem}[thm]{Lemma}
\newtheorem{ex}{Example}
\newenvironment{example}{\begin{ex}\rm}{\qed\end{ex}}
\theoremstyle{definition}
\newtheorem{defn}{Definition}
\newenvironment{custompro}[1]
  {\innercustompro}
  {\endinnercustompro}
\newcommand{\MDD}[1]{\mathit{MDD_{#1}}}
\newcommand{\tuple}[1]{\ensuremath{\langle #1 \rangle}}
\newcommand*{\Rom}[1]{\expandafter\@slowromancap\romannumeral #1@}
\newcommand{\jl}[1]{\textcolor{blue}{\textsc{JL:} #1}}
\newcommand{\pjs}[1]{\textcolor{purple}{\textsc{PJS:} #1}}
\journal{Artificial Intelligence}
\begin{document}

\begin{frontmatter}



\title{Pairwise Symmetry Reasoning for Multi-Agent Path Finding Search}


\author[1]{Jiaoyang Li\corref{cor1}}
\ead{jiaoyanl@usc.edu}
\cortext[cor1]{Corresponding author}
\author[2]{Daniel Harabor}
\ead{daniel.harabor@monash.edu}
\author[2]{Peter J. Stuckey}
\ead{peter.stuckey@monash.edu}
\author[1]{Sven Koenig}
\ead{skoenig@usc.edu}
\address[1]{Computer Science Department, University of Southern California, USA}
\address[2]{Faculty of Information Technology, Monash University, Australia}

\begin{abstract}
Multi-Agent Path Finding (MAPF) is a challenging combinatorial problem that asks us to 
plan collision-free paths for a team of cooperative agents. 
In this work, we show that one of the reasons why MAPF is so hard to solve
is due to a phenomena called pairwise symmetry, which occurs when two
agents have many different paths to their target locations, all of which appear promising, but every combination of them results in a collision. 
We identify several classes of pairwise symmetries 
and show that each one arises commonly in practice and can produce an 
exponential explosion in the space of possible collision resolutions, leading to unacceptable 
runtimes for current state-of-the-art (bounded-sub)optimal MAPF algorithms. 
We propose a variety of reasoning techniques that
detect the symmetries efficiently as they arise and resolve them by using specialized constraints to eliminate all permutations of
pairwise colliding paths in a single branching step. 
We implement these ideas in the context of the leading optimal MAPF algorithm CBS and show that the addition of 
the symmetry reasoning techniques can have a dramatic positive effect on its performance - we report a reduction in the number of node expansions by up to four orders of magnitude and an increase in scalability by up to thirty times. These gains allow us to solve to optimality
a variety of challenging MAPF instances previously considered out of reach for CBS.
\end{abstract}



\begin{keyword}



Multi-Agent Path Finding \sep
Symmetry Breaking \sep
Multi-Robot System
\end{keyword}

\end{frontmatter}


\pagenumbering{arabic}

\section{Introduction}
\label{sec:intro}

Multi-Agent Path Finding (MAPF)~\cite{SternSOCS19} is a combinatorial problem that asks us 
to plan collision-free paths for a team of moving agents while minimizing the sum of their travel times. 
It is a core problem in a variety of real-world applications, including (but not limited to) automated warehousing~\cite{kiva,LiAAAI21b},
autonomous intersection management~\cite{ds-amataim-08}, drone swarm coordination~\cite{hpkga-tpfqs-18}, and video game character control~\cite{s-cp-05}.
High-quality MAPF solutions are important for many of these applications, and thus
numerous optimal and bounded-suboptimal algorithms have been suggested 
in recent years, despite the fact that MAPF is NP-hard to solve optimally on general graphs~\cite{YuLavAAAI13,MaAAAI16}, directed graphs~\cite{NebelICAPS20},
planar graphs~\cite{YuRAL16}, and
grids~\cite{BanfiRAL17}.
The current leading (bounded-sub)optimal algorithms (e.g., \cite{GangeICAPS19, SurynekIJCAI19, LamICAPS20, LiAAAI21a}) either are based on
Conflict-Based Search (CBS)~\cite{SharonAIJ15} or employ a similar strategy to CBS, whose central idea is to plan paths for each agent independently first by ignoring other agents and resolve collisions afterwards.
Though each such algorithm proceeds in a different way, 
they face
the same essential difficulty 
due to a phenomena called \emph{pairwise symmetry}, which occurs when two agents have many different paths to their target locations, but every combination of them results in a collision.  In order to prove that the solutions these algorithms return are (bounded-sub)optimal, they have to 
enumerate a dramatically large number of the combinations of these colliding paths.

\begin{figure}[t]
\centering
\includegraphics[width=.7\linewidth]{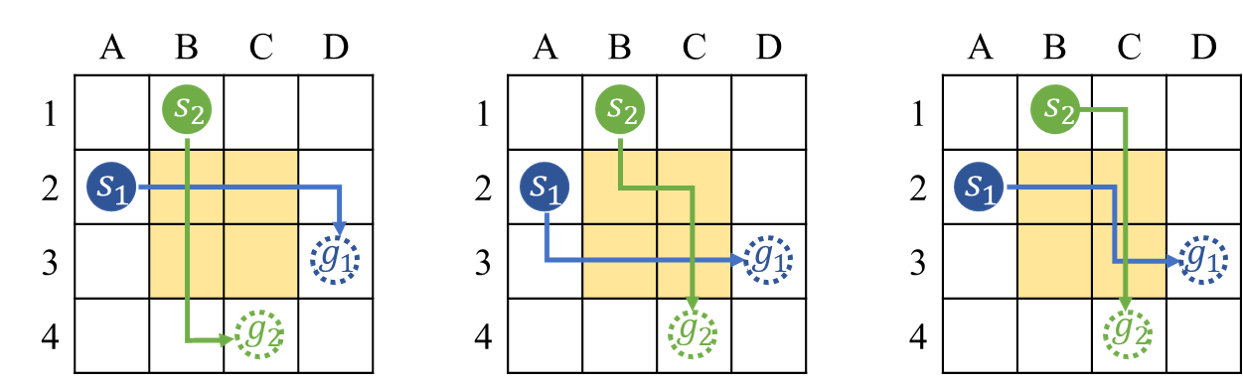}
\caption{An example of a rectangle symmetry. The left figure
shows two shortest paths for two agents $a_1$ and $a_2$ that move them from cells A2 and B1 to cells D3 and C4, respectively, and collide at cell B2 at timestep 1. The middle and right figures show the same MAPF instance but with different shortest paths that collide at one or multiple cells in the yellow rectangular area.
}\label{fig:rect-examples}
\end{figure}

\begin{example}\label{ex:rect}
\Cref{fig:rect-examples} shows an example of a rectangle symmetry. There exists for each agent multiple shortest paths. Each path is grid symmetric: it can be derived from any other path by simply changing the order of the individual RIGHT and DOWN moves. All shortest paths for the two agents collide somewhere inside the yellow rectangular area. The optimal strategy here is for one agent to wait for the other. To find such a solution, however, optimal MAPF algorithms must first prove that every combination of wait-free paths leads to collisions. Yet the number of possible combinations of wait-free paths grows exponentially with the size of the yellow rectangular area, i.e., the larger the area, the harder the optimality proof.
\end{example}

In this work, we consider three challenging situations, each commonly found in 
popular MAPF domains and involving pairs of colliding agents:%
\begin{enumerate}
    \item {\em rectangle symmetry}, which arises when two agents repeatedly collide along many different shortest paths.
    \item {\em target symmetry}, which arises one moving agent repeatedly collides with another stopped agent.
    \item {\em corridor symmetry}, which arises when two agents moving in opposite directions repeatedly collide inside a narrow passage.
\end{enumerate}%
For each type of symmetries, we propose new algorithmic reasoning 
techniques that can identify the situation at hand and resolve it in a single branching step by the addition of symmetry-breaking constraints.  
We explore these ideas in the context of a leading and popular optimal MAPF algorithm CBS~\cite{SharonAIJ15} (or, more precisely, its advanced variant CBSH~\cite{FelnerICAPS18}). On the one hand,
we give a rigorous theoretical analysis which shows that our symmetry reasoning techniques preserve the completeness and optimality for CBS. On
the other hand, we evaluate the impact of these symmetry reasoning techniques in a wide
range of empirical comparisons, showing that the symmetry reasoning techniques can lead
to an exponential reduction in CBS node expansions.  In one headline
result, we show that our resulting algorithm CBSH-RTC resolves the majority of
two-agent collisions in just a single branching step.  In another headline
result, we report substantial improvement of the symmetry reasoning techniques on CBSH and its improved variants CBSH2~\cite{LiIJCAI19} and Mutex Propagation~\cite{ZhangICAPS20} 
in terms of both runtime and percentage of instances solved within the runtime limit.
%

Preliminary versions of this work appeared in AAAI 2019~\cite{LiAAAI19a} and ICAPS 2020~\cite{LiICAPS20}. Compared to those versions, this paper provides a more comprehensive description and discussion of pairwise symmetries, new generalized versions of rectangle and corridor reasoning techniques (see \Cref{sec:generalized-rect,sec:generalized-corridor}), and an extended empirical evaluation, including 
comparison with CBSH2 and mutex propagation (see \Cref{sec:exp}).
Although we demonstrate our symmetry reasoning techniques only in the context of solving classic MAPF problems with the optimal MAPF algorithm CBS in this paper, 
they can be applied and, indeed, based on our preliminary work, have already been applied to other generalized MAPF problems~\cite{ChenAAAI21b} and other optimal and bounded-suboptimal MAPF algorithms~\cite{LamICAPS20,LiAAAI21a,LamIJCAI19}.
\section{Problem Definition}
\label{sec:def}

MAPF has many variants. In this paper, we focus on the classic variant defined in \cite{SternSOCS19} that (1) considers vertex and swapping conflicts, (2) uses the ``stay at target'' assumption, and (3) optimizes the sum of costs. 

Formally, we define MAPF by a \emph{graph} $G=(V,E)$ and a set of $m$ agents \{$a_1, \ldots, a_m\}$. 
Each agent $a_i$ has a \emph{start vertex} $s_i \in V$ and a \emph{target vertex} $g_i \in V$. Time is discretized into timesteps. At each timestep,
every agent can either \emph{move} to an adjacent vertex or \emph{wait} at its current
vertex. 
A \emph{path} $p_i$ for agent $a_i$
is a sequence of vertices which are 
adjacent or identical (indicating a wait action),
starting at vertex $s_i$ and
ending at vertex $g_i$. 
That is, $p_i = [v_0, v_1, \dots, v_l]$, where $v_0 = s_i$, $v_l=g_i$, and for all $0 \leq t < v_l$, $(v_t, v_{t+1}) \in E$ or $v_t=v_{t+1} \in V$. We refer to $l$ as the \emph{path length} of $p_i$.
Agents remain at their target vertices after they complete their paths.%
\begin{defn}[Conflict]
A \emph{conflict} 
is either a \emph{vertex conflict} $\langle a_i,a_j,v,t \rangle$, which arises when agents $a_i$ and
$a_j$ are at the same vertex $v \in V$ at the same timestep $t$, or an \emph{edge conflict} $\langle
a_i,a_j,u,v,t \rangle$, which arises when agents $a_i$ and $a_j$
traverse the same edge $(u,v) \in E$ in opposite directions at the same timestep $t$ (or, more precisely, from timestep $t-1$ to timestep $t$).
\end{defn}%
To reason about symmetries, we further classify and group some vertex and edge conflicts into symmetric conflicts. For example, the three vertex conflicts shown in \Cref{fig:rect-examples} 
correspond to the same rectangle conflict. More details about symmetric conflicts are introduced in later sections.
A \emph{solution} is a set of conflict-free paths, one for each agent. 
Our task is to find a solution with the minimum \emph{sum of costs} (i.e., sum of the path lengths). 

In the examples and experiments of this paper, graph $G$ is always a 4-neighbor grid whose vertices are unblocked cells and whose edges connect vertices corresponding to adjacent unblocked cells in the four main compass directions. We use this assumption because 4-neighbor grids are arguably the most common way of representing the environment for MAPF, and MAPF on 4-neighbor grids has many real-world applications, such as video games~\cite{LiAAMAS20a} and warehouse robots~\cite{LiAAAI21b}. 
Nevertheless, most of our symmetry reasoning techniques can be directly applied to general graphs, and we will provide more details when we introduce these techniques.

\section{Background: CBS and Its Variants}
\label{sec:cbs}
In this section, we introduce CBS and many improvements to it.
\subsection{Vanilla CBS}
\emph{Conflict-Based Search} (CBS) \cite{SharonAIJ15} is a two-level search algorithm for solving MAPF optimally. At the low level, CBS invokes space-time A*~\cite{s-cp-05} (i.e., A* that searches in the space whose states are vertex-timestep pairs) to find a shortest path for a single agent that satisfies the constraints added by the high level, breaking ties in favor of the path that has the fewest conflicts with the (already planned) paths of other agents.
A \emph{constraint} is a spatio-temporal restriction introduced
by the high level to resolve conflicts.
Specifically, a \emph{vertex constraint} $\langle a_i,v,t \rangle$ prohibits agent $a_i$ from being at vertex $v \in V$ at timestep $t$. Similarly, an \emph{edge constraint} $\langle a_i,u,v,t\rangle$ prohibits agent $a_i$ from traversing edge $(u, v) \in E$ at timestep $t$ (or more precisely, from timestep $t-1$ to timestep $t$). We say that a constraint \emph{blocks} a path if the path does not satisfy the constraint.

At the high level, CBS performs a best-first search on a binary \emph{constraint tree} (CT). Each CT node contains  a set of constraints and a \emph{plan}, i.e., a set of shortest paths, one for each agent, that satisfy the constraints but are not necessarily conflict-free. The \emph{cost} of a CT node is the sum of costs of its plan. The root CT node contains an empty set of constraints (and thus a set of shortest paths for all agents). 
CBS always expands the CT node with the smallest cost, breaking ties in favor of the CT node that has the fewest conflicts in its plan, and terminates when the plan of the CT node for expansion is conflict-free, which corresponds to an optimal solution. 
When expanding a CT node, CBS checks 
for conflicts in its plan. 
It chooses one of the conflicts (by default, arbitrarily) and 
resolves it by \emph{branching}, i.e., by \emph{splitting} the CT node into two child CT nodes. 
In each child CT node, one agent from the conflict is prohibited from using the conflicting vertex or edge at the conflicting timestep by way of an additional constraint. The path of this agent does not satisfy the new constraint and is replanned by the low-level search. 
All other paths remain unchanged.
If the low-level search cannot find any path, this child CT node does not have any
solution and therefore is pruned.

\subsubsection{Theoretical Analysis}
CBS guarantees its completeness by exploring both ways of resolving each conflict. 
In other words, when expanding a CT node, any conflict-free paths that satisfy the constraints of the CT node must satisfy the constraints of at least one of its child CT nodes. So branching only excludes conflicting paths but does not lose any solutions.
CBS guarantees optimality by performing best-first searches at both its high and low levels. Please refer to~\cite{SharonAIJ15} for detailed proof.

Since we will introduce new types of constraints to resolve symmetric conflicts in this paper, we here provide the principle of designing constraints for CBS without losing its completeness or optimality guarantees.

\begin{defn}[Mutually Disjunctive]
Two constraints for two agents $a_i$ and $a_j$ are \emph{mutually disjunctive} iff any pair of conflict-free paths of $a_i$ and $a_j$ satisfies at least one of the two constraints, i.e., there does not exist a pair of conflict-free paths that violates both constraints. Moreover, two sets of constraints are \emph{mutually disjunctive} iff each constraint in one set is mutually disjunctive with each constraint in the other set. 
\end{defn}%
\citet{LiAAAI19b} prove that using two sets of mutually disjunctive constraints to split a CT node preserves the completeness and optimality of CBS. The key idea of their proof is to show that 
any solution that satisfies the constraints of a CT node also satisfies the constraints of at least one of its child CT nodes, as stated in Lemma~\ref{lem:optimal}. See their paper for detailed proof. 

\begin{lem} \label{lem:optimal}
    For a given CT node N with constraint set $C$, if two constraint sets $C_1$ and $C_2$ are mutually disjunctive, any set of conflict-free paths that satisfies $C$ also satisfies at least one of the constraint sets $C \cup C_1$ and $C \cup C_2$.
\end{lem}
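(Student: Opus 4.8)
The plan is to prove the statement by unfolding the definition of mutual disjunctiveness, reducing the set-level claim down to a single pair of constraints and arguing by contradiction. First I would fix an arbitrary set $P$ of conflict-free paths (one per agent) that satisfies $C$, and observe that a path set satisfies $C \cup C_1$ exactly when it satisfies both $C$ and $C_1$; hence it suffices to show that $P$ satisfies $C_1$ or $P$ satisfies $C_2$, since the hypothesis that $P$ satisfies $C$ then supplies the rest for free. This reduction isolates the real content of the lemma.

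Next I would argue by contradiction: suppose $P$ violates both $C_1$ and $C_2$. By the meaning of a path set violating a set of constraints, violating $C_1$ means some single constraint $c_1 \in C_1$ is violated by $P$, and likewise some $c_2 \in C_2$ is violated by $P$. Each of $c_1$ and $c_2$ is a constraint on one specific agent, so I would name those agents $a_i$ and $a_j$ (distinct, as in the symmetry-breaking setting where $C_1$ constrains one agent and $C_2$ the other). That $c_1$ (resp.\ $c_2$) is violated by $P$ means precisely that the path $p_i$ (resp.\ $p_j$) of that agent within $P$ violates it.

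The crux is then to invoke mutual disjunctiveness correctly. Because $C_1$ and $C_2$ are mutually disjunctive as sets, the particular pair $c_1, c_2$ is mutually disjunctive in the sense of the single-constraint definition. Since $P$ is globally conflict-free, the sub-pair $(p_i, p_j)$ is in particular a pair of conflict-free paths for $a_i$ and $a_j$. But this pair violates both $c_1$ and $c_2$, contradicting the definition of mutual disjunctiveness, which forbids any pair of conflict-free paths of $a_i$ and $a_j$ from violating both constraints simultaneously. This contradiction forces $P$ to satisfy $C_1$ or $C_2$, and therefore $C \cup C_1$ or $C \cup C_2$, as required.

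I expect the only real subtlety — rather than a genuine obstacle — to be the bookkeeping that bridges the set-level statement of the lemma and the pairwise-level definition of mutual disjunctiveness: one must extract a single offending constraint from each set, confirm that these two constraints concern two distinct agents $a_i$ and $a_j$ as the pairwise definition presumes, and note that the relevant sub-pair of paths inherits conflict-freeness from the global conflict-freeness of $P$. Everything else is a direct definition-chase, requiring no nontrivial construction or estimate.
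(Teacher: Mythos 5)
Your proof is correct and takes essentially the same route as the paper's: both argue by contradiction, extract a single violated constraint $c_1 \in C_1$ and $c_2 \in C_2$ from the supposedly offending path set, and contradict the pairwise mutual disjunctiveness of $c_1$ and $c_2$ applied to the corresponding pair of conflict-free paths. Your version merely makes explicit the bookkeeping the paper leaves implicit (the reduction from satisfying $C \cup C_k$ to satisfying $C_k$, and the fact that the relevant sub-pair of paths inherits conflict-freeness from the global path set).
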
 

\begin{proof}
This is true because, otherwise, there would exist a pair of conflict-free paths such that
both paths satisfy $C$ but one path violates a constraint $c_1 \in C_1$ and one path violates a constraint $c_2 \in C_2$. Then, $c_1$ and $c_2$ are not mutually disjunctive, contradicting the assumption.
\end{proof}

\begin{thm} \label{thm:cbs-optimal}
Using two sets of mutually disjunctive constraints to split a CT node preserves the completeness and optimality of CBS.
\qed
\end{thm}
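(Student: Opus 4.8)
The plan is to reduce the claim to the established completeness and optimality proof of vanilla CBS \cite{SharonAIJ15}, by isolating the one structural property of the branching rule on which that proof depends and then verifying that splitting with two mutually disjunctive constraint sets supplies exactly this property. The standard argument never uses the specific form of the two constraints added at a split; it uses only that the split \emph{loses no solution}, i.e., every conflict-free solution consistent with the parent's constraint set $C$ remains consistent with at least one child. I would therefore treat the mutually disjunctive split as a drop-in replacement for the single-constraint split and check that each step of the original proof goes through unchanged.

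First I would recall why vanilla CBS satisfies the no-solution-lost property: its two single constraints forbid, respectively, the two colliding agents from using the conflicting vertex or edge at the conflicting timestep, so any conflict-free pair must avoid that collision and hence respect at least one of the two constraints. The key observation is that this is precisely the guarantee \Cref{lem:optimal} already establishes whenever $C_1$ and $C_2$ are mutually disjunctive: any set of conflict-free paths satisfying $C$ must also satisfy $C \cup C_1$ or $C \cup C_2$. Thus the new branching rule meets the only requirement the completeness argument imposes, and completeness follows by the usual induction over the constraint tree: the root's empty constraint set is satisfied by every solution, and \Cref{lem:optimal} propagates this consistency to some child at every split, so no optimal solution is ever pruned.

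For optimality I would invoke two monotonicity facts, neither of which is affected by changing the branching rule. Adding constraints can only shrink the feasible path set, so a child's cost is never below its parent's; consequently each CT node's cost remains an admissible lower bound on the cost of every solution reachable in its subtree, and the high-level best-first search still expands nodes in non-decreasing order of cost. The low-level search still returns minimum-cost paths subject to the enlarged constraint set, so node costs are computed correctly. Given these, the conclusion is identical to vanilla CBS: when the best-first search first selects a CT node whose plan is conflict-free, that plan is a genuine solution whose cost lower-bounds the cost of any solution still hidden in an unexpanded subtree, hence it is optimal.

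The step I expect to be the main obstacle, and the only genuine subtlety, is confirming that \Cref{lem:optimal}'s retain-every-solution guarantee is truly all the branching contributes, and that the replacement constraints are still per-agent spatio-temporal restrictions the low level can enforce. If so, the two monotonicity facts continue to hold and no previously feasible solution becomes spuriously infeasible; everything else in the CBS proof is orthogonal to how a single node is split. The theorem then follows by substituting \Cref{lem:optimal} for the corresponding solution-preservation step in the original proof.
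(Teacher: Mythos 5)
Your proposal is correct and matches the paper's argument: the paper likewise treats \Cref{lem:optimal} (solution preservation at each split) as the sole property the branching rule must supply, and then defers to the standard CBS completeness and optimality proof of \citet{SharonAIJ15} and the detailed argument of \citet{LiAAAI19b}, which is why the theorem is stated with no further proof. Your additional spelling-out of the monotonicity and best-first-search facts is a faithful expansion of what that standard proof uses, not a different route.
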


Hence, the principle of designing constraints for CBS without losing its completeness or optimality guarantees is to ensure that the two constraints (or constraint sets) we use to split a CT node are mutually disjunctive.

\subsection{Advanced Variants of CBS}
We introduce CBSH, an improved variant of CBS, that is used as the baseline algorithm in our experiments, and CBSH2, a further improved variant of CBS, that we also compare against experimentally in \Cref{sec:exp-CBSH2}.

\subsubsection{CBSH} \label{sec:CBSH}

CBSH~\cite{FelnerICAPS18} improves CBS from two aspects. It first uses the technique of \emph{prioritizing conflicts} from \cite{ICBS} to determine which conflict to resolve first.
It classifies conflicts into three types, and, here, we provide the generalized definitions that are applicable also to the symmetric conflicts.
\begin{defn}[Cardinal, Semi-Cardinal, and Non-Cardinal Conflicts]\label{def:cardinal}
A conflict is \emph{cardinal} 
iff, when replanning for any agent involved in the conflict (with the corresponding constraint) increases the sum of costs.
A conflict is \emph{semi-cardinal} 
iff replanning for one agent involved in the conflict increases the
sum of costs while replanning for the other agent does not.
Finally, a conflict is \emph{non-cardinal} iff replanning for any agent involved in the conflict does not increases the sum of costs.
\end{defn}%
\citet{ICBS} show that 
CBS can significantly improve its efficiency by resolving cardinal conflicts first, then semi-cardinal conflicts, and last non-cardinal conflicts, because generating child CT nodes with larger costs first can improve the \emph{lower bound} of the CT (i.e., the minimum cost of the leaf CT nodes) faster and thus produce smaller CTs. 

CBSH builds MDDs to classify conflicts. 
A \emph{Multi-Valued Decision Diagram} (MDD)~\cite{SharonAIJ13} $\MDD{i}$ for agent $a_i$ at a CT node is a directed acyclic graph that consists of all shortest paths of agent $a_i$ that satisfy the constraints of the CT node. The MDD nodes at depth $t$ in $\MDD{i}$ correspond to all locations at timestep $t$ in these paths. If $\MDD{i}$ has only one MDD node $(v,t)$ at depth $t$, we call this node a \emph{singleton}, and all shortest paths of agent $a_i$ are at vertex $v$ at timestep $t$. 
So a vertex conflict $\langle a_i,a_j,v,t \rangle$ is cardinal iff the MDDs of both agents have singletons at depth $t$, and an edge conflict $\langle a_i,a_j,u,v,t \rangle$ is cardinal iff the MDDs of both agents have singletons at both depth $t-1$ and depth $t$.
Semi-/non-cardinal vertex/edge conflicts can be identified analogously.

The high level of CBS consists of a best-first search that prioritizes the CT node
with the smallest cost for expansion. The second improvement of CBSH over CBS is to add admissible heuristics to the high-level search. 
It builds a \emph{cardinal conflict graph} for every CT node, whose
vertices represent agents and edges represent cardinal conflicts in the plan of the CT node, and uses the value of
the minimum vertex cover of the cardinal conflict graph as an admissible and consistent heuristic. 
\citet{FelnerICAPS18} show that the addition of heuristics to the high-level search often produces smaller CTs and decreases the runtime of CBS by a large factor.

\subsubsection{CBSH2} \label{sec:CBSH2}
Recently, \citet{LiIJCAI19} introduce a more informed heuristic for the high level of CBS by using CBSH to solve a two-agent MAPF instance for every pair of agents in every CT node. 
The suggested algorithm, CBSH2, proceeds by building a \emph{weighted pairwise dependency graph}, whose
vertices represent agents and edge weights represent the sum of costs of the optimal conflict-free paths for the two agents (with respect to the constraints of the CT node) minus the sum of costs of their paths in the plan of the CT node. It then solves an \emph{edge-weighted minimum vertex cover}, which is an assignment of non-negative integers, one for each vertex, that minimizes the sum of the integers subject to the constraints that, for every edge, the sum of the two corresponding integers is no smaller than the edge weight. They show that the sum of the integers is an admissible $h$-value and no smaller than the $h$-value used in CBSH. 
With the help of some runtime reduction techniques, CBSH2 speeds up CBSH on all maps tested in their experiments.

\section{Related Work}
\label{sec:related}

We review existing algorithms for solving MAPF optimally and discuss existing methods that can eliminate (some) symmetries in MAPF.

\subsection{Optimal MAPF Algorithms}

Optimal MAPF algorithms include search-based algorithms that  either search in the joint-state space or are variants of CBS and compilation-based algorithms that reduce MAPF to other well-studied problems like ILP, SAT, and CP. 
Algorithms that directly search in the joint-state space are usually not scalable, 
so the leading variants of various optimal MAPF algorithms (e.g., CBSH2, BCP, SMT-CBS, and lazy-CBS) all use the idea of planning paths independently first by ignoring other agents and resolving collisions afterwards. Thus, they all suffer from the pairwise symmetries.
In this work, we demonstrate and develop symmetry reasoning techniques on CBS variants, but similar ideas can be, or have already been, applied to others.

\subsubsection{Search-Based Algorithms that Search in the Joint-State Space}

\paragraph{A*}
A straightforward way of solving MAPF is to use A* in the joint-state space, where the \emph{joint-states} are different ways to place all the agents into $|V|$ vertices, one agent per vertex, and the operators between joint-states are non-conflicting combinations of actions that the agents can take. Since the size of the joint-state space grows exponentially with the number of agents, numerous techniques has been developed to improve the efficiency of A*, 
such as independence detection~\cite{StandleyAAAI10}, operator decomposition~\cite{StandleyAAAI10}, partial expansion~\cite{GoldenbergJAIR14}, and subdimensional expansion~\cite{WagnerAIJ15}.

\paragraph{ICTS}
\emph{Increasing Cost Tree Search} (ICTS)~\cite{SharonAIJ13} is a two-level algorithm that is conceptually different from A* but still searches in the joint-state space. Its high level searches the \emph{increasing cost tree} where each node corresponds to a set of costs, one for each agent, and a child node differs from its parent node by increasing the cost of one of the agents by one. When generating a node, its low level searches in the joint-state space to determine whether there exists a solution such that the cost of each agent is equal to the corresponding cost in the high-level node. 

\paragraph{Summary}
Empirically, although many of the A* and ICTS variants are competitive with vanilla CBS~\cite{FelnerSoCS17}, they are shown to be worse than CBSH with the symmetry reasoning technique for rectangle conflicts in our ICAPS 2019 paper~\cite{LiAAAI19a}. 
This is not surprising because, as the number of agents and the congestion increase, the effectiveness of those speedup techniques are limited, and thus they all suffer from the explosion of the joint-state space.  

\subsubsection{Compilation-Based Algorithms}
MAPF can be reduced to other well-studied NP-hard problems, relying on off-the-shelf solvers to find optimal solutions.

\paragraph{ILP}
MAPF can be encoded as an integer multi-commodity flow problem~\cite{YuTRO16} and thus solved by an Integer Linear Programming (ILP) solver. It is shown that such methods are competitive or sometimes even outperform search-based algorithms on small maps. However, they do not scale well on large maps because the ILP encoding requires a Boolean variable for each agent being at each vertex at each timestep.
\emph{BCP}~\cite{LamIJCAI19, LamICAPS20} is a more efficient ILP-based algorithm based on branch and price and one of the current leading algorithms. 
Similar to CBS, BCP is a two-level algorithm whose low level solves a series of single-agent pathfinding problems and whose high level uses ILP to assign paths to agents and resolve conflicts.
It is shown that BCP can be substantially sped up by
making use of symmetry-breaking cuts
(similar to our reasoning techniques) in their fractional solutions. 
The rectangle and target reasoning techniques used in BCP are based on our earlier work~\cite{LiAAAI19a,LiICAPS20}, while BCP was the
first approach to notice pseudo-corridor symmetries (introduced in \Cref{sec:pseudo-corridor}). BCP also introduces many other symmetries, but they are caused by the fractional solutions and do not occur in CBS.

\paragraph{SAT}
MAPF can also  be encoded as a Boolean satisfiability problem (SAT)~\cite{SurynekECAI16}. Like the basic ILP encoding, the basic SAT encoding requires a Boolean variable for each agent being at each vertex at each timestep, and thus its efficiency drops as the size of the problem grows.
\emph{SMT-CBS}~\cite{SurynekIJCAI19} is a more efficient SAT-based algorithm based on satisfiability modulo theories. Like CBS, it ignores all conflicts in the beginning and adds conflict-resolving constraints only when necessary. SMT-CBS outperforms the basic SAT-based algorithms, and there is already evidence that its efficiency can be further improved by adding the symmetry reasoning techniques~\cite{SurynekPRIMA20}. 

\paragraph{CP}
Like the basic ILP and SAT encodings, MAPF can also be directly encoded as a constraint satisfaction problem and solved by an off-the-shelf Constraint-Programming (CP) solver. But, again, there is a more efficient CP-based algorithm, called lazy-CBS~\cite{GangeICAPS19}, that deploys the CBS framework. It uses the same constraint tree as CBS but traverses it using lazy clause generation instead of A*. Lazy-CBS is also one of the leading algorithms, and there is already evidence that its efficiency can be further improved by adding the symmetry reasoning techniques~\cite{LamICAPS20}.

\subsection{Existing Approaches to Eliminating Symmetries}\label{sec:existing}

Symmetry breaking has been shown to be a powerful and successful technique in the constraint programming community~\cite{benhamou,WalshCP06}. By adding symmetry-breaking constraints~\cite{flener} or performing symmetry breaking during search~\cite{backofen},  symmetry reasoning is able to eliminate symmetries and thus reduces the size of the search space exponentially. 
In pathfinding problems, symmetries have so far been studied only for single agents, e.g., by exploiting grid symmetries~\cite{HaraborAAAI11}. 
There is some prior work that is able to eliminate some symmetries in MAPF (but loses optimality and/or completeness guarantees) by preprocessing the input graphs. We introduce two of them below. We then introduce a recent technique that uses mutex propagation to detect and resolve some symmetries in MAPF. We further give
an empirical comparison with this technique in \Cref{sec:exp-mutex}. 

\paragraph{Graph decomposition}
\citet{Ryan06,Ryan07} proposes several graph decomposition approaches for solving MAPF. Like our work, he detects special graph structures, including stacks, cliques and halls. Unlike our work, he builds an abstract graph by replacing such sub-graphs with meta-vertices during preprocessing in order to reduce the search space. His work preserves completeness but not optimality. Our work, by comparison, focuses on exploiting the sub-graphs to break symmetries without preprocessing or sacrificing optimality. 

\paragraph{Highways}
\citet{CohenIJCAI16} propose highways to reduce the number of corridor conflicts (defined in \Cref{sec:corridor}). They assign directions to some corridor vertices (resulting in one or more highways) and make moving against highways more expensive than other movements. They show that highways can speed up ECBS~\cite{BarerSoCS14}, a bounded-suboptimal variant of CBS. However, the utility of highways for optimal CBS is limited because they can then only be used to break ties among multiple shortest paths and are not guaranteed to resolve all corridor conflicts. Similar ideas of introducing directions to the graph edges are also explored in flow annotation replanning \cite{WangICAPS08}, direction maps \cite{JansenAIIDE08}, and optimized directed roadmap graphs~\cite{HenkelSAC20}, all of which do not guarantee optimality. 

\paragraph{Mutex propagation}
There is also recent work that identifies and resolves pairwise symmetries using mutex propagation~\cite{ZhangICAPS20,SurynekPRIMA20}. MDDs essentially capture the reachability information for single agents, which resembles planning graphs in classical planning. Therefore, this work adds mutex propagation on top of MDDs to capture the reachability information for pairwise agents. Two MDD nodes for two agents are \emph{mutex} iff any pair of their paths use the two MDD nodes are in conflict. So two agents have a cardinal (symmetric) conflict iff the goal nodes of their MDDs are mutex. Given two agents with a cardinal (symmetric) conflict, it finds two MDD node sets, each consisting of the MDD nodes of one agent that are mutex with the goal MDD node of the other agent, and uses them to generate two constraint sets for branching in CBS. 
Therefore, the mutex propagation technique is able to automatically identify all cardinal symmetric conflicts and resolve them. However, as we show in \Cref{sec:exp-mutex}, our handcrafted symmetry reasoning techniques substantially outperform the mutex propagation technique in practice because (1) we can also identify semi- and non-cardinal symmetric conflicts, (2) we induce smaller runtime overhead, and (3) our symmetry-breaking constraints are more effective in some cases.

\section{Rectangle Symmetry}
\label{sec:rectangle}

\begin{figure}
    \centering
    \includegraphics[height=2.8cm]{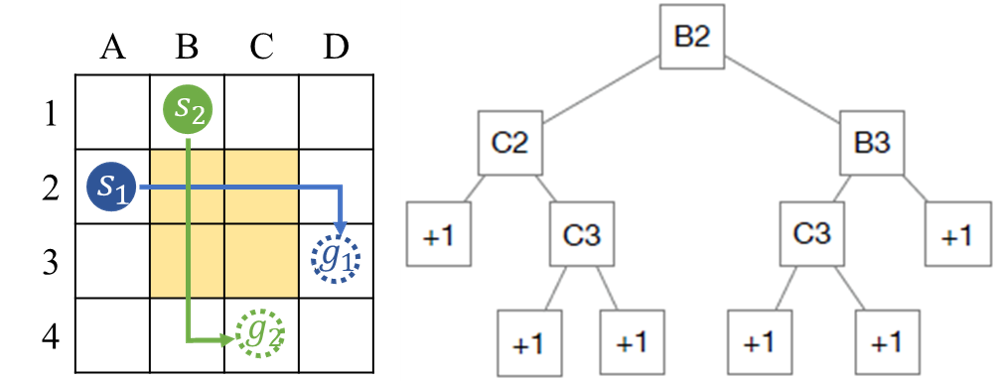}
    \includegraphics[height=2.8cm]{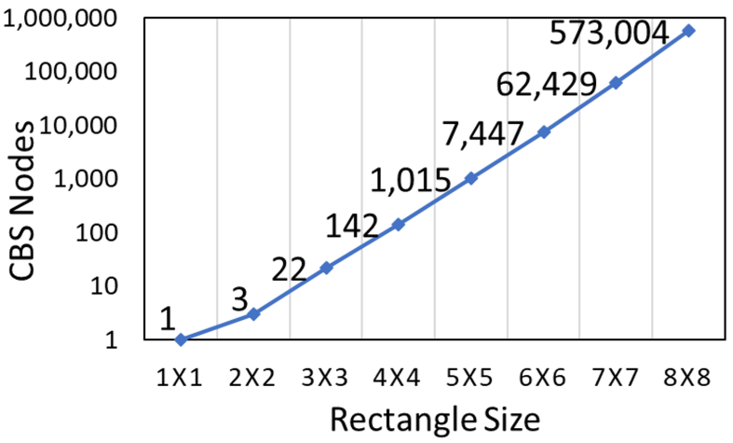}
    \caption{Example of a rectangle conflict. 
    The left figure shows a 2-agent MAPF instance with a (cardinal) rectangle conflict. 
    The middle figure shows the corresponding CT generated by CBS. Each left branch constrains agent $a_2$, while each right branch constrains agent $a_1$. Each non-leaf CT node is marked with the cell of the chosen collision. Each leaf CT node marked ``+1'' contains an optimal solution, whose sum of path lengths is one larger than the sum of path lengths of the plan in the root CT node.
    The right figure shows the numbers of CT nodes expanded by CBSH for the 2-agent MAPF instances with different rectangle sizes. }
    \label{fig:rect-example}
\end{figure}

We start with some examples to show the motivations behind rectangle reasoning. Formal definitions of rectangle conflicts are introduced in the subsections of this section.
According to \Cref{def:cardinal}, the rectangle conflict in \Cref{ex:rect} is cardinal.  \Cref{fig:rect-example}(left) re-plots the MAPF instance, \Cref{fig:rect-example}(middle) draws the corresponding CT tree, and \Cref{fig:rect-example}(right) shows the number of CT nodes expanded by CBSH when the yellow rectangular area in the MAPF instance is larger, indicating that the size of the CT tree grows exponentially with the size of the rectangular area. So even for a 2-agent MAPF instance, CBSH can easily result in timeout failure if the cardinal rectangle conflict is undetected. 
Moreover, reasoning about cardinal rectangle conflicts does not eliminate all rectangle symmetries for CBS. 

\begin{figure}[t]
\centering
\subfigure[Cardinal conflict]
{  	
    \includegraphics[width=.28\linewidth]{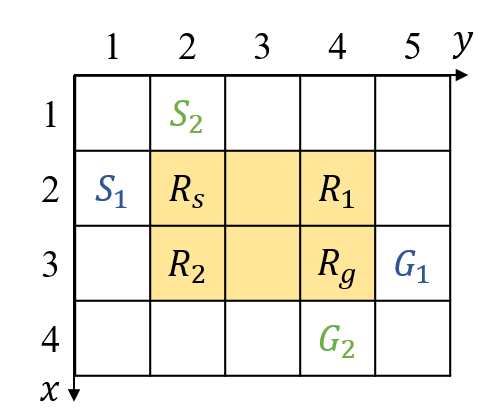}
    \label{fig:cardinal-rectangle}
}
\subfigure[Semi-cardinal conflict]
{ 
    \includegraphics[width=.28\linewidth]{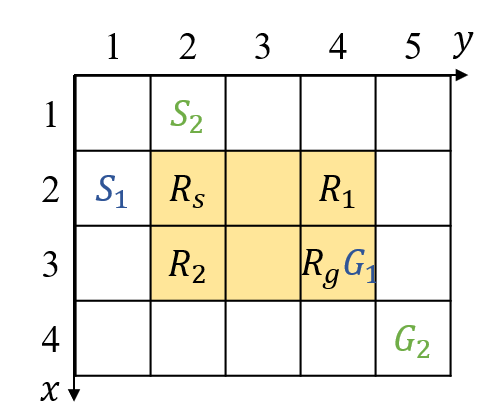}
    \label{fig:semi-cardinal-rectangle}
}
\subfigure[Non-cardinal conflict]
{
  	\includegraphics[width=.28\linewidth]{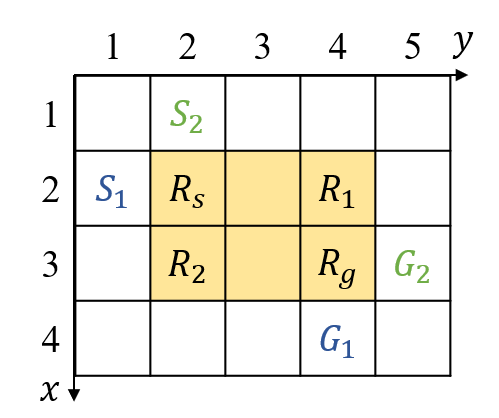}
  	\label{fig:non-cardinal-rectangle}
}
\caption{Examples of different types of rectangle conflicts. 
The cells of the start and target nodes are shown in the figures.
Their timesteps are
$S_1.t=S_2.t$ and $G_1.t=|G_1.x-S_1.x|+|G_1.y-S_1.y|, i=1, 2$.
The conflicting area is highlighted in yellow.
}\label{fig:rectangles}
\end{figure}

\begin{example} \label{ex:semi-rectangle}
The conflict in \Cref{fig:semi-cardinal-rectangle} is not a cardinal rectangle conflict because agent $a_2$ has optimal bypasses outside of the rectangular area (e.g., path [(1, 2), (1, 3), (1, 4), (1, 5), (2, 5), (3, 5), (4, 5)]). However, if cell $(2,5)$ at timestep $4$ and cell $(3,5)$ at timestep $5$ are occupied by other agents, the low-level search of CBS always finds a path for agent $a_2$ that conflicts with the path of agent $a_1$, because the low-level search uses the number of conflicts with other agents as the tie-breaking rule. Therefore, CBS again generates many CT nodes before finally finding conflict-free paths. 
\end{example}

We refer to the conflict in~\Cref{fig:semi-cardinal-rectangle}  as a semi-cardinal rectangle conflict. Similarly, we refer to the conflict in \Cref{fig:non-cardinal-rectangle}, where both agents have optimal bypasses, as a non-cardinal rectangle conflict.
Together with cardinal rectangle conflicts, we refer to these three types of conflicts as rectangle conflicts. 
In \Cref{sec:R}, we introduce a rectangle reasoning technique that can efficiently identify and resolve rectangle conflicts between entire paths. Then, in \Cref{sec:RM}, we generalize the reasoning technique for rectangle conflicts between path segments.
Both techniques are applicable only on 4-neighbor grids. 
In \Cref{sec:generalized-rect}, we generalize rectangle conflicts to cases where the \emph{conflicting area} (i.e., the yellow area in \Cref{fig:rect-example}(left)) is not necessarily rectangular and propose a more general reasoning technique that can work on planar graphs. 
We evaluate the empirical performance of all three rectangle reasoning techniques in \Cref{sec:exp-rect}.

Let us now define some notations that are used in this and the next sections. 
A \emph{space-time node} (or \emph{node} for short) $(v, t)$ is a pair of a vertex $v \in V$ and a timestep $t \in \mathbb{N}$. An MDD node is a space-time node.
We say a path (or agent) visits node $(v, t)$ iff it visits vertex $v$ at timestep $t$. 
We say two paths (or agents) conflict at node $(v, t)$ iff they conflict at vertex $v$ at timestep $t$, and the node is referred to as the conflicting node.
We focus on 4-neighbor grids in this section, as required by the two symmetry reasoning techniques. Hence, for a space-time node $S$, we use $(S.x, S.y)$ to denote its cell and $S.t$ to denote its timestep.

\subsection{Rectangle Reasoning Technique \Rom{1}: for Entire Paths}
\label{sec:R}
Consider two agents $a_1$ and $a_2$. 
Let nodes $S_1$, $S_2$, $G_1$ and $G_2$ be their start and target nodes. 
For now, we assume that the start node is at the start vertex at timestep 0 and the target node is at the target vertex at the timestep when the agent completes its path. 
Below provides the formal definitions of the yellow rectangular area and the rectangle conflicts with some examples shown in \Cref{fig:rectangles}.

\begin{defn}[Rectangle] \label{def:four-corners}
Given start and target nodes $S_1$, $S_2$, $G_1$, and $G_2$ for agents $a_1$ and $a_2$, we define the \emph{rectangle} as a set of nodes whose cells are the intersection cells of the $S_1$-$G_1$ rectangle and the $S_2$-$G_2$ rectangle, where $S_i$-$G_i$ rectangle ($i=1, 2$) represents the rectangle whose diagonal corners are in cell $(S_i.x,S_i.y)$ and cell $(G_i.x,G_i.y)$, respectively, and whose timesteps are the timestep when a shortest path of agent $a_1$ or $a_2$ reaches the cell of the node.
The four corner nodes of the rectangle are referred to as $R_s$, $R_g$, $R_1$ and $R_2$, where $R_s$ and $R_g$ are the corner nodes closest to the start and target nodes, respectively, and $R_1$ and $R_2$ are the other corner nodes on the opposite borders of $S_1$ and $S_2$, respectively. The border from $R_1$ to $R_g$ and the border from $R_2$ to $R_g$, are called the exit borders of agent $a_1$ and $a_2$ and denoted by $R_1R_g$ and $R_2R_g$, respectively.
We use \emph{conflicting area} to denote the cells of the rectangle. 
\end{defn}

\begin{defn}[Rectangle Conflict]\label{def:rectangle}
Two agents involve in a \emph{rectangle conflict} iff
\begin{enumerate}
    \item both agents follow their \emph{Manhattan-optimal} paths, i.e., for each agent, the length of its shortest path between its start and target node equals the Manhattan distance between its start and target node,
    \item both agents move in the same direction in both $x$ and $y$ axes, and
    \item the distances from each cell inside the conflicting area to the cells of the two start nodes are equal.
\end{enumerate}
\end{defn}

In the following three subsections, we present in detail how to efficiently identify, classify and resolve such rectangle conflicts. 

\subsubsection{Identifying Rectangle Conflicts}

Rectangle conflicts occur only when two agents have one or more vertex conflicts. 
Assume that agents $a_1$ and $a_2$ have a semi-/non-cardinal vertex conflict.
Here, we do not consider cardinal vertex conflicts because a cardinal vertex conflict can be resolved in a single branching step by vertex constraints.
They have a rectangle conflict iff
\begin{linenomath}
\begin{gather}
|S_1.x - G_1.x| + |S_1.y - G_1.y| = G_1.t - S_1.t > 0 \label{eqn:manhattan1}	\\
|S_2.x - G_2.x| + |S_2.y - G_2.y| = G_2.t - S_2.t > 0\label{eqn:manhattan2}		\\
(S_1.x - G_1.x)(S_2.x - G_2.x) \geq 0 \label{eqn:same-direction1}	\\
(S_1.y - G_1.y)(S_2.y - G_2.y) \geq 0. \label{eqn:same-direction2}
\end{gather}%
\end{linenomath}%
\Cref{eqn:manhattan1,eqn:manhattan2} guarantee condition 1 in~\Cref{def:rectangle}. 
\Cref{eqn:same-direction1,eqn:same-direction2} guarantee condition 2 in~\Cref{def:rectangle}. We do not check condition 3 explicitly because we already know from the vertex conflict that the distances from the conflicting cell, which is inside the conflicting area, to the cell of node $S_1$ and the cell of node $S_2$ are equal, and, together with conditions 1 and 2, we know that the distances from any cell inside the rectangle to the cell of node $S_1$ and the cell of node $S_2$ are equal.

\subsubsection{Resolving Rectangle Conflicts}
Let us look at \Cref{fig:rectangles}.
For cardinal rectangle conflicts, all combinations of the shortest paths are in conflict.
Fort semi- and non-cardinal rectangle conflicts, although agents have shortest paths that are conflict-free, all combinations of the shortest paths that visit the corresponding exit borders of the agents are in conflict.
We therefor propose to resolve a rectangle conflict by giving one agent priority within the conflicting area and forcing the other agent to leave its exit border later or take a detour. 
Formally, we introduce the {\em barrier constraint} $B(a_i,R_i,R_g)=\{\langle a_i,(x,y),t\rangle\mid ((x,y), t) \in {R_iR_g}\}$ ($i=1,2$), which is a set of vertex constraints that prohibits agent $a_i$ from occupying any node along its exit border $R_iR_g$. 
When resolving a rectangle conflict, we generate two child CT nodes and add $B(a_1,R_1,R_g)$ to one of them and $B(a_2,R_2,R_g)$ to the other one.
For instance, for the example in \Cref{fig:cardinal-rectangle}, the two barrier constraints are 
$B(a_1,R_1,R_g) = \{\langle a_1,(2 + n,4),3 + n\rangle|n = 0,1\}$ and
$B(a_2,R_2,R_g) = \{\langle a_2,(3,2 + n),2 + n\rangle|n = 0,1,2\}$.
Adding barrier constraint $B(a_i,R_i,R_g)$ ($i=1,2$) blocks all shortest paths for agent $a_i$ that reach its target node $G_i$ via the rectangle. Thus, agent $a_i$ is replanned with a longer path that do not conflict with the other agent. The rectangle conflict is thus resolved in a single branching step.

\subsubsection{Classifying Rectangle Conflicts}
\label{sec:classify-rect}

To classify a rectangle conflict, we need to know whether the path length of agent  $a_i$ ($i=1,2$)  would increase after adding barrier constraint $B(a_i, R_i, R_g)$.
Because of the condition 1 in \Cref{def:rectangle}, all shortest paths between the start and target nodes for agent $a_i$ are within the $S_i$-$G_i$ rectangle.
We thus only need to compare the length and width of the rectangle with those of the $S_1$-$G_1$ and $S_2$-$G_2$ rectangles. Consider the two equations
\begin{linenomath}
\begin{gather}
R_i.x - R_g.x = S_i.x - G_i.x \label{eqn:RM3} \\
R_i.y - R_g.y = S_i.y - G_i.y.\label{eqn:RM4}
\end{gather}%
\end{linenomath}%
If one holds for $i=1$ and the other one holds for $i=2$, the rectangle conflict is cardinal; if only one of them holds for $i=1$ or $i=2$, it is semi-cardinal; otherwise, it is non-cardinal.
For example, in \Cref{fig:cardinal-rectangle}, $R_1.x - R_g.x = S_1.x - G_1.x = -1$ and $R_2.y - R_g.y = S_2.y - G_2.y = -2$, so the conflict is cardinal. 

\subsubsection{Theoretical Analysis}
Now, we present a sequence of properties of the rectangle reasoning technique \Rom{1} and prove its completeness and optimality.

\begin{pro} \label{pro:rectangle1}
For agents $a_1$ and $a_2$ with a rectangle conflict found by the rectangle reasoning technique \Rom{1}, 
all paths for agent $a_1$ that visit a node on its exit border ${R_1R_g}$ must visit a node on its entrance border ${R_sR_2}$, and 
all paths for agent $a_2$ that visit a node on its exit border ${R_2R_g}$ must visit a node on its entrance border ${R_sR_1}$.
\qed
\end{pro}
\Cref{pro:rectangle1} is straightforward to prove but lengthy. We thus include the formal proof only in \ref{sec:rect-proof}.

\begin{pro} \label{pro:optimal1}
For all combinations of paths of agents $a_1$ and $a_2$ with a  rectangle conflict found by the rectangle reasoning technique \Rom{1}, if one path violates barrier constraint $B(a_1,R_1,R_g)$ and the other path violates barrier constraint $B(a_2,R_2,R_g)$, then the two paths have one or more vertex conflicts within the rectangle.
\end{pro}
\begin{proof}
According to \Cref{pro:rectangle1}, any path that violates $B(a_1,R_1,R_g)$ must visit a node on border ${R_sR_2}$ and a node on  border ${R_1R_g}$, and any path that violates $B(a_1,R_1,R_g)$ must visit a node on border ${R_sR_1}$ and a node on border ${R_2R_g}$. Since ${R_sR_2}$ and ${R_sR_1}$ are the opposite sides of ${R_1R_g}$ and ${R_2R_g}$ of the conflicting area, respectively, such two paths must cross each other, i.e., they must visit a common cell within the conflicting area at the same timestep. Therefore, the property holds. 
\end{proof}
\Cref{pro:optimal1} tells us that barrier constraints $B(a_1,R_1,R_g)$ and $B(a_2,R_2,R_g)$ are mutually disjunctive. According to \Cref{thm:cbs-optimal}, using them to split a CT node preserves the completeness and optimality of CBS.\footnote{If we add barrier constraints on the entrance borders instead of the exit borders, they might not be mutually disjunctive, and thus we would loss the completeness guarantees.} 
\begin{thm}
Using the rectangle reasoning technique \Rom{1} preserves the completeness and optimality of CBS.
\qed
\end{thm}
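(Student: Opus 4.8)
The plan is to reduce the claim to the already-established machinery of mutually disjunctive constraints rather than re-deriving completeness and optimality from scratch. The rectangle reasoning technique \Rom{1} alters CBS in exactly one place: when a detected (semi-/non-cardinal) rectangle conflict is chosen for resolution, the CT node is split into two children by adding the barrier constraint set $B(a_1,R_1,R_g)$ to one child and $B(a_2,R_2,R_g)$ to the other, in place of the usual vertex-conflict branching. Everything else --- the best-first search on the CT, the low-level shortest-path search, and the termination test --- is unchanged, and ordinary (non-rectangle) conflicts are still resolved exactly as in vanilla CBS. Hence it suffices to show that this one modified branching step is sound, since \Cref{thm:cbs-optimal} guarantees soundness whenever the two constraint sets used to split a node are mutually disjunctive.

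First I would invoke \Cref{pro:optimal1}, whose contrapositive is the key fact: if a pair of paths for $a_1$ and $a_2$ is conflict-free, then it cannot happen that one path violates $B(a_1,R_1,R_g)$ while the other violates $B(a_2,R_2,R_g)$; equivalently, every conflict-free pair satisfies at least one of the two barrier constraint sets in its entirety. I would then bridge this set-level statement to the constraint-level definition of mutual disjunctiveness: given any $c_1 \in B(a_1,R_1,R_g)$ and $c_2 \in B(a_2,R_2,R_g)$, a conflict-free pair that satisfies all of $B(a_1,R_1,R_g)$ in particular satisfies $c_1$, and one that satisfies all of $B(a_2,R_2,R_g)$ in particular satisfies $c_2$; so in either case the pair satisfies at least one of $c_1,c_2$. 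This is exactly what is required for the two sets to be mutually disjunctive, and applying \Cref{thm:cbs-optimal} then yields the theorem.

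It is worth stressing where the real content lives. The entire burden of the argument is carried by \Cref{pro:optimal1} (and, through it, by the geometric crossing argument of \Cref{pro:rectangle1}): a path violating its exit-border barrier must enter through the opposite entrance border, and two such paths for the two agents are forced to cross inside the conflicting area and thus share a cell at a common timestep. Once that geometric fact is in hand, the step proposed here is essentially bookkeeping. The one subtlety I would be careful to flag rather than gloss over is that the disjunctiveness relies crucially on placing the barriers on the \emph{exit} borders $R_1R_g$ and $R_2R_g$; as noted in the footnote accompanying \Cref{pro:optimal1}, barriers on the entrance borders need not be mutually disjunctive, so the corresponding branching could discard solutions and break completeness. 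I do not anticipate any genuine obstacle in this final step beyond making that dependence explicit.
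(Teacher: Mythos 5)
Your proposal is correct and follows essentially the same route as the paper: \cref{pro:optimal1} establishes that $B(a_1,R_1,R_g)$ and $B(a_2,R_2,R_g)$ are mutually disjunctive, and \cref{thm:cbs-optimal} then immediately yields completeness and optimality, which is exactly the paper's (one-line) argument. Your explicit bridge from the set-level contrapositive of \cref{pro:optimal1} to the per-constraint definition of mutual disjunctiveness, and your remark about exit versus entrance borders, are faithful elaborations of steps the paper leaves implicit rather than a different approach.
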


\subsection{Rectangle Reasoning Technique \Rom{2}: for Path Segments}
\label{sec:RM}

\begin{figure}[t]
\centering
\subfigure[Cardinal conflict]
{
    \includegraphics[width=0.28\textwidth]{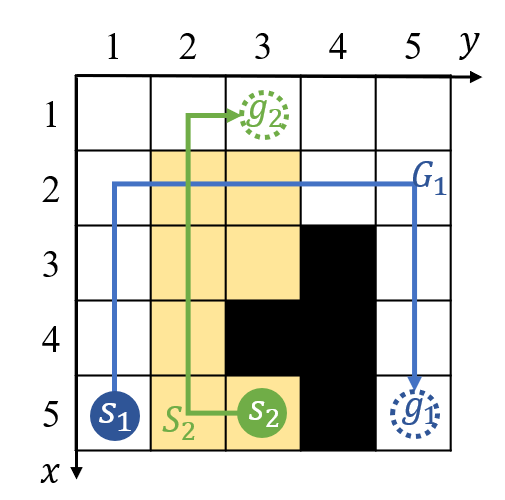}\label{fig:path-segment1}
}
\subfigure[Semi-cardinal conflict]
{
    \includegraphics[width=0.28\textwidth]{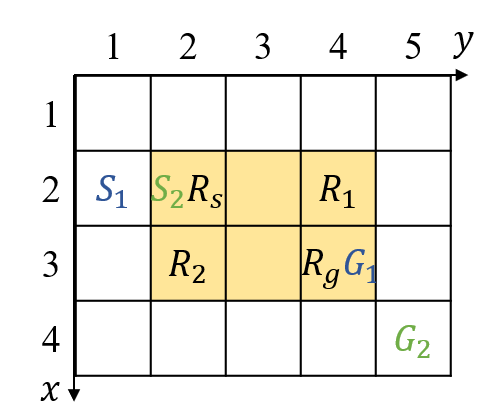}
    \label{fig:semi-cardinal-rectangle2}
}
\subfigure[No rectangle conflict]
{
    \includegraphics[width=0.28\textwidth]{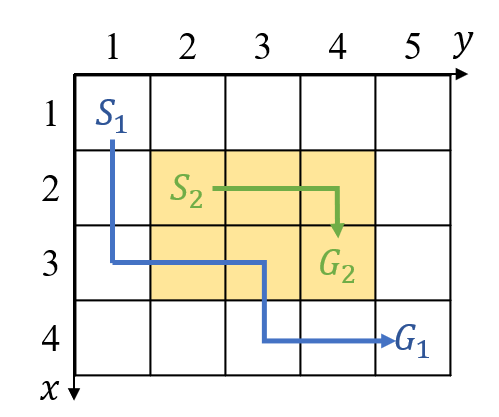}
    \label{fig:not-rectangle}
}
\caption{Examples of rectangle conflicts for path segments. The cells of the start and target nodes are shown in the figures. In (a), the cells of $S_1$ and $G_2$ are indicated by $s_1$ and $g_2$. In (b) and (c), $G_i.t=S_i.t+|G_i.x-S_i.x|+|G_i.y-S_i.y|,i=1,2$.
In (b), $S_1.t=S_2.t-1$. In (c), $S_1.t=S_2.t-2$.
}\label{fig:rectangles2}
\end{figure}

The rectangle reasoning technique \Rom{1} ignores obstacles and constraints, so they can reason only about the rectangle conflicts for entire paths. In some cases, however, rectangle conflicts exist for path segments but not entire paths, such as the cardinal rectangle conflict in \Cref{fig:path-segment1}. Since the paths are not Manhattan-optimal, the rectangle reasoning technique \Rom{1} fails to identify this rectangle conflict. Therefore, 
we extend the rectangle reasoning technique to reasoning about rectangle conflicts between two path segments, each of which starts at a singleton (defined in \Cref{sec:CBSH}) and ends at another singleton.
Since all shortest paths of an agent must visit all of its singletons, we can regard the two singletons as the start and target nodes and reuse the rectangle reasoning technique \Rom{1} with small modifications.

\begin{algorithm}[t]
\caption{Rectangle reasoning for path segments.} \label{alg:rectangle}
\small
\KwIn{A semi/non-cardinal vertex conflict $\langle a_1, a_2, v, t\rangle$.}
$N^S_1 \leftarrow $ singletons in $\MDD{i}$ no later than timestep $t$\;
$N^G_1 \leftarrow $ singletons in $\MDD{i}$ no earlier than timestep $t$\;
$N^S_2 \leftarrow $ singletons in $\MDD{j}$ no later than timestep $t$\;
$N^G_2 \leftarrow $ singletons in $\MDD{j}$ no earlier than timestep $t$\;
$type' \leftarrow$ Not-Rectangle; $area' \leftarrow 0$\;
\ForEach{$S_1\in N^S_1,S_2\in N^S_2,G_1\in N^G_1,G_2\in N^G_2$}{
	\If{\textsc{isRectangle}($S_1,S_2,G_1,G_2$)}{
		$\{R_1,R_2,R_s,R_g\} \leftarrow$ \textsc{getIntersection}($S_1,S_2,G_1,G_2$)\;
        $type \leftarrow$ \textsc{classifyRectangle}($R_1,R_2,R_g,S_1,S_2,G_1,G_2$)\;
        $area \leftarrow |R_1.x - R_2.x| \times |R_1.y - R_2.y|$\;
        \If{$type' =$ Not-Rectangle \textnormal{or} $type$ is better than $type'$ \textnormal{or} \textnormal{(}$type = type'$ \textnormal{and} $area > area'$\textnormal{)}}{
        	$type'\leftarrow type$; $area'\leftarrow area$\;
            $\{R'_1,R'_2,R'_s,R'_g\} \leftarrow \{R_1,R_2,R_s,R_g\}$\;
        }
	}
}
\If{ $type' \neq$ Not-Rectangle}{
    $B_i, B_j \leftarrow$\textsc{generateBarriers}($\MDD{i}, \MDD{j},R'_1,R'_2,R'_g$)\;
    \If{$a_1$'s path violates $B_i$ and $a_2$'s path violate $B_j$\label{line:block-path}} 
    {
        return $B_i$ and $B_j$\;
    }
}
return Not-Rectangle\;
\end{algorithm}

\Cref{alg:rectangle} shows the pseudo-code. It first treats all singletons as start and target node candidates (Lines 1-4) and then tries all combinations to find rectangle conflicts. If multiple rectangle conflicts are identified, it chooses the one of the highest priority type (i.e., cardinal $>$ semi-cardinal $>$ non-cardinal) and breaks ties in favor of the one with the largest rectangle area (Line 11).
We return the pair of barrier constraints only if they block the current paths of the agents (\Cref{line:block-path}), otherwise we would generate a child CT node whose paths and conflicts are exactly the same with those of the current CT node. We discuss the details of the three functions on Lines 7, 9 and 15 in the following three subsections, respectively.

\subsubsection{Identifying Rectangle Conflicts}

The start and target nodes of a rectangle conflict have to satisfy not only \Cref{eqn:manhattan1,eqn:manhattan2,eqn:same-direction1,eqn:same-direction2} but also
\begin{linenomath}
\begin{gather}
(S_1.x - S_2.x)(S_1.y - S_2.y)(S_1.x - G_1.x)(S_1.y - G_1.y) \leq 0. \label{eqn:RM1}
\end{gather}
\end{linenomath}
This guarantees that the start nodes are on different sides of the rectangle since, otherwise, adding barrier constraints might disallow a pair of paths that move both agents to the constrained border without waiting, such as in the example of \Cref{fig:not-rectangle}.\footnote{We do not check \Cref{eqn:RM1} in the rectangle reasoning technique \Rom{1} because, when the start nodes are at the same timestep, situations like \Cref{fig:not-rectangle} would never occur.} 
We also require that $S_1 \neq S_2$ because, otherwise, the two agents have a cardinal vertex conflict at node $S_1$ that can be resolved by vertex constraints in a single branching step.

\subsubsection{Resolving Rectangle Conflicts}

When reasoning about entire paths, all paths of agent $a_1$ visit its start node $S_1$. However, path segments that are not on the shortest paths do not necessarily visit node $S_1$. In this case, barrier constraints may disallow pairs of conflict-free paths and thus lose the completeness guarantees. 

\begin{figure}[t]
    \centering
  	\includegraphics[width=.75\linewidth]{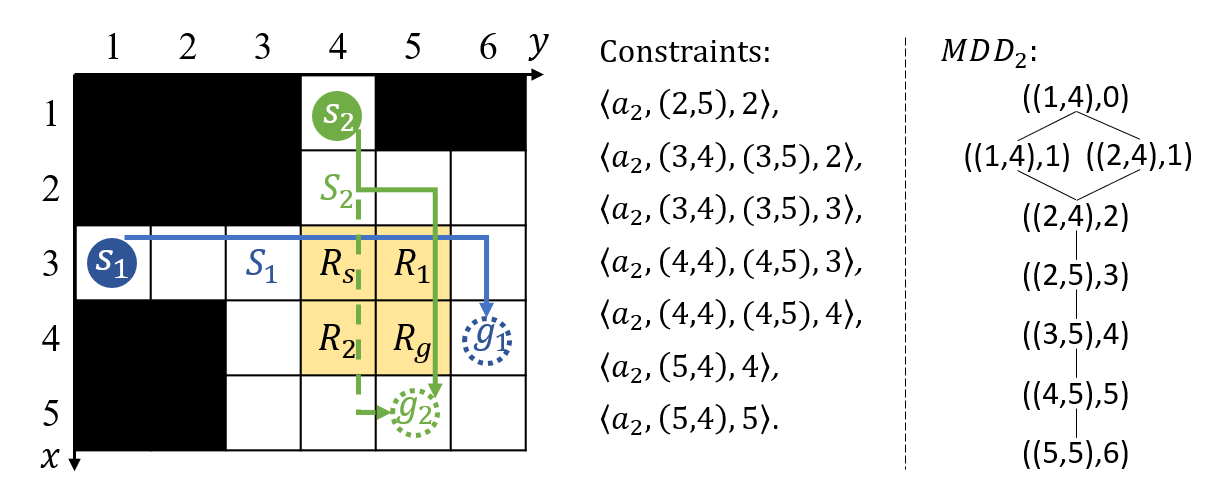}
    \caption{A example where we cannot apply the original barrier constraints. In the left figure, agent $a_2$ follows the green solid arrow but waits at cell $(1,4)$ or cell $(2,4)$ for one timestep because of the constraints listed in the middle. The right figure shows the corresponding MDD for agent $a_2$.} \label{fig:path-segment-counterexample}
\end{figure}
\begin{example} \label{ex:rect counterexample}
\Cref{fig:path-segment-counterexample} provides a counterexample where a CT node $N$ has the set of constraints listed in the figure. The constraints force agent $a_2$ to wait for at least one timestep before reaching its target vertex. It can either wait before entering the rectangle, which leads to a conflict with agent $a_1$, or enter the rectangle without waiting and wait later, which might avoid conflicts with agent $a_1$. However, all shortest paths of agent $a_2$ that satisfy the constraints in $N$ (of length 6) have to wait for one timestep before entering the rectangle, see $\MDD{2}$ shown in the figure. Therefore, node $S_2=((2,4),2)$ is a singleton, and agents $a_1$ and $a_2$ have a cardinal rectangle conflict. If this conflict is resolved using barrier constraints, the CT sub-tree of $N$ disallows the pair of conflict-free paths where agent $a_1$ directly follows the blue arrow (which visits node $((3,5),4)$ constrained by $B(a_1,R_1,R_g)$) and agent $a_2$ follows the green dashed arrow but waits at cell $(4,4)$ for two timesteps (which visits node $((4,4),4)$ constrained by $B(a_2,R_2,R_g)$). Barrier constraints fail here because the constrained node $((4,4),4)$ is not in $\MDD{2}$ and thus agent $a_2$ could have a path with a larger cost that does not visit node $S_2$ but visits node $((4,4),4)$.
\end{example}

Therefore, we redefine barrier constraints by considering only the border nodes that are in the MDD of the agent. That is, $B(a_i,R_i,R_g)=\{\langle a_i,(x,y),t\rangle\mid ((x, y), t) \in R_iR_g \cap \MDD{i}\}$ ($i=1,2$).
When resolving a rectangle conflict for path segments, we generate two child CT nodes and add $B(a_1,R_1,R_g)$ to one of them and $B(a_2,R_2,R_g)$ to the other one.

\subsubsection{Classifying Rectangle Conflicts}
We reuse the method in \Cref{sec:classify-rect} to classify rectangle conflicts.

\subsubsection{Theoretical Analysis}

We first present a property of MDDs.
\begin{pro} \label{pro:mdd}
Given an MDD $\MDD{i}$ for agent $a_i$ and an MDD node $(v, t) \in \MDD{i}$, any node before timestep $t$ on any path for agent $a_i$ that visits node $(v, t)$ is also in $\MDD{i}$.
\end{pro}
\begin{proof}
We prove the property by contradiction. 
Assume that there is a path $p$ for agent $a_i$ that visits node $(v, t) \in \MDD{i}$ and node $(u, t') \notin \MDD{i}$ with $t' < t$. 
Since $(v, t) \in \MDD{i}$, there exists a sub-path $p'$ that moves agent $a_i$ from node $(v, t)$ to node $(g_i, l)$, where $l$ is the length of the shortest path for agent $a_i$.
So a path that first follows path $p$ from node $(s_i, 0)$ to node $(v, t)$ via node $(u, t')$ and then follows sub-path $p'$ to node $(g_i, l)$ is a shortest path for agent $a_i$. So all nodes on path $p'$ are in $\MDD{i}$, which is contradicted to the assumption that $(u, t') \notin \MDD{i}$.
Therefore, the property holds.
\end{proof}

We then present three properties of barrier constraints.
\begin{pro} \label{pro:start}
If agents $a_1$ and $a_2$ have a rectangle conflict found by the rectangle reasoning technique \Rom{2}, any path of agent $a_i$ ($i=1,2$) that visits a node constrained by $B(a_i,R_i,R_g)$ also visits its start node $S_i$.
\end{pro}
\begin{proof}
Let $(v, t)$ be a node constrained by $B(a_i,R_i,R_g)$. Thus node $(v, t)$ is in $\MDD{i}$. 
Since node $S_i$ is a singleton of $\MDD{i}$ and the timestep of $S_i$ is no larger than $t$, from \Cref{pro:mdd}, any path for agent $a_i$ that visits node $(v, t)$ also visits its start node $S_i$.
\end{proof}

\begin{pro} \label{pro:rectangle2}
For agents $a_1$ and $a_2$ with a rectangle conflict found by the rectangle reasoning technique \Rom{2}, 
all paths for agent $a_1$ that visit a node constrained by $B(a_1,R_1,R_g)$ must visit a node on the entrance border $R_sR_2$, and 
all paths for agent $a_2$ that visit a node constrained by $B(a_2,R_2,R_g)$ must visit a node on the entrance border $R_sR_1$.
\qed
\end{pro}

\Cref{pro:rectangle2} is straightforward to prove by reusing the proof for \Cref{pro:rectangle1}. Thus, we provide the formal proof only in \ref{sec:rect-proof}.

\begin{pro} \label{pro:optimal2}
For all combinations of paths of agents $a_1$ and $a_2$ with a rectangle conflict found by the rectangle reasoning technique \Rom{2}, if one path violates $B(a_1,R_1,R_g)$ and the other path violates $B(a_2,R_2,R_g)$, then the two paths have one or more vertex conflicts within the rectangle.
\end{pro}
\begin{proof}
The proof for \Cref{pro:optimal1} can be applied here by replacing \Cref{pro:rectangle1} with \Cref{pro:rectangle2}.
\end{proof}
\Cref{pro:optimal2} tells us that barrier constraints $B(a_1,R_1,R_g)$ and $B(a_2,R_2,R_g)$ are mutually disjunctive, and thus, based on \Cref{thm:cbs-optimal}, using them to split a CT node preserves the completeness and optimality of CBS. 

\begin{thm}
Using the rectangle reasoning technique \Rom{2} preserves the completeness and optimality of CBS.
\qed
\end{thm}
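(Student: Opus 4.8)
The plan is to reduce the theorem to a single structural fact---that the two branching constraints are mutually disjunctive---and then invoke \Cref{thm:cbs-optimal}. Concretely, when resolving a conflict the rectangle reasoning technique \Rom{2} either reports \textsc{Not-Rectangle} (including the case where a rectangle is found but the returned barriers do not block the current paths, cf.\ \Cref{line:block-path} of \Cref{alg:rectangle}), in which case CBS falls back to its standard vertex/edge branching, or it returns the pair of barrier constraints $B(a_1,R_1,R_g)$ and $B(a_2,R_2,R_g)$ and splits the CT node with them. In the first case completeness and optimality are inherited directly from vanilla CBS, since standard branching on a vertex conflict uses mutually disjunctive vertex constraints. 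So the whole argument hinges on the second case, where by \Cref{thm:cbs-optimal} it suffices to prove that $B(a_1,R_1,R_g)$ and $B(a_2,R_2,R_g)$ are mutually disjunctive.

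Next I would unfold the definition of mutual disjunctivity: the two barrier constraints are mutually disjunctive precisely when no pair of conflict-free paths of $a_1$ and $a_2$ violates both constraints simultaneously. I would establish this by contraposition, which is exactly the content of \Cref{pro:optimal2}: any pair of paths in which $a_1$'s path violates $B(a_1,R_1,R_g)$ and $a_2$'s path violates $B(a_2,R_2,R_g)$ must share a vertex within the rectangle at the same timestep, and hence cannot be conflict-free. Feeding this into the definition yields mutual disjunctivity, and \Cref{thm:cbs-optimal} then closes the theorem immediately.

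The substance of the argument---and the step I expect to be the main obstacle---lives one level down, in justifying \Cref{pro:optimal2} for \emph{path segments} rather than entire paths. For technique \Rom{1} the crossing argument was easy because every path of $a_i$ trivially visits its start node $S_i$; for technique \Rom{2} this can fail, as \Cref{ex:rect counterexample} demonstrates, which is exactly why the barrier constraints are redefined to restrict the border nodes to those lying in $\MDD{i}$. The delicate point is therefore \Cref{pro:start}: because each constrained node lies in $\MDD{i}$ and $S_i$ is a singleton of $\MDD{i}$ at a timestep no later than that node, \Cref{pro:mdd} forces every path through a constrained node to pass through $S_i$. Once this is secured, \Cref{pro:rectangle2} guarantees that such a path also crosses the relevant entrance border, and the geometric observation that the entrance borders $R_sR_2$ and $R_sR_1$ lie on the sides of the rectangle opposite the respective exit borders $R_1R_g$ and $R_2R_g$ forces the two offending paths to intersect at a common cell at a common timestep. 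Assembling these pieces yields \Cref{pro:optimal2}, and with it the theorem.
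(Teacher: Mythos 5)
Your proposal is correct and mirrors the paper's own argument exactly: the paper likewise reduces the theorem to the mutual disjunctivity of $B(a_1,R_1,R_g)$ and $B(a_2,R_2,R_g)$ via \Cref{thm:cbs-optimal}, establishes that through \Cref{pro:optimal2}, and grounds \Cref{pro:optimal2} in precisely the chain you describe --- \Cref{pro:mdd} and the singleton property yielding \Cref{pro:start}, then \Cref{pro:rectangle2} and the geometric crossing argument inherited from \Cref{pro:optimal1}. Your explicit handling of the \textsc{Not-Rectangle} fallback case is a minor addition the paper leaves implicit, but the substance is the same.
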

\section{Generalized Rectangle Symmetry} \label{sec:generalized-rect}


\begin{figure}[t]
    \centering
    \subfigure[rectangular-shaped cardinal rectangle conflict]
    {
        \includegraphics[height=3.5cm]{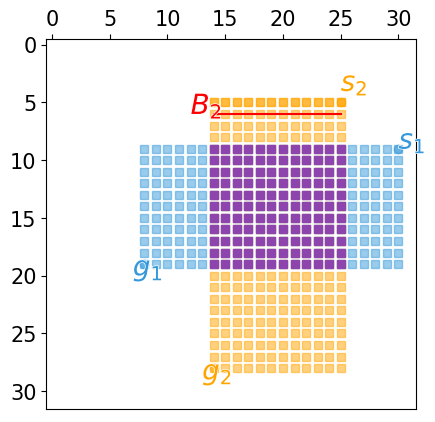}
        \label{fig:rect-empty-map}
    
    }
    \quad\quad
    \subfigure[non-rectangular-shaped cardinal rectangle conflict]
    {
        \includegraphics[height=3.5cm]{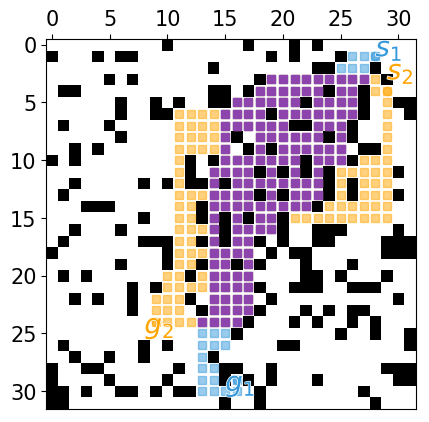}
        \label{fig:rect-random-map}
    }
    \caption{Examples where the reasoning techniques in \protect\Cref{sec:rectangle} fail to identify rectangle conflicts, reproduced from the MAPF benchmark~\protect\citep{SternSOCS19}. The start and target vertices of the agents are shown in the figures.  In (a), agent $a_2$ has a barrier constraint $B_2$ indicated by the red line, which forces agent $a_2$ to wait for one timestep. In both (a) and (b), the locations of the MDD nodes of the MDDs of the two agents are highlighted in the corresponding colors. Purple cells represent the overlapping area. The timesteps when the agents reach every purple cell are the same. }
\end{figure}

Let us first look at two examples.

\begin{example} \label{ex:non-rect rectangle conflict1}
\Cref{fig:rect-empty-map} shows a MAPF sub-instance on a $32\times32$ empty map. 
The distance between cells $s_1$ and $g_1$ is 32 while the distance between cells $s_2$ and $g_2$ is 34.
Agent $a_1$ has no constraints, and thus all of its shortest paths are Manhattan-optimal and of length 32. 
Agent $a_2$ has a barrier constraint $B_2$ that forces the agent to first take a wait action at one of the cells in the top yellow row and then follow its Manhattan-optimal path to its target cell. Its shortest paths are thus of length 35. 
Due to this wait action, both agents reach every purple cell at the same timestep and thus have a conflict there if they both visit the same purple cell following their shortest paths. Since the two agents need to cross each other to reach their target cells, there is no way for them to reach their target cells without visiting some common purple cell via their shortest paths.
Therefore, the optimal resolution is either for agent $a_1$ to wait for one timestep (resulting in a path of length 33) or for agent $a_2$ to wait for two timesteps or take a detour (resulting in a path of length 36). 

This looks like a cardinal rectangle conflict as defined in \Cref{sec:rectangle}. However, the only two singletons in $\MDD{2}$ are 
$(s_2, 0)$ and $(g_2, 35)$, which do not satisfy \Cref{eqn:manhattan1,eqn:manhattan2}. 
Therefore, the rectangle reasoning techniques in \Cref{sec:rectangle} fail to identify it as a rectangle conflict, and, as a result, CBS needs to spend exponential time to solve it.
\end{example}

\begin{example} \label{ex:non-rect rectangle conflict2}
\Cref{fig:rect-random-map} shows a 2-agent MAPF instance on a $32\times32$ map with random obstacles. 
Both agents reach every purple cell at the same timestep if they follow their shortest paths, and they need to topologically cross each other to reach their target cells (or, intuitively, the line segments between their start and target cells cross each other). 
Therefore, the optimal resolution is for one of the agents to wait for one timestep.

However, the rectangle reasoning techniques in \Cref{sec:rectangle} fail to identify this as a rectangle conflict because they cannot find a pair of singletons around the purple area of agent $a_2$ that are Manhattan-optimal. In fact, the conflicting area here is not of a rectangular shape. 
\end{example}

\Cref{ex:non-rect rectangle conflict1} behaves like a rectangle conflict, but there do not exist any appropriate singletons.
\Cref{ex:non-rect rectangle conflict2} behaves like a rectangle conflict, but the conflicting area is not rectangular. 
They motivate us to define a more general rectangle conflict between two agents. 
These generalized rectangle cardinal conflicts have the following properties: (1) There is a purple area that both agents reach at the same timestep if they follow their shortest paths, and (2) the two agents have to topologically cross each other inside the purple area.


In \Cref{sec:rect-idea}, we present the high-level idea of our generalized rectangle reasoning technique. Then, in \Cref{sec:rect-algo}, we present the algorithm in detail. We provide a proof sketch of the soundness of the proposed technique in \Cref{sec:rect-proof-sketch} and a formal proof in \ref{sec:gr-proof}.
We empirically evaluate our generalized rectangle reasoning technique together with the rectangle reasoning techniques in \Cref{sec:exp-rect}.

The generalized rectangle reasoning technique can be applied to not only 4-neighbor grids but also other planar graphs, which covers most ways of representing 2D (or even 2.5D) environments for MAPF.

\subsection{High-Level Idea}
\label{sec:rect-idea}
\begin{figure}[t]
    \centering
        \subfigure[cardinal rectangle]
{
    \quad
    \includegraphics[height=4cm]{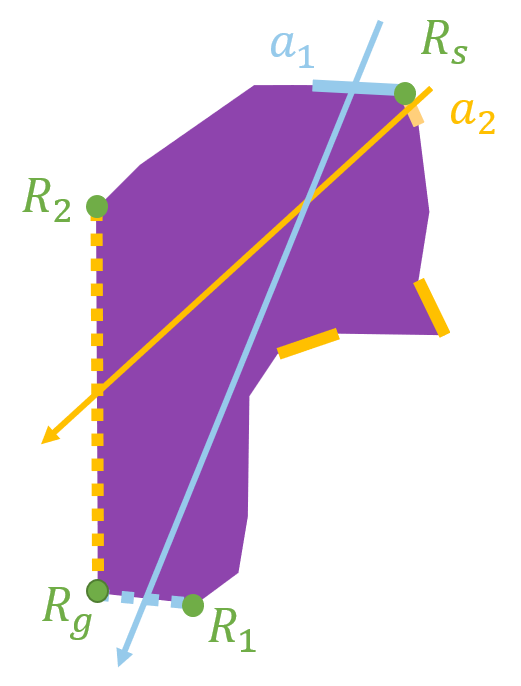}
    \label{fig:generalized-rect}
}
    \subfigure[semi-cardinal rectangle]
{
    \quad
    \includegraphics[height=4cm]{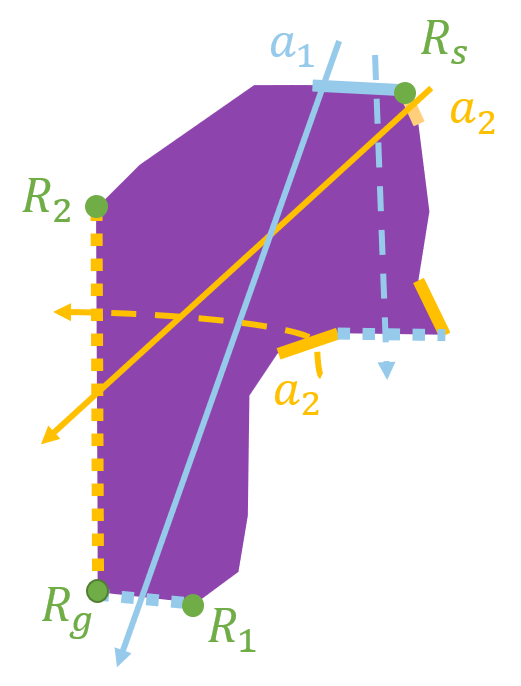}
    \label{fig:generalized-rect-semi}
    \quad
}
    \subfigure[not rectangle]
{
    \includegraphics[height=4cm]{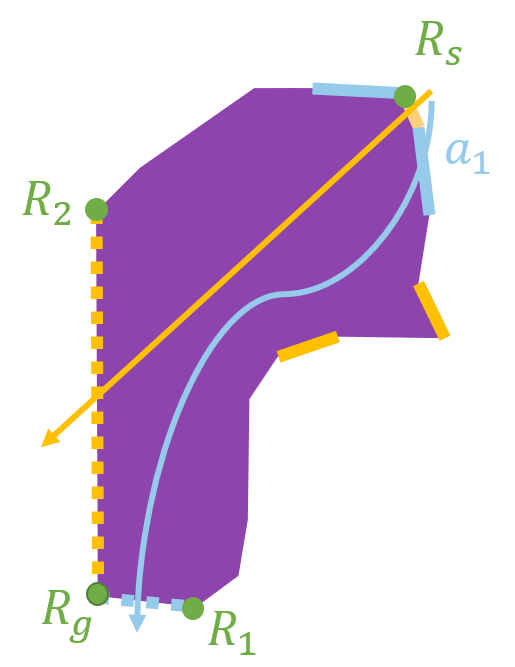}
    \label{fig:not-generalized-rect}
    \quad
}
\subfigure[rectangle]
{
    \quad
    \includegraphics[height=4cm]{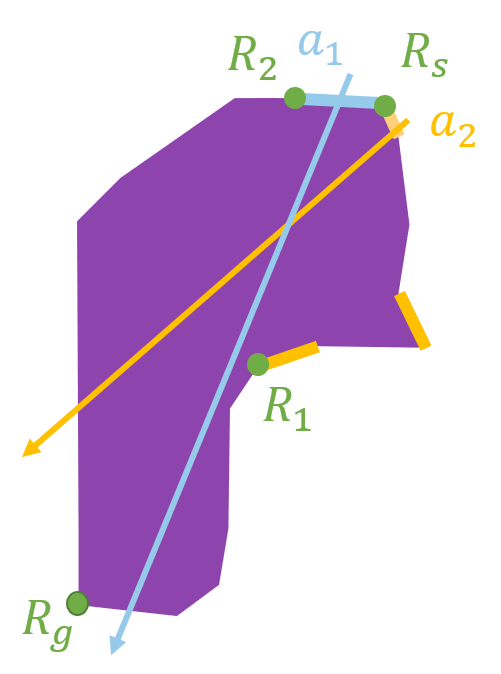}
    \label{fig:generalized-rect-1}
}
    \subfigure[rectangle with holes]
{
    \quad
    \includegraphics[height=4cm]{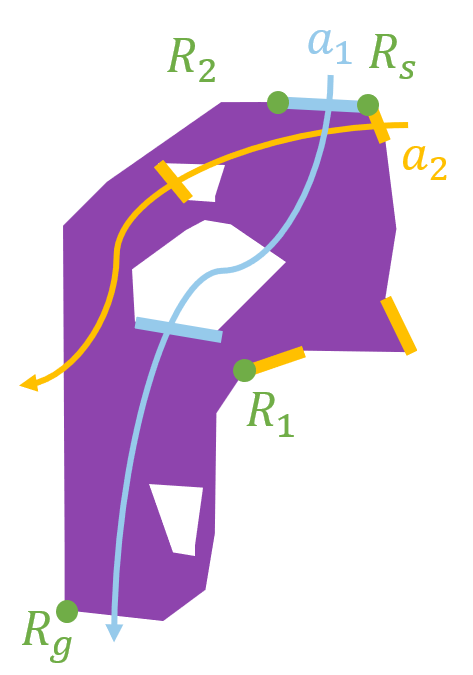}
    \label{fig:good-hole}
    \quad
}
    \subfigure[not rectangle]
{
    \includegraphics[height=4cm]{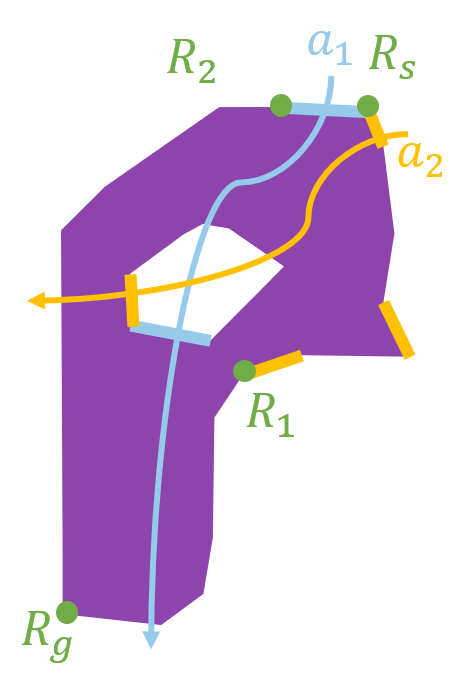}
    \label{fig:bad-hole}
    \quad
}
    \caption{Illustrations of the generalized rectangle conflicts. The purple area represents the conflicting area inside which both agents reach each vertex at the same timestep via their shortest paths. The solid lines represent where the agents enter the purple area via their shortest paths and the dashed lines represent where they leave the purple area via their shortest paths.}
    \label{fig:rect-illustration1}
\end{figure}

Let us consider the conflict in \Cref{fig:rect-random-map}. \Cref{fig:generalized-rect} shows an abstract illustration of it. Agent $a_1$ enters the purple area from one of the blue solid lines and leaves it from one of the blue dashed lines. Similarly, agent $a_2$ enters the purple area from one of the yellow solid lines and leaves it from one of the yellow dashed lines. 
If we scan the border of the purple area anticlockwise, we find the pattern of ``blue solid lines $\rightarrow$ yellow dashed lines $\rightarrow$ blue dashed lines $\rightarrow$ yellow solid lines''. 
So, from geometry, any line that connects a point on one of blue solid lines with a point on one of the blue dashed line must intersect with any line that connects a point on one of the yellow solid lines with a point on one of the yellow dashed lines. If the two agents follow such two lines, then they must have a vertex conflict at the intersection point. Therefore, any path for agent $a_1$ that visits the blue dashed lines must conflict with any path for agent $a_1$ that visits the yellow dashed lines.
Following the idea in \Cref{sec:rectangle}, we generate two barrier constraints $B(a_1, R_1, R_g)$ and $B(a_2, R_2, R_g)$, where the cells of $R_1$, $R_2$ and $R_g$ are marked in \Cref{fig:generalized-rect}, and $B(a_i, R_i, R_g)$ ($i=1,2$) is a set of vertex constraints that prohibits agent $a_1$ from occupying all vertices along the border from $R_1$ to $R_g$ at the timestep when $a_1$ would optimally reach the vertex. This pair of barrier constraints gives one of the agents  priority within the purple area and forces the other agent to leave it later or take a detour.

\Cref{fig:generalized-rect-semi} shows a slightly different example where agent $a_1$ can leave the purple area also from the blue dashed line on the right. Therefore, the two agents can traverse the purple area without collisions, for instance, by following the dashed arrows. But, just like \Cref{ex:semi-rectangle}, CBS is not guaranteed to find such a pair of conflict-free paths efficiently. And, in fact, this example can be viewed as a semi-cardinal generalized rectangle conflict if we use barrier constraints $B(a_1, R_1, R_g)$ and $B(a_2, R_2, R_g)$ to resolve it. This is so because, for the child CT node with constraint $B(a_1, R_1, R_g)$, agent $a_1$ will find a path that does not increase the length, such as the path indicated by the dashed blue line, while, for the other child CT node, all shortest paths are blocked by $B(a_2, R_2, R_g)$, and thus agent $a_2$ will find a path that does increase the length.

\Cref{fig:not-generalized-rect} draws another example where agent $a_1$ can enter the purple area also from the blue dashed line on the right. This time, however, we cannot use barrier constraints $B(a_1, R_1, R_g)$ and $B(a_2, R_2, R_g)$ because there is a pair of conflict-free paths that violates both barrier constraints, indicated by the two arrows in the figure. Therefore, we do not recognize this example as a generalized rectangle conflict.

To sum up, how the colors of the solid lines distribute determines whether the conflict is a generalized rectangle conflict, and how the colors of the dashed lines distribute only affects the type of the conflict. Therefore, when we identify generalized rectangle conflicts, we only focus on the solid lines, see \Cref{fig:generalized-rect-1}. 
We denote the nodes on the border with the smallest and largest timesteps as $R_s$ and $R_g$, respectively. $R_s$ and $R_g$ divide the border into two segments. If all blue solid lines are on only one of the segments and all yellow solid lines are on only the other segment, then the conflict is a generalized rectangle conflict. 
We denote the node on the blue and yellow solid lines that are furthest from $R_s$ (i.e., closest to $R_g$) as $R_2$ and $R_1$, respectively. Then, we can prove that using barrier constraints $B(a_1, R_1, R_g)$ and $B(a_2, R_2, R_g)$ to resolve this conflict preserves the completeness and optimality of CBS.

Now, let us consider the case where the purple area has holes. The holes can be caused by either obstacles or constraints. The key point is to exclude the cases where the lines can cross each other within the hole because, otherwise, the agents might cross the intersection point in the hole at different timesteps and thus have conflict-free paths. 
Therefore, we also draw blue and yellow solid lines on the border of each hole to indicate where the agents can enter the purple area from the hole. If every hole inside the purple area has solid lines of at most one color, such as \Cref{fig:good-hole}, then this is still a generalized rectangle conflict. Otherwise, as in the example of  \Cref{fig:bad-hole}, such a conflict is not a generalized rectangle conflict.

As for classifying conflicts, we simply check whether barrier constraint $B(a_i, R_i, R_g)$ ($i=1,2$) blocks all shortest paths of agent $a_i$ by looking at $\MDD{i}$. The conflict is cardinal iff both barrier constraints block all shortest paths; it is semi-cardinal iff only one of them blocks all shortest paths; it is non-cardinal iff neither of them blocks all shortest paths.

\subsection{Algorithm}
\label{sec:rect-algo}

Now, we provide the detailed methodology for identifying, classifying, and resolving generalized rectangle conflicts. There are five key steps:
\begin{enumerate}
    \item finding the generalized rectangle (i.e., the purple area in \Cref{fig:rect-illustration1});
    \item scanning the border;
    \item checking the holes;
    \item generating the constraints; and
    \item classifying the conflict,
\end{enumerate}%
which correspond to the following five subsections, respectively.
Given a semi- or non-cardinal vertex conflict between two agents, the algorithms returns either a pair of barrier constraints or ``Not-Rectangle''.

\subsubsection{Step 1: Finding the Generalized Rectangle}
\begin{defn}[Generalized Rectangle] \label{def:conflicting-area}
Given two agents $a_1$ and $a_2$ with a vertex conflict at node $(v, t)$, the \emph{generalized rectangle} is a connected directed acyclic graph $\mathcal{G}=(\mathcal{V},\mathcal{E})$ such that (1) $\mathcal{G} \subseteq \MDD{1} \cap \MDD{2}$, (2) $(v, t) \in \mathcal{V}$, and (3),
for every node $(u, t_u) \in \mathcal{V}$, any shortest path of either agent that visits vertex $u$ visits it at timestep $t_u$.
We use the term \emph{conflicting area} to denote the vertices (e.g., cells in 4-neighbor grids) of the nodes in $\mathcal{V}$, which represent a connected area on the plane to which graph $G$ is mapped.
\end{defn}

Condition (3) is important because it guarantees that, if the shortest paths of agents $a_1$ and $a_2$  visit a common vertex in the conflicting area, they must have a vertex conflict. From conditions (1) and (3), we know that $(u, t) \in \mathcal{V}$ only if, for both $i=1$ and $i=2$, $(u, t_u) \in \MDD{i}$ and $(u, t_u') \notin \MDD{i}, \forall t_u' \neq t_u$.
Formally, to find a generalized rectangle, we first project the MDD nodes of the MDDs of both agents to the vertices in $V$. Let $M_i$ ($i=1,2$) be such a mapping, where $M_i[u], u \in V$ is a list of MDD nodes in $\MDD{i}$ whose vertices are $u$. 
Then, we run a search starting from the conflicting vertex $v$ to generate $\mathcal{G}$ whose nodes $(u, t)$ satisfy the constraint that both $M_1[u]$ and $M_2[u]$ contain only one MDD node $(u, t)$. 
If $\mathcal{V}$ is empty or contains only one node, 
we terminate and report ``Not-Rectangle''.

During the search, we also collect the entrance edges $E_1$ and $E_2$ for the conflicting area (corresponding to the blue and yellow solid lines in \Cref{fig:rect-illustration1}). 
\begin{defn}[Entrance Edge] \label{def:entrance-edges}
The set of \emph{entrance edges} $E_i$ ($i=1,2$) is a set of directed MDD edges of $\MDD{i}$ whose ``from'' node is not in $\mathcal{V}$ and whose ``to'' node is in $\mathcal{V}$.
\end{defn}
Since the start nodes $(s_1, 0)$ and $(s_2, 0)$ of agents $a_1$ and $a_2$ are different, they must be located outside of the conflicting area, and, thus, both $E_1$ and $E_2$ contain at least one entrance edge.

\subsubsection{Step 2: Scanning the Border}

Let $R_s$ and $R_g$ denote the nodes with the smallest and largest timesteps on the border, respectively. 
Scan the border from $R_s$ to $R_g$ on both sides and check whether the entrance edges of one agent are all on one side of $R_sR_g$ and the entrance edges of the other agent are all on the other side of $R_sR_g$. 
If not, we terminate and report ``Not-Rectangle''.

During the scanning, we mark the ``to'' nodes of the last-seen entrance edges of $E_1$ and $E_2$ as $R_2$ and $R_1$, respectively. 
We also remove every visited entrance edge from $E_1$ or $E_2$ so that all remaining edges in $E_1$ and $E_2$ are entrance edges on the borders of the holes, which will be used in the next step. For clarification, we use $E_i^b$ to denote the removed edges from $E_i$ and $E_i^h = E_i \setminus E_i^b$ to denote the remaining edges in $E_i$ ($i=1,2$). 

\subsubsection{Step 3: Checking the Holes}
\label{sec:check-holes}
For each entrance edge in $E_1^h$, we scan the border of its corresponding hole and check whether the ``to'' node of any edge in $E_2^h$ is on the border. If so, then this hole contains entrance edges of both agents, so we terminate and report ``Not-Rectangle''.
If we succeed to examine every edge in $E_1^h$ without terminating, then there are no holes in the conflicting area that contain an entrance edge of both agents. We thus move to the next step.

\subsubsection{Step 4: Generating the Constraints}
We generate barrier constraints $B(a_1, R_1, R_g)$ and $B(a_2, R_2, R_g)$, where $B(a_i, R_i, R_g)$ ($i=1,2$) is a set of vertex constraints that prohibits agent $a_i$ from occupying all nodes along the border from $R_i$ to $R_g$.
All prohibited nodes are on the MDDs of the agents, so we do not need to worry about situations like \Cref{ex:rect counterexample}.
Like \Cref{line:block-path} in \Cref{alg:rectangle}, we check whether the generated barrier constraints block the current paths of both agents. If not, we terminate and report ``Not-Rectangle''.

\subsubsection{Step 5: Classifying the Conflict}
From \Cref{fig:generalized-rect,fig:generalized-rect-semi}, it seems that we can classify conflicts by checking whether the border segment $R_iR_g$ covers all dashed lines of the color corresponding to agent $a_i$. However, this is not correct because the agent might have a shortest path that does not visit the purple area at all. 
Therefore, we run a search on the MDD of each agent and check whether the barrier constraint $B(a_i, R_i, R_g)$ ($i=1,2$) blocks all paths on $\MDD{i}$ from its start node to its target node, i.e., the nodes constrained by the barrier constraint form a cut of the MDD. 
The generalized rectangle conflict is cardinal iff both barrier constraints block all paths on the corresponding MDD; it is semi-cardinal iff only one of the barrier constraint blocks all paths on the corresponding MDD; it is non-cardinal iff neither barrier constraint blocks all paths on the corresponding MDD.

\subsection{Theoretical Analysis}
\label{sec:rect-proof-sketch}

\begin{pro} \label{pro:optimal3}
For all combinations of paths of agents $a_1$ and $a_2$ with a generalized rectangle conflict, if one path violates $B(a_1,R_1,R_g)$ and the other path violates $B(a_2,R_2,R_g)$, then the two paths have one or more vertex conflicts within the generalized rectangle $\mathcal{G}$.
\end{pro}
\noindent \emph{Proof Sketch.}
We show a proof sketch here and a formal proof in \ref{sec:gr-proof}:
\begin{enumerate} 
    \item All paths for agent $a_i$ ($i=1,2$) that visit a node constrained by $B(a_i,R_i,R_g)$ must traverse an entrance edge in $E^b_i$.
    \item Any sub-path from an entrance edge in $E^b_1$ to a node constrained by $B(a_1,R_1,R_g)$ must visit at least one common vertex with any sub-path from an entrance edge in $E^b_2$ to a node constrained by $B(a_2,R_2,R_g)$. 
    \item The common vertex must be inside the conflicting area, i.e., not inside one of the holes.
    \item Following the two sub-paths, agents $a_1$ and $a_2$ must conflict at the common vertex in the conflicting area. \qed
\end{enumerate}%

\Cref{pro:optimal3} tells us that barrier constraints $B(a_1,R_1,R_g)$ and $B(a_2,R_2,R_g)$ are mutually disjunctive, and thus, based on \Cref{thm:cbs-optimal}, using them to split a CT node preserves the completeness and optimality of CBS. 

\begin{thm}
Using the generalized rectangle reasoning technique preserves the completeness and optimality of CBS.
\qed
\end{thm}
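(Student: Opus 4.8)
The plan is to reduce the final theorem directly to the machinery already established in the paper. By \Cref{thm:cbs-optimal}, it suffices to show that the two barrier constraints $B(a_1,R_1,R_g)$ and $B(a_2,R_2,R_g)$ generated by the generalized rectangle reasoning technique are mutually disjunctive. But \Cref{pro:optimal3} states exactly this: for any combination of paths of $a_1$ and $a_2$, if one path violates $B(a_1,R_1,R_g)$ and the other violates $B(a_2,R_2,R_g)$, then the two paths must have a vertex conflict inside $\mathcal{G}$, hence they cannot be conflict-free. This is precisely the negation of the existence of a conflict-free pair violating both constraints, so the two constraints are mutually disjunctive. The theorem then follows immediately by invoking \Cref{thm:cbs-optimal}.

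Concretely, I would carry out the argument in the following order. First, I would recall the definition of mutually disjunctive constraints and note that \Cref{pro:optimal3} says no conflict-free pair of paths can violate both barrier constraints simultaneously. Second, I would translate this into the statement that $B(a_1,R_1,R_g)$ and $B(a_2,R_2,R_g)$ are mutually disjunctive, exactly as already observed in the remark following \Cref{pro:optimal3}. Third, I would apply \Cref{thm:cbs-optimal}, which guarantees that splitting a CT node with two mutually disjunctive constraint sets preserves both completeness and optimality of CBS. Since the generalized rectangle reasoning technique resolves each detected conflict by splitting with exactly this pair of barrier constraints, the completeness and optimality of CBS are preserved.

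The real content of the argument therefore lives entirely in \Cref{pro:optimal3}, whose proof is deferred to \ref{sec:gr-proof}; the theorem statement itself is a one-line corollary. If I were instead proving \Cref{pro:optimal3} from scratch, the main obstacle would be step~3 of its proof sketch: showing that the forced common vertex of the two crossing sub-paths lies genuinely inside the conflicting area rather than inside one of the holes. The topological crossing guarantee (step~2) follows from the border-scanning argument that the entrance edges $E^b_1$ and $E^b_2$ lie on opposite sides of $R_sR_g$, so any sub-path from an $E^b_1$ entrance edge to a node on $R_1R_g$ must cross any sub-path from an $E^b_2$ entrance edge to a node on $R_2R_g$. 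Excluding crossings inside holes is what requires the hole-checking of \Cref{sec:check-holes}, namely that no hole carries entrance edges of both agents; this is the delicate geometric step that the formal proof in \ref{sec:gr-proof} must discharge. For the theorem as stated, however, I need only cite \Cref{pro:optimal3} and \Cref{thm:cbs-optimal}, and no further obstacle arises.
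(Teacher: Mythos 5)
Your proposal is correct and matches the paper's own argument exactly: the paper also derives the theorem as an immediate corollary of \Cref{pro:optimal3} (which shows the two barrier constraints are mutually disjunctive) combined with \Cref{thm:cbs-optimal}, which is why the theorem carries a \qed with no separate proof. Your closing observation about where the real difficulty lies---the hole-checking step ensuring the forced crossing occurs inside the conflicting area, discharged formally in \ref{sec:gr-proof}---also agrees with the paper's structure.
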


\subsection{Empirical Evaluation on Rectangle Reasoning}
\label{sec:exp-rect}

\begin{table}
    \small
    \centering
    \caption{Benchmark details. We have 8 maps, each with 6 different numbers of agents. We have 25 instances for each setting, 
    yielding $8 \times 6 \times 25 = 1,200$ instances in total.} \label{tab:instance}
    \resizebox{\textwidth}{!}{
    \begin{tabular}{|c|cccc|}
    \hline
    Map & Map name & Map size & \#Empty cells & \#Agents \\
    \hline
    \texttt{Random} & random-32-32-20 & $32 \times 32$ & 819 & 20, 30, ..., 70 \\
    \texttt{Empty} & empty-32-32 & $32 \times 32$ & 1,024 & 30, 50, ..., 130 \\
    \texttt{Warehouse} & warehouse-10-20-10-2-1 & $161 \times 63$ & 5,699 & 30, 50, ..., 130 \\
    \texttt{Game1} & den520d & $256 \times 257$ & 28,178 & 40, 60, ..., 140 \\
    \texttt{Room} & room-64-64-8 & $64 \times 64$ & 3,232 & 15, 20, ..., 35 \\
    \texttt{Maze} & maze-128-128-1 & $128 \times 128$ & 8,191 & 3, 6, ..., 18 \\
    \texttt{City} & Paris\_1\_256 & $256 \times 256$ & 47,240 & 30, 60, ..., 180 \\
    \texttt{Game2} & brc202d & $530 \times 481$ & 43,151 & 20, 30, ..., 70 \\
    \hline
    \end{tabular}}
\end{table}

\begin{figure}[t]
\centering
\includegraphics[width=\textwidth]{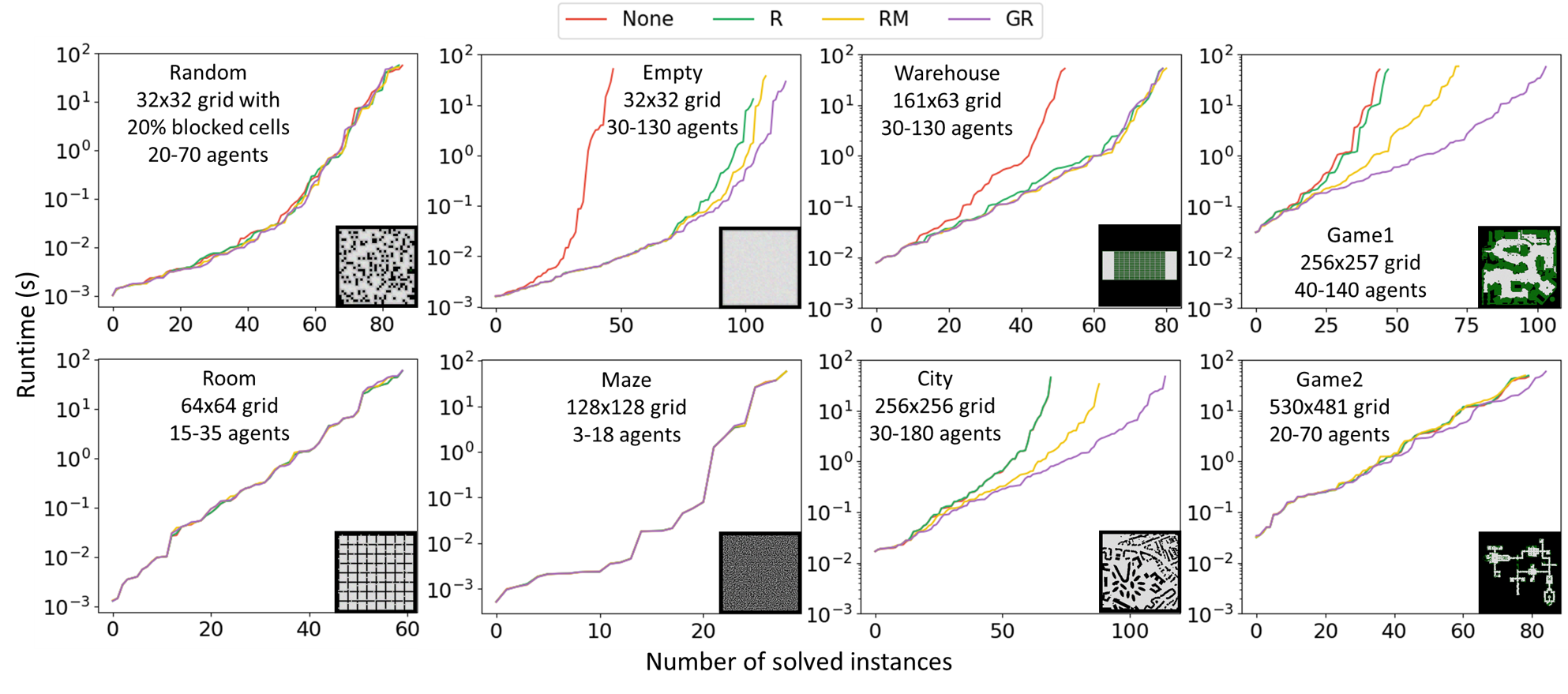}
\caption{\label{fig:exp-rectangle}
Runtime distribution of CBSH with different rectangle reasoning techniques. A point $(x,y)$ in the figure indicates that there are $x$ instances solved within $y$ seconds. 
}
\end{figure}

In this and future sections, we evaluate the algorithms on eight maps of different sizes and structures from the MAPF benchmark suite~\cite{SternSOCS19}. We test six different numbers of agents per map. We use the ``random'' scenarios, yielding 25 instances for each map and each number of agents. 
The details of the benchmark instances are shown in \Cref{tab:instance}, and a visualization of the maps is shown in \Cref{fig:exp-rectangle}.
The algorithms are implemented in C++, and the experiments are conducted on Ubuntu 20.04 LTS on an Intel Xeon 8260 CPU with a memory limit of 16 GB and a time limit of 1 minute.

In this subsection, we compare CBSH (denoted \textbf{None}), CBSH with rectangle reasoning for entire paths (denoted \textbf{R}), CBSH with rectangle reasoning for path segments (denoted \textbf{RM}), and CBSH with generalized rectangle reasoning (denoted \textbf{GR}). The results are reported in \Cref{fig:exp-rectangle}.
As expected, the improvements of our rectangle reasoning techniques depend on the structure of the maps. On maps that have little open space, such as \texttt{Random}, \texttt{Room}, \texttt{Maze}, and \texttt{Game2}, rectangle reasoning techniques do not improve the performance. But fortunately, due to their small runtime overhead, they do not deteriorate the performance either. On other maps that have open space, some or even all of the rectangle reasoning techniques speed up CBSH, and GR is always the best. 
Specifically, on map \texttt{Empty}, the shortest path of an agent (ignoring other agents) is always Manhattan-optimal, so R significantly speeds up CBSH, while RM and GR further speed it up, but only by a little bit. Similar is the performance on map \texttt{Warehouse}, as the obstacles on this map are all of rectangular shapes.
Maps \texttt{Game1} and \texttt{City}, however, contains obstacles of various shapes, so the shortest path of an agent is not necessarily Manhattan-optimal, and the conflicting area is not necessarily of rectangular shapes. Thus, R performs similarly with None, but GR significantly outperforms RM, which in turn significantly outperforms R.


\section{Target Symmetry}
\label{sec:target}

\begin{figure}[t]
\centering
\includegraphics[height=2.4cm]{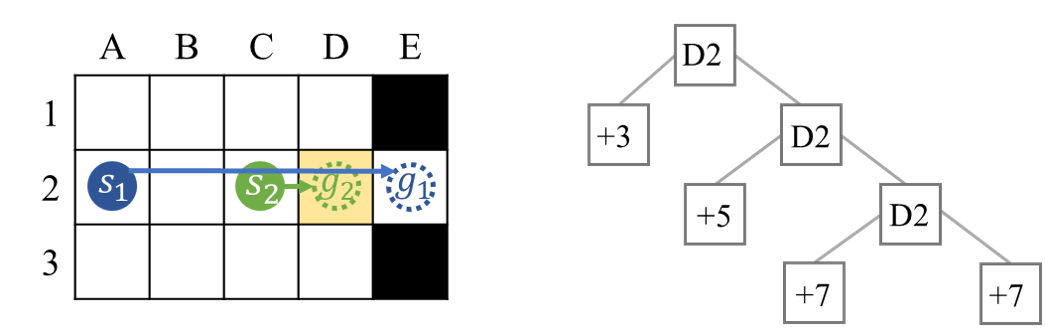}
\caption{\label{fig::target}
An example of a target conflict. 
In the left figure, agent $a_2$ arrives at cell D2 at timestep 1. 
Two timesteps later, agent $a_1$ visits the same cell,
leading to a vertex conflict $\langle a_1,a_2,\mbox{D2},3 \rangle$.
The right figure shows the CT.
Each left branch constrains agent $a_2$, while each right branch constrains agent $a_1$.
Each non-leaf CT node is marked with the vertex of the chosen conflict.
The leaf CT node marked ``+3'' contains an optimal solution, whose sum of costs is the cost of the root CT node plus 3.
Each leaf CT node marked ``+5'' or ``+7'' contains a suboptimal solution, whose sum of costs is the cost of the root CT node plus 5 or 7, respectively.
}
\end{figure}

\begin{table}
    \centering
    \caption{Number of expanded CT nodes to resolve a target conflict of the type shown in Figure~\protect\ref{fig::target} for different distances between vertices $s_1$ and $g_2$.
    }
    \label{tab:target}
    \begin{tabular}{|c|ccccc|}
    \hline
    Distances between vertices $s_1$ and $g_2$ & 10 & 20 & 30 & 40 & 50 \\
    \hline
    Expanded CT nodes for 2-agent instances & 10 & 20 & 30 & 40 & 50  \\
    Expanded CT nodes for 4-agent instances & 50 & 150 & 300 & 500 & 750 \\
    \hline
    \end{tabular}
\end{table}

A target symmetry occurs when one agent visits the target vertex of a second agent after the second agent has already arrived at it and stays there forever. We refer to the corresponding conflict as a \emph{target conflict}.
\begin{defn}[Target Conflict]
Two agents involve a \emph{target conflict} iff they have a vertex conflict that happens after one agent has arrived at its target vertex and stays there forever. 
\end{defn}
\begin{example}
In \Cref{fig::target}, agent $a_2$ arrives at its target vertex D2 at timestep 1,
but an unavoidable vertex conflict occurs with agent $a_1$ at the target vertex D2 at timestep 3.
When CBS branches to resolve this vertex conflict, it generates
two child CT nodes. 
In the left child CT node, CBS adds a vertex constraint for agent $a_2$ that prohibits it from being at vertex D2 at timestep 3.
The low-level search finds a new path [C2, C3, C3, C2, D2] for agent $a_2$, which does not conflict with agent $a_1$. The cost of this CT node is three larger than the cost of the root CT node.
In the right child CT node, CBS adds a vertex constraint for agent $a_1$ that prohibits it from being at vertex D2 at timestep 3.
Thus, agent $a_1$ can arrive at vertex D2 at timestep 4, and the cost of this CT node is one larger than the cost of the root CT node.
There are several alternative paths for agent $a_1$ where it
waits at different vertices for the requisite timestep, e.g., path [A2, A2, B2, C2, D2, E2]. However, each of these paths 
produces a further conflict with agent $a_2$ at vertex D2 at timestep 4. 
Although the left child CT node contains conflict-free paths, CBS has to split the right child CT nodes repeatedly to constrain agent $a_1$ (because it performs a best-first search) before eventually
proving that the solution of the left child CT node is optimal.
\end{example}

Target symmetry has the same pernicious 
characteristics as rectangle symmetry since, if undetected,
it can explode the size of the CT and lead to unacceptable runtimes.
Table~\ref{tab:target} shows how many CT nodes CBS expands 
to resolve a target conflict of the type shown in Figure~\ref{fig::target} for different distances between vertices $s_1$ and $g_2$.
While the increase in CT nodes is linear in the distance, which may not seem too problematic,
only one of the leaf CT nodes actually resolves the conflict. 
Later, when other conflicts occur, 
each of the leaf CT nodes will be further fruitlessly expanded.
With two copies of the problem (resulting in 4-agent instances), Table~\ref{tab:target} shows a quadratic increase in the
number of CT nodes.
For $m$-agent instances, the increases 
become exponential in $m$.
Hence, we propose a target reasoning technique that can efficiently detect and resolve all target symmetries on general graphs. We introduce this technique in detail in the following four subsections and present its empirical performance in \Cref{sec:exp-target}.

\subsection{Identifying Target Conflicts}
The detection of target conflicts is straightforward. For every vertex conflict, we compare the conflicting timestep with the agents' path lengths. 

\subsection{Resolving Target Conflicts}
The key to resolving target conflicts is to reason about the path length of an agent.
Suppose that agent $a_2$ arrives at its target vertex $g_2$ at timestep $t'$ and stays there forever. Agent $a_1$ then visits vertex $g_2$ at timestep $t$ ($t \geq t'$). 
We resolve this target conflict by branching on the path length $l_2$ of agent $a_2$ using the following two \emph{length constraints}, one for each child CT node:
\begin{itemize}
\item $l_2 > t$, i.e., agent $a_2$ can complete its path only after timestep $t$, or
\item $l_2 \leq t$, i.e., agent $a_2$ must arrive at vertex $g_2$ and stay there forever before or at timestep $t$, which also requires that any other agent cannot visit vertex $g_2$ at or after timestep $t$.
\end{itemize}
The first constraint $l_2 > t$ affects only the path of agent $a_2$, while the second constraint $l_2 \leq t$ could affect the paths of all agents.

The advantage of this branching method is immediate.
In the first case, agent $a_2$ cannot finish until timestep $t+1$, so
its path length increases from its current value $t'$
to at least $t+1$.
In the second case, agent $a_1$ is prohibited from being at vertex $g_2$ at or after 
timestep $t$.  
If agent $a_1$ has no alternate path to its target vertex, the CT node with this constraint has no solution and is thus pruned. If agent $a_1$ has alternate paths that do not use vertex $g_2$ at or after 
timestep $t$ and the shortest one among them is longer than its current path, then its path length increases. We do not need to replan for agent $a_2$ since its current path is no longer than $t$. Nevertheless, we have to replan the paths for all other agents that visit vertex $g_2$ at or after timestep $t$. This is a very strong constraint as vertex $g_2$ can be viewed as an obstacle after timestep $t$ for all agents except agent $a_2$.

In order to handle the length constraints, we need the low-level search to take into account bounds on the path length. This is fairly straightforward for given bounds $e \leq l_2\leq u$ on the path length $l_2$ of agent $a_2$: If the low-level search reaches target vertex $g_2$ before timestep $e$, then it cannot terminate but must continue searching; 
if it reaches the target vertex between timesteps $e$ and $u$ (and the agent was not at the target vertex at the previous timestep), then it terminates and returns the corresponding path;
if it reaches the target vertex after timestep $u$, then it terminates, the corresponding CT node has no solution, and the CT node is thus pruned.
We require the agent to not be at the target vertex at the previous timestep because, otherwise, the agent could simply take its current path to the target vertex and wait there until timestep $e$ is reached, which does not help to resolve the conflict.

For example, to resolve the target conflict in Figure~\ref{fig::target},
we split the root CT node and add the length constraints $l_2 > 3$
and $l_2 \leq 3$. In the left child CT node, we replan the path of agent $a_2$
and find a new path [C2, C3, C3, C2, D2], which does not conflict with agent $a_1$. 
In the right child CT node, agent
$a_1$ cannot occupy vertex D2 at or after timestep 3. We thus fail to find a path for it and prune the right child CT node. Therefore, the target symmetry is resolved in a single branching step.

\subsection{Classifying Target Conflicts}

Target conflicts are classified based on the vertex conflict at the target vertex: A target conflict is cardinal iff the corresponding vertex conflict is cardinal; and it is semi-cardinal iff the corresponding vertex conflict is semi-cardinal. 
It can never be non-cardinal because the cost of the child CT node with the additional length constraint $l_2 > t$ is always larger than the cost of the parent CT node. 
This is an approximate way of classifying target conflicts since it is possible that, when we branch on a semi-cardinal target conflict, 
the costs of both child CT nodes increase. 

\subsection{Theoretical Analysis}

Showing the completeness and optimality of CBS when using length constraints for target conflicts is straightforward. Therefore, we omit the proof of the following theorem.

\begin{thm}
Resolving target conflicts with length constraints preserves the completeness and optimality of CBS.
\qed
\end{thm}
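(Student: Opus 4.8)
The plan is to establish the same no-lost-solution property that underpins all the earlier completeness-and-optimality results in the paper (see \Cref{lem:optimal} and \Cref{thm:cbs-optimal}): every conflict-free solution whose paths satisfy the constraint set $C$ of the parent CT node must also satisfy the full constraint set of at least one of the two child CT nodes. Note that the mutually-disjunctive machinery of \Cref{thm:cbs-optimal} cannot be invoked verbatim here, because the constraint $l_2 \le t$ is not a constraint on a single pair of agents---it turns $g_2$ into an obstacle for all other agents after timestep $t$---so I would argue the property directly. Once it is in hand, optimality follows exactly as for CBS: the high level remains a best-first search, and the low level is only augmented to respect path-length bounds $e \le l_2 \le u$, which it does exactly in the manner described above, so no cheaper solution is ever skipped.

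First I would fix an arbitrary conflict-free solution whose paths satisfy $C$ and let $l_2^\ast$ be the path length of agent $a_2$ in it. The two children add the mutually exhaustive bounds $l_2 > t$ and $l_2 \le t$, so the value $l_2^\ast$ is consistent with exactly one of them, and I would case-split accordingly. If $l_2^\ast > t$, the solution satisfies $l_2 > t$, and since the first child adds nothing else, it satisfies that child's entire constraint set. If $l_2^\ast \le t$, the solution satisfies the bound $l_2 \le t$ of the second child, and it only remains to verify the extra requirement attached to this child, namely that no agent other than $a_2$ occupies $g_2$ at or after timestep $t$.

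The main obstacle---indeed the only step that is not immediate---is precisely this verification for the second child, since $l_2 \le t$ is a strictly stronger, multi-agent constraint. The key observation is that the extra requirement is automatically implied by conflict-freeness together with the ``stay at target'' assumption of the problem definition. Concretely, if $l_2^\ast \le t$ then agent $a_2$ reaches $g_2$ at timestep $l_2^\ast$ and, by the stay-at-target assumption, remains there at every later timestep; if some other agent were at $g_2$ at a timestep $\tau \ge t \ge l_2^\ast$, this would be a vertex conflict with $a_2$, contradicting conflict-freeness. Hence no other agent visits $g_2$ at or after timestep $t$, and the solution satisfies the second child's entire constraint set.

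Combining the two cases shows that branching on a target conflict never removes a conflict-free solution, which gives completeness; together with the unchanged best-first search and the correct handling of length bounds at the low level, this yields optimality and completes the proof.
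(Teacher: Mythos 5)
Your proof is correct. There is, strictly speaking, no paper proof to compare it against: the authors state only that the result is ``straightforward'' and omit the argument entirely, so what you have written supplies exactly the missing proof. Your case split on $l_2^\ast$ is exhaustive, and you correctly isolate the one step that is not immediate --- that in the $l_2^\ast \le t$ case the multi-agent side condition (no agent other than $a_2$ at $g_2$ at or after timestep $t$) holds automatically --- and discharge it via the stay-at-target assumption: $a_2$ occupies $g_2$ at every timestep $\ge l_2^\ast$, so another agent at $g_2$ at any $\tau \ge t \ge l_2^\ast$ would be a vertex conflict, contradicting conflict-freeness. You are also right that \Cref{thm:cbs-optimal} cannot be invoked verbatim: the mutual-disjunctiveness machinery is phrased for constraints on a pair of agents, whereas $l_2 \le t$ constrains all agents, so proving the no-lost-solution property of \Cref{lem:optimal} directly, as you do, is the appropriate route, and it matches how the other symmetry-breaking results in the paper are structured. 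The only point you gloss over (as does the paper) is the low-level termination rule requiring the agent not to be at the target at the previous timestep; this excludes only padded paths that end in waits at $g_2$, and since any solution using such a path is equally represented by its truncation (whose length then falls under the other length bound), no conflict-free solution is lost and your conclusion stands.
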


\subsection{Empirical Evaluation on Target Reasoning}
\label{sec:exp-target}

\begin{figure}[t]
\centering
\includegraphics[width=\textwidth]{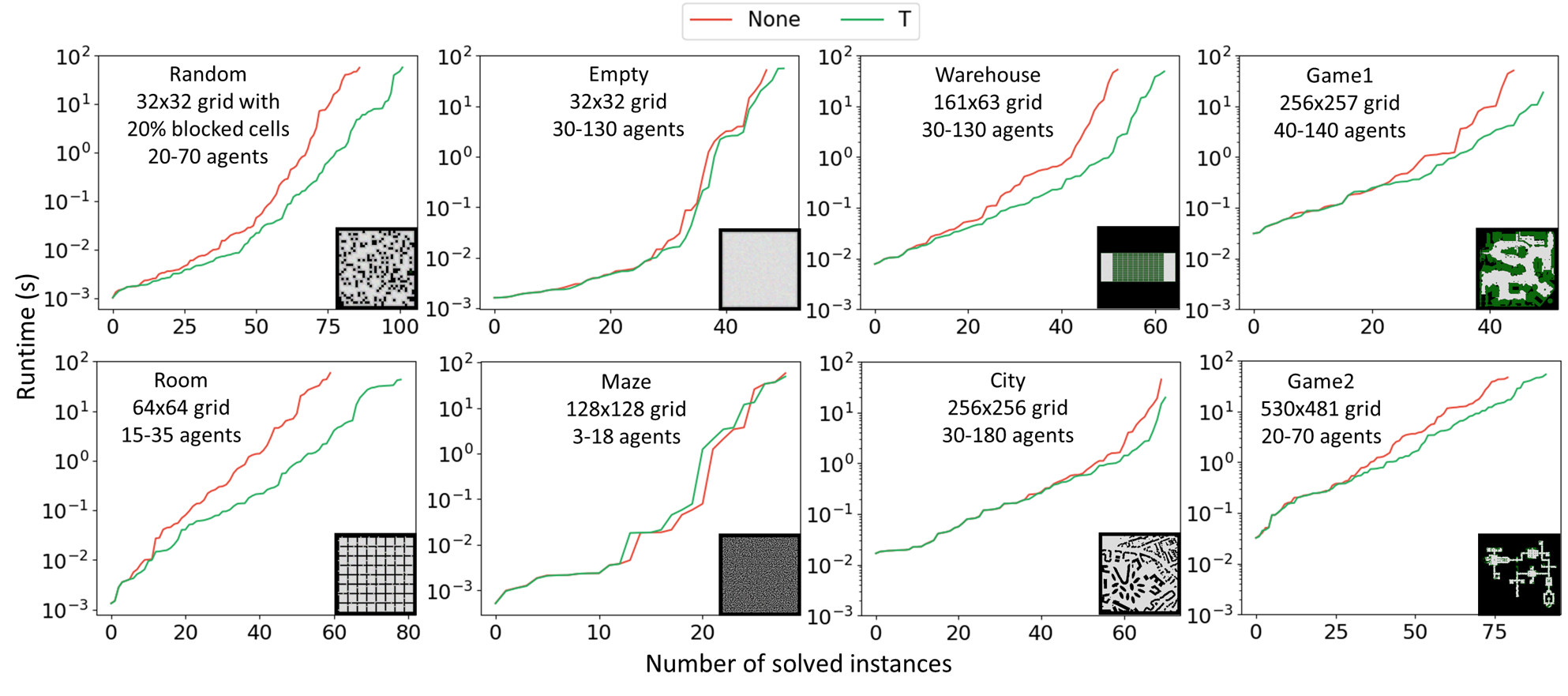}
\caption{\label{fig:exp-target}
Runtime distribution of CBSH with and without target reasoning. 
}
\end{figure}

In this subsection, we compare CBSH (denoted \textbf{None}) with CBSH with target reasoning (denoted \textbf{T}). 
As shown in \Cref{fig:exp-target}, on all maps except for \texttt{Maze}, target reasoning speeds up CBSH, and the improvement is usually larger on denser maps.
The performance on \texttt{Maze} is an exception due to the low-level space-time A* search for replanning an
extremely long or non-existing path. On the one hand, the length constraint $l_i > t$ can substantially increase the path length of agent $a_i$, but finding a long path is
time-consuming for space-time A*. 
On the other hand, the length constraint $l_i \leq t$ prohibits all agents other than agent $a_i$ from being at vertex $g_i$ for all timesteps at and after timestep $t$, which might make it impossible for an agent to reach its target vertex. However, to realize that such a path does not exist, space-time A* has to enumerate all reachable pairs of vertex and timesteps, which is terribly time-consuming.
We might be able to address both issues by replacing space-time A* with Safe Interval Path Planning~\cite{PhillipsICRA11}, but leave this for future work.

\section{Corridor Symmetry}
\label{sec:corridor}

\begin{figure}[t]
    \centering
    \includegraphics[height=3.2cm]{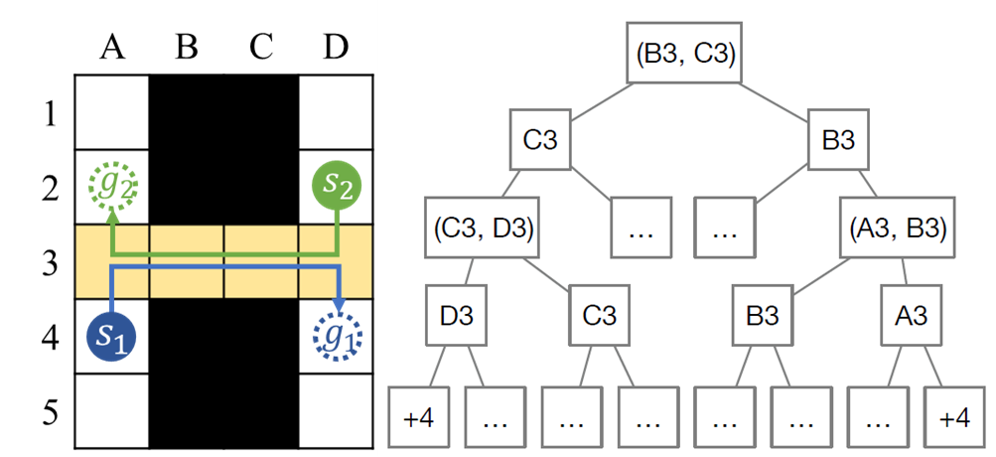}
    \includegraphics[height=3.2cm]{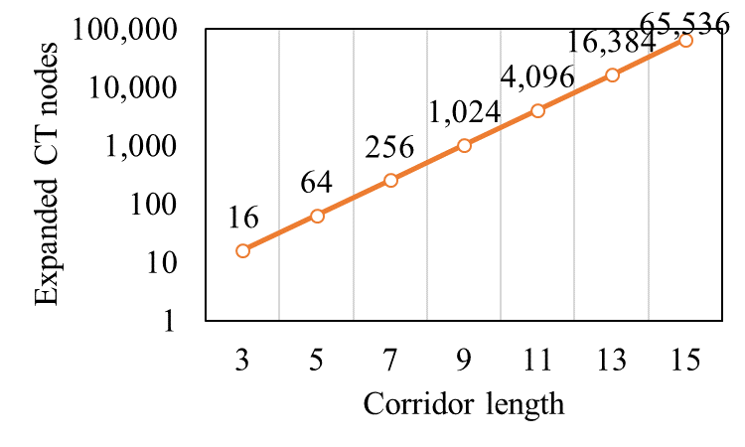}
    \caption{
    An example of a corridor conflict. 
    The left figure shows the shortest paths of two agents $a_1$ and $a_2$ that have an edge conflict inside the corridor at edge (B3, C3) at timestep 3. 
    The middle figure shows the CT. 
    Each left branch constrains
    agent $a_2$, while each right branch constrains agent $a_1$.
    Each non-leaf CT node is marked with the vertex/edge of the chosen conflict.
    Each leaf CT node marked ``+4'' contains an optimal solution, whose sum of costs is the cost of the root CT node plus 4. 
    Each leaf CT node marked ``...'' contains a plan with conflicts and eventually produces suboptimal solutions in its descendant CT nodes.
    The right figure shows the numbers of CT nodes expanded by CBSH for the 2-agent instances with different corridor lengths.
    }\label{fig::corridor}
\end{figure}

\begin{defn}[Corridor]
A \emph{corridor} $C = C_0 \cup \{e_1,e_2\}$ of graph $G = (V,E)$ 
is a chain of connected vertices $C_0 \subseteq V$, each of degree 2, 
together with two endpoints $\{e_1,e_2\} \in V$ connected to $C_0$. 
Its \emph{length} is the distance between its two endpoints, i.e., the number of vertices in $C_0$ plus 1. 
\end{defn}

Figure~\ref{fig::corridor} shows a corridor of length 3 made up of
$C_0 = \{ \mbox{B3}, \mbox{C3}\}$, $b = \mbox{A3}$ and $e = \mbox{D3}$.
A corridor symmetry occurs when two agents
attempt to traverse a corridor in opposite directions at the same time. We refer to the corresponding conflict as a \emph{corridor conflict}. 
\begin{defn}[Corridor Conflict]
Two agents involve in a corridor conflict iff they come from opposite directions and have one or more vertex or edge conflicts inside a corridor.
\end{defn}
\begin{example}
In \Cref{fig::corridor}, CBS detects the edge conflict $\langle a_1,a_2,\mbox{B3},\mbox{C3},3 \rangle$ and branches, thereby 
generating two child CT nodes.
There are many shortest paths for each agent that avoid 
edge (B3, C3) at timestep 3 (e.g., path [A4, A3, B3, B3, C3, D3, D4] for agent $a_1$ and path [D2, D2, D3, C3, B3, A3, A2] for agent $a_2$), all of which involve one wait action 
and differ only in where the wait action is taken.
However, each of these single-wait paths remains in conflict with the path of the other agent.
CBS has to branch at least four times to find conflict-free paths in such
a situation and has to branch even more times to prove the optimality. 
Figure~\ref{fig::corridor}(middle) shows the corresponding CT.  Only two of the sixteen leaf CT nodes contain optimal solutions.
\end{example}

This example highlights an especially pernicious characteristic of corridor symmetry: CBS may be forced to continue branching and exploring irrelevant
and suboptimal resolutions of the same corridor conflict in order to eventually compute an optimal solution. 
\Cref{fig::corridor}(right) shows how large a problem corridor symmetry 
can be for CBS more generally.
As the corridor length $k$ increases, the number of expanded CT nodes
grows exponentially as $2^{k+1}$.
We therefore propose a new reasoning technique 
that can identify and resolve corridor conflicts efficiently. 
We present this technique in the following four subsections. We then extend it to handle several different special corridor symmetries more efficiently and evaluate the empirical performance in the next section. 

\subsection{Identifying Corridor Conflicts} \label{sec:identify-corridor}

The detection of corridor conflicts is straightforward by checking every vertex and edge conflict.
We find the corridor on-the-fly by checking whether the conflicting vertex (or an endpoint of the conflicting edge) is of degree 2. To find the endpoints of the corridor, we check the degree of each of the two adjacent vertices and repeat the procedure until we find either a vertex whose degree is not 2 or the start or target vertex of one of the two agents. 

\subsection{Resolving Corridor Conflicts}\label{sec:resolve-corridor}

Consider a corridor $C$ of length $k$ with endpoints $e_1$ and $e_2$. 
Assume that the path of agent $a_1$ traverses the corridor from $e_2$ to $e_1$ and the path of agent $a_2$ traverses the corridor from $e_1$ to $e_2$. They conflict with each other inside the corridor. Let $t_1(e_1)$ be the earliest timestep when agent $a_1$ can reach $e_1$ and $t_2(e_2)$ be the earliest timestep when agent $a_2$ can reach $e_2$. 

We first assume that there are no \emph{bypasses} (i.e., paths that move the agent from its start vertex to its target vertex without traversing corridor $C$) for either agent. Therefore, one of the agents must wait until the other one has fully traversed
the corridor. If we prioritize agent $a_1$ and let agent $a_2$ wait, then the earliest timestep when agent $a_2$ can start to traverse the corridor from $e_1$ is $t_1(e_1) + 1$. Therefore, the earliest timestep when agent $a_2$ can reach $e_2$ is $t_1(e_1) + 1 + k$. Similarly, if we prioritize agent $a_2$ and let agent $a_1$ wait, then the earliest timestep when agent $a_1$ can reach $e_1$ is $t_2(e_2) + 1 + k$. 
Therefore, any paths of agent $a_1$ that reach $e_1$ before or at timestep $t_2(e_2) + k$ must conflict with any paths of agent $a_2$ that reach $b$ before or at timestep $t_1(e_1) + k$.

Now we consider bypasses. Assume that agent $a_1$ has bypasses to reach $e_1$ without traversing corridor $C$ and the earliest timestep when it can reach $e_1$ using a bypass is $t_1'(e_1)$. Similarly, assume that agent $a_2$ also has bypasses to reach $e_2$ without traversing corridor $C$ and the earliest timestep when it can reach $b$ using a bypass is $t_2'(e_2)$. If we prioritize agent $a_1$, then agent $a_2$ can either wait or use a bypass, then the earliest timestep when agent $a_2$ can reach $e_2$ is $\min(t_2'(e_2), t_1(e_1) + 1 + k)$. Similarly, if we prioritize agent $a_2$, then the earliest timestep when agent $a_1$ can reach $e_1$ is $\min(t_1'(e_1), t_2(e_2) + 1 + k)$. 
Therefore, any paths of agent $a_1$ that reach $e_1$ before or at timestep $\min(t_1'(e_1) - 1, t_2(e_2) + k)$ must conflict with any paths of agent $a_2$ that reach $e_2$ before or at timestep $\min(t_2'(e_2) - 1, t_1(e_1) + k)$. 
In other words, the following two constraints are mutually disjunctive:
\begin{itemize}
    \item $\tuple{a_1, e_1, [0, \min(t_1'(e_1) - 1, t_2(e_2) + k)]}$ and
    \item $\tuple{a_2, e_2, [0, \min(t_2'(e_2) - 1, t_1(e_1) + k)]}$,
\end{itemize}
where $\tuple{a_i, v, [t_{min}, t_{max}]}$ is a \emph{range constraint}~\cite{atzmon2018robust} that prohibits agent $a_i$ from being at vertex $v$ at any timestep between timesteps $t_{min}$ and $t_{max}$. 
Thus, to resolve a corridor conflict, we split the CT node 
with two range constraints. 
We use state-time A* to compute $t_1(e_1), t_1'(e_1), t_2(e_2)$, and $t_2'(e_2)$.

For example, for the corridor conflict in Figure~\ref{fig::corridor}, we calculate $t_1(\mbox{D3})=t_2(\mbox{A3})=4$, $t_1'(\mbox{D3})=t_2'(\mbox{A3})=+\infty$ and $k=3$. Hence, to resolve this conflict, we split the root CT node and add the range constraints  $\tuple{a_1, \mbox{D3}, [0, 7]}$ and $\tuple{a_2, \mbox{A3}, [0, 7]}$. 
In the right (left) child CT node, we replan the path of agent $a_1$ ($a_2$) and find a new path [A4, A4, A4, A4, A4, A3, B3, C3, D3, D4] ([D2, D2, D2, D2, D2, D3, C3, B3, A3, A2]), that waits at its start vertex for 4 timesteps before moving to its target vertex. 
It waits at its start vertex rather than any vertex inside the corridor because CBS breaks ties by preferring the path that has the fewest conflicts with the paths of other agents. Hence, the paths in both child CT nodes are conflict-free, and the corridor symmetry is resolved in a single branching step.

Like the rectangle reasoning techniques, we use this branching method only when the paths of both agents in the current CT node 
violate their corresponding range constraints because 
this guarantees that the paths in both child CT nodes are different from the paths in the current CT node.

\subsection{Classifying Corridor Conflicts}
\label{sec:classify-corridor}

Similarly to target conflicts, we classify corridor conflicts based on the type of the vertex/edge conflict inside the corridor. A corridor conflict is cardinal iff the corresponding vertex/edge conflict is cardinal; it is semi-cardinal iff the corresponding vertex/edge conflict is semi-cardinal; and it is non-cardinal iff the corresponding vertex/edge conflict is non-cardinal. 
This is an approximate way of classifying corridor conflicts. 
We use Figure~\ref{fig::corridor} to show an example where, after branching on a non-cardinal corridor conflict in a CT node $N$, the costs of both resulting child CT nodes have costs larger than the cost of $N$. 
Assume that $N$ has two constraints, each of which prohibits one of the agents from being at its target vertex at timestep 5, so both agents have to wait for one timestep and thus have paths of length 6. If agent $a_1$ waits at vertex D3 at timestep 5 and agent $a_2$ waits at vertex A3 at timestep 5, then they have a non-cardinal edge conflict $\tuple{a_1,a_2,\mbox{B3},\mbox{C3},3}$. As a result, the corridor conflict is classified as a non-cardinal conflict. However, when we use the range constraints $\tuple{a_1, \mbox{D3}, [0, 7]}$ and $\tuple{a_2, \mbox{A3}, [0, 7]}$ to resolve the corridor conflict, the costs of both child CT nodes are larger than the cost of $N$.


\subsection{Theoretical Analysis}
\label{sec:corridor-theory}
\begin{pro} \label{pro:optimal4}
For all combinations of paths of agents $a_1$ and $a_2$ with a corridor conflict, if one path violates $\tuple{a_1, e_1, [0, \min(t_1'(e_1) - 1, t_2(e_2) + k)]}$ and the other path violates $\tuple{a_2, e_2, [0, \min(t_2'(e_2) - 1, t_1(e_1) + k)]}$, then the two paths have one or more vertex or edge conflicts inside the corridor.
\qed
\end{pro}
Since we already intuitively prove \Cref{pro:optimal4} when we introduce range constraints, 
we move the formal proof to \ref{sec:corridor-proof}, so as not to disrupt the flow of text too much.
\Cref{pro:optimal4} tells us that range constraints are mutually disjunctive, and thus, according to \Cref{thm:cbs-optimal},
using them to split a CT node preserves the completeness and optimality of CBS.\footnote{If we add range constraints at the entry endpoints instead of the exit endpoints, they might not be mutually disjunctive, and thus we would loss the completeness guarantees.} 

\begin{thm}
Resolving corridor conflicts with range constraints preserves the completeness and optimality of CBS.\qed
\end{thm}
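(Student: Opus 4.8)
The plan is to derive this theorem as an immediate corollary of \Cref{thm:cbs-optimal} together with \Cref{pro:optimal4}. The corridor reasoning technique resolves a corridor conflict by splitting the current CT node into two children, adding the single range constraint $\tuple{a_1, e_1, [0, \min(t_1'(e_1) - 1, t_2(e_2) + k)]}$ to one child and $\tuple{a_2, e_2, [0, \min(t_2'(e_2) - 1, t_1(e_1) + k)]}$ to the other. Since \Cref{thm:cbs-optimal} guarantees that splitting a CT node with two mutually disjunctive constraints preserves both completeness and optimality, it suffices to verify that this particular pair of range constraints is mutually disjunctive (viewing each single constraint as a singleton constraint set).

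First I would recall that two constraints are \emph{mutually disjunctive} precisely when no pair of conflict-free paths of the two agents violates both of them. Next I would invoke \Cref{pro:optimal4}, which asserts that whenever one path violates the first range constraint and the other violates the second, the two paths must share a vertex or edge conflict inside the corridor. Taking the contrapositive, any pair of conflict-free paths cannot violate both constraints simultaneously, i.e., every conflict-free pair satisfies at least one of them. This is exactly the mutual-disjunctiveness condition, so the two range constraints are mutually disjunctive. The final step is then simply to apply \Cref{thm:cbs-optimal} to this split and conclude that completeness and optimality are preserved.

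The substantive work lies entirely in \Cref{pro:optimal4} (and the earlier timestep arithmetic that justifies the $\min$ expressions bounding when each agent can reach the far endpoint), whose formal proof is deferred to \ref{sec:corridor-proof}; the theorem itself is a one-line consequence of that property. The only point demanding care is that the constraints are placed at the \emph{exit} endpoints $e_1$ and $e_2$ rather than the entry endpoints — as the footnote warns, constraining the entry endpoints could destroy mutual disjunctiveness — so I would make sure the constraints used in the split match verbatim those analyzed in \Cref{pro:optimal4}. I would also note that the additional requirement that both agents' current paths violate their range constraints is needed only to guarantee that the split makes progress (the child CT nodes differ from the parent); it plays no role in the soundness argument above.
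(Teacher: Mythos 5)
Your proposal is correct and follows exactly the paper's own route: the paper likewise derives this theorem by noting that \Cref{pro:optimal4} establishes the two range constraints (placed at the exit endpoints) are mutually disjunctive, and then applying \Cref{thm:cbs-optimal}, with the formal proof of \Cref{pro:optimal4} deferred to the appendix. Your added remarks --- that the constraints must match those analyzed in \Cref{pro:optimal4} and that the requirement that both current paths violate their constraints is only a progress condition, not a soundness condition --- are accurate and consistent with the paper's discussion.
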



\section{Generalized Corridor Symmetries}
\label{sec:generalized-corridor}
The corridor reasoning technique in the previous section has some limitations when handling three special corridor symmetries, namely pseudo-corridor symmetries, corridor symmetries with start vertices inside the corridor, and corridor-target symmetries. 
In this section, we first discuss and address these special cases in detail in the following three subsections. We then present the framework of the generalized corridor reasoning technique that can handle all types of corridor symmetries in \Cref{sec:identify-all-corridors}. We last show some empirical results in \Cref{sec:corridor-exp}.

\subsection{Pseudo-Corridor Conflicts}
\label{sec:pseudo-corridor}

\begin{figure}[t]
\centering
\includegraphics[height=3.2cm]{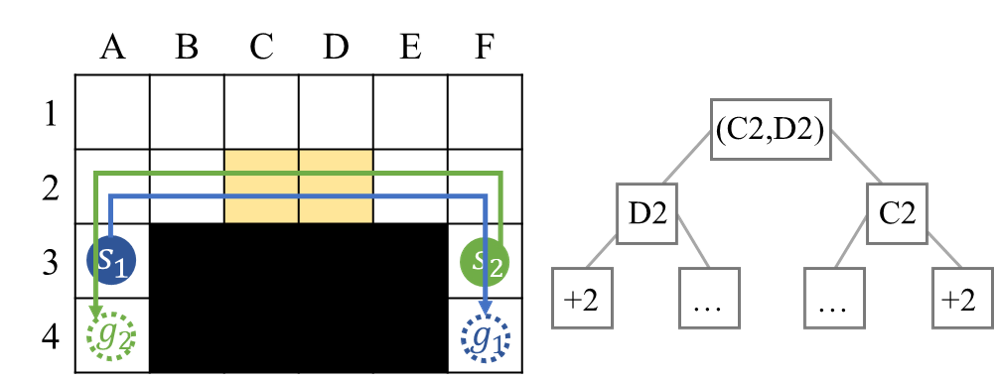}
\caption{\label{fig::pseudo-corridor}
    An example of a pseudo-corridor conflict. 
    The left figure shows the shortest paths of two agents $a_1$ and $a_2$ that have an edge conflict at edge (C2, D2) at timestep 4. 
    The right figure shows the CT. 
    Each left branch constrains
    agent $a_2$, while each right branch constrains agent $a_1$.
    Each non-leaf CT node is marked with the vertex/edge of the chosen conflict.
    Each leaf CT node marked ``+2'' contains an optimal solution, whose sum of costs is the cost of the root CT node plus 2. 
    Each leaf CT node marked ``...'' contains a plan with conflicts and eventually produces suboptimal solutions in its descendant CT nodes.
}
\end{figure}

Pseudo-corridor symmetry is a special corridor symmetry that behaves like a corridor conflict but occurs in a non-corridor region. 

\begin{example}
In~\Cref{fig::pseudo-corridor}(left), CBS detects the edge conflict $\langle a_1,a_2,\mbox{C2},\mbox{D2},4 \rangle$ and branches, thereby 
generating two child CT nodes.
There are many shortest paths for each agent that avoid 
edge (C2, D2) at timestep 4 (e.g., path [A3, A2, B2, C2, C2, D2, E2, F2, F3, F4] for agent $a_1$ and path [F3, F2, E2, D2, D2, C2, B2, A2, A3, A4] for agent $a_2$), but they all involve one wait action 
and differ only in where the wait action is taken.
However, each of these single-wait paths remains in conflict with the path of the other agent.
CBS has to branch again to find conflict-free paths in such
a situation. 
\Cref{fig::pseudo-corridor}(right) shows the corresponding CT.  Only the left-most and right-most leaf CT nodes contain optimal solutions.
\end{example}

Like corridor conflicts, a pseudo-corridor conflict occurs when (1) two agents move in opposite directions, (2) they have a vertex or edge conflict, and (3) adding one wait action to one of the agents, no matter where, must lead to another edge or vertex conflict. 
In fact, a pseudo-corridor conflict can be viewed as a corridor conflict whose corridor is of length 1, i.e., consist of only two endpoints. 
Although, compared to corridor conflicts, a pseudo-corridor conflict seems to be less problematic as the size of the CT does not grow exponentially, it could occur more frequently as it is not restricted to maps that have corridors. 

We reuse the corridor reasoning technique to resolve pseudo-corridor conflicts. That is, when we find a corridor conflict of length 1, 
we generate two range constraints 
$c_1=\tuple{a_1, e_1, [0, \min(t_1'(e_1) - 1, t_2(e_2) + 1)]}$ and
$c_2=\tuple{a_2, e_2, [0, \min(t_2'(e_2) - 1, t_1(e_1) + 1)]}$,
where $t_i(e_i)$ ($i=1,2$) is the earliest timestep for agent $a_i$ to reach endpoint $e_i$ and $t_i'(e_i)$ ($i=1,2$) is the earliest timestep for agent $a_i$ to reach endpoint $e_i$ without using edge $(e_1, e_2)$. All properties listed in \Cref{sec:corridor-theory} hold here. By reusing their proofs without changes, we can show that resolving a pseudo-corridor conflict with constraints $c_1$ and $c_2$ preserves the completeness and optimality of CBS. 


In practice, we only use range constraints $c_1$ and $c_2$ to resolve the conflict if the path of agent $a_1$ violates range constraint $c_1$ and the path of agent $a_2$ violates range constraint $c_2$, and we are only interested in cardinal pseudo-corridor conflicts because semi-/non-cardinal pseudo-corridor conflicts are easy to resolve. 
A necessary but not sufficient condition to ensure this is that, if the conflict between the two agents is a vertex conflict at timestep $t$, then the MDD of both agents have only one MDD node at timesteps $t-1$, $t$ and $t+1$, and the MDD node of one agent at timestep $t-1$ is identical to the MDD node of the other agent at timestep $t+1$; or if the conflict is an edge conflict at timestep $t$, then the MDD of both agents have only one MDD node at timesteps $t-1$ and $t$. Therefore, before we generate range constraints $c_1$ and $c_2$, we check the MDDs of both agents to eliminate some non-pseudo-corridor conflicts, as checking MDDs is substantially computationally cheaper than computing $t_i(e_i)$ and $t_i'(e_i)$ for generating  range constraints. \Cref{alg:pseudo-corridor} summarizes the pseudo-code for the pseudo-corridor reasoning technique. All pseudo-corridor conflicts returned by \Cref{alg:pseudo-corridor} are cardinal.


\begin{algorithm}[ht]
\caption{Pseudo-Corridor Reasoning} \label{alg:pseudo-corridor}
\small
\KwIn{Vertex conflict $c=\langle a_1, a_2, v, t \rangle$ or edge conflict $c=\langle a_1, a_2, v, u, t \rangle$. }
\BlankLine
$e_1, e_2 \leftarrow NULL$\;
\uIf{$c$ is a vertex conflict and, for $i=1,2$, $\MDD{i}$ has only one MDD node at timesteps $t-1$, $t$, and $t+1$ and the MDD node of $\MDD{i}$ at timestep $t-1$ is identical to the MDD node of $\MDD{3-i}$ at timestep $t+1$}
{
    $e_1 \leftarrow v$\;
    $e_2 \leftarrow$ the vertex of the MDD node of $\MDD{1}$ at timestep $t-1$\; 
}
\ElseIf{ $c$ is an edge conflict and, for $i=1,2$, $\MDD{i}$ has only one MDD node at both timesteps $t-1$ and $t$}
{
   $e_1 \leftarrow u$\;
   $e_2 \leftarrow v$\; 
}

\If 
{$e_1 \neq NULL$}
{
    $c_1 \leftarrow \langle a_1, e_1, [0, \min\{t'_1(e_1) - 1, t_2(e_1) + 1\}] \rangle$\;
    $c_2 \leftarrow \langle a_2, e_2, [0, \min\{t'_2(e_2) - 1, t_1(e_2) + 1\}] \rangle$\;
    \If{The path of $a_1$ violates $c_1$ and the path of $a_2$ violates $c_2$}
    {
    \Return $c_1$ and $c_2$\;
    }
}
\Return ``Not Corridor''\; 

\end{algorithm}

\subsection{Corridor Conflicts with Start Vertices inside the Corridor}
\label{sec:corridor-start}
\begin{figure}[t]
    \centering
    \subfigure[Corridor conflict]
{
    \quad
    \includegraphics[height=2.8cm]{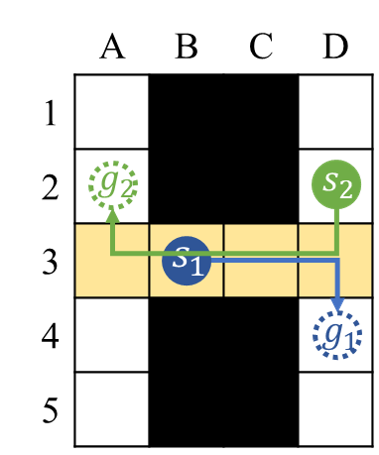} 
    \label{fig:corridor-1start}
    \quad
}
    \subfigure[Corridor conflict]
{
    \quad
    \includegraphics[height=2.8cm]{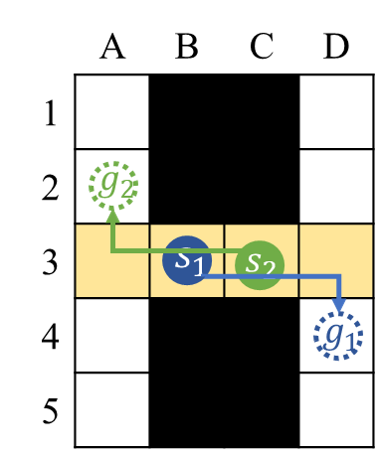}
    \label{fig:corridor-2starts}
    \quad
}
    \subfigure[No corridor conflict] 
{
    \quad
    \includegraphics[height=2.8cm]{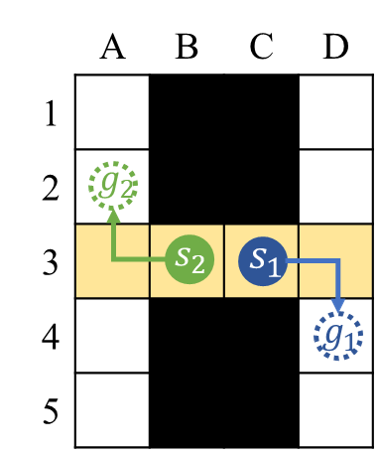}
    \label{fig:not-corridor-start}
    \quad
} 
    \caption{Examples of corridor conflicts with start vertices inside the corridor.}
\end{figure}

The corridor reasoning technique cannot resolve corridor conflicts efficiently when the start vertices of one or both agents are inside the corridor.

\begin{example}
\Cref{fig:corridor-1start} shows the same example as in \Cref{fig::corridor}(left) except that the start vertex of agent $a_1$ is inside the corridor. If the two agents follow their individual shortest paths, they have an edge conflict at (C3, D3) at timestep 2. Thus, when we use the corridor reasoning technique described in \Cref{sec:identify-corridor}, we find a corridor $C=\{\mbox{B3}, \mbox{C3}, \mbox{D3}\}$ of length 2 and generate a pair of range constraints $\langle a_1, \mbox{D3}, [0, 5]\rangle$ and $\langle a_2, \mbox{B3}, [0, 4]\rangle$. However, when we generate the left child node with the first constraint, we cannot find a shortest path for agent $a_1$ that does not conflict with agent $a_2$. In fact, the shortest path for agent $a_1$ that does not conflict with agent $a_2$ is to first move to A4, wait there until agent $a_2$ reaches A3, then traverse the corridor and reach its target vertex. 
\end{example}

This example shows that the previous corridor reasoning technique cannot resolve the corridor conflict in a single branch, because it stops detecting the corridor after it finds a start vertex. Therefore, in this subsection, we modify the corridor reasoning technique by allowing start vertices to be inside the corridor. Below shows the details of the modification.

\paragraph{Identifying corridor conflicts}  
For every vertex and edge conflicts, we first find the corridor on-the-fly by checking whether the conflicting vertex  (or an endpoint of the conflicting edge) is of degree 2.  To find the endpoints of the corridor, we check the degree of each of the two adjacent vertices and repeat the procedure until we find either a vertex whose degree is not 2 or the target vertex of one of the two agents. Then, we say the two agents involve in a corridor conflict iff they (1) leave the corridor from different endpoints and (2) have to cross each other inside the corridor. The second condition is to avoid cases like \Cref{fig:not-corridor-start}. Although the paths for the two agents shown in \Cref{fig:not-corridor-start} do not conflict, when considering constraints in the CT node, it is possible that the shortest paths of the two agents are longer than the paths shown in the figure and conflict inside the corridor. But we should not view it as a corridor conflict.

\paragraph{Resolving and classifying corridor conflicts} 
It is the same as the original technique shown in \Cref{sec:resolve-corridor,sec:classify-corridor}. 


\paragraph{Theoretical analysis} 
All properties listed in \Cref{sec:corridor-theory} hold here. We can reuse their proofs without changes. Therefore, this modified technique preserves the completeness and optimality of CBS.

\subsection{Corridor-Target Conflicts}
\label{sec:corridor-target}


\begin{figure}[t]
    \centering
    \subfigure[Corridor-target conflict]
{
    \quad
    \includegraphics[height=2.8cm]{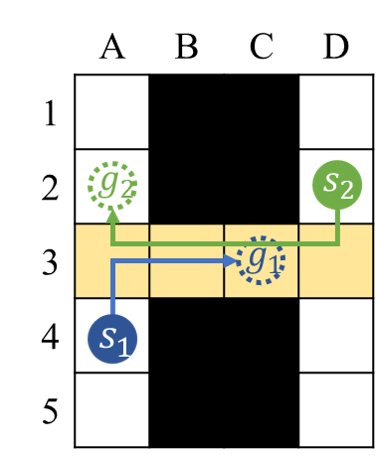} 
    \label{fig:corridor-1goal}
    \qquad
}
     \subfigure[Corridor-target conflict]
{
    \quad
    \includegraphics[height=2.8cm]{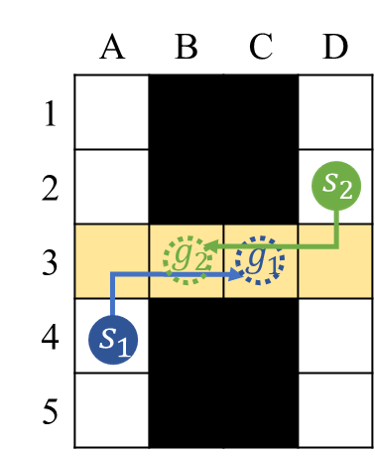}
    \label{fig:corridor-2goals}
    \qquad
}
    \subfigure[No corridor-target conflict] 
{
    \qquad
    \includegraphics[height=2.8cm]{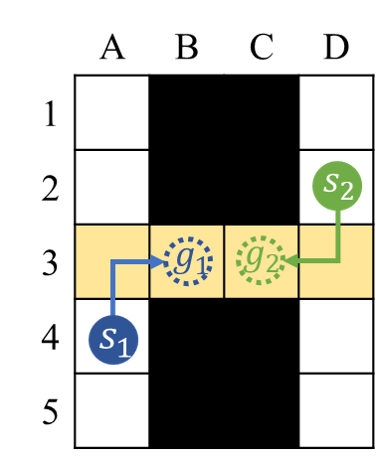}
    \label{fig:not-corridor-goal}
    \qquad
} 
    \caption{Examples of corridor-target conflicts.}
\end{figure}

Another interesting case occurs when the target vertex of an agent is inside the corridor.

\begin{example}
\Cref{fig:corridor-1goal} shows the same example as in \Cref{fig::corridor}(left) except that the target vertex of agent $a_1$ is inside the corridor. If the two agents follow their individual shortest paths, they have an edge conflict at (B3, C3) at timestep 3. Thus, when we use the corridor reasoning technique described in \Cref{sec:identify-corridor}, we find a corridor $C=\{\mbox{A3}, \mbox{B3}, \mbox{C3}\}$ of length 2 and generate a pair of range constraints $\langle a_1, \mbox{C3}, [0, 6]\rangle$ and $\langle a_2, \mbox{B3}, [0, 5]\rangle$.
In the left child CT node with the first constraint, agent $a_1$ waits until agent $a_2$ leaves the corridor and then starts to enter the corridor from A3 at timestep 5. In the right child CT node with the second constraint, however, we cannot find a shortest path for agent $a_2$ that does not conflict with agent $a_1$. In fact, the best resolution under this node is to first let agent $a_1$ travel through the corridor and leave $D3$, then agent $a_2$ enter the corridor from $D_3$, and last agent $a_1$ reenter the corridor from D3. In other words, the paths of both agents have to be changed!
\end{example}

This example shows that the previous corridor reasoning technique cannot resolve the corridor conflict in a single branching step because it stops detecting the corridor after it finds a target vertex. Therefore, in this subsection, we modify the corridor reasoning technique by allowing target vertices to be inside the corridor. Below show the details of the modified technique. 
In particular, we refer to a corridor conflict with one or two target vertices inside the corridor as a \emph{corridor-target conflict}.
\subsubsection{Identifying Corridor-Target Conflicts}  

For every vertex and edge conflicts, we first find the corridor on-the-fly by checking whether the conflicting vertex  (or an endpoint of the conflicting edge) is of degree 2.  To find the endpoints of the corridor, we check the degree of each of the two adjacent vertices and repeat the procedure until we find either a vertex whose degree is not 2. Then, we say the two agents involve in a corridor conflict iff they have to cross each other inside the corridor. Note that we remove the condition that requires agents to move in opposite directions. This is because, when the start and target vertices are inside the corridor, it is possible that the two agents move in the same direction but still have an unavoidable conflict, e.g., the conflict in the second plot on the first row of \Cref{fig:2goals_in_corridor}. 

We use a function \textsc{MustCross($a_1, a_2, C$)} to determine whether agents $a_1$ and $a_2$ have to cross each other in corridor $C$.
If the start vertex of agent $a_i$ ($i=1,2$) is inside the corridor, we call it the entrance vertex $b_i$ of agent $a_i$; otherwise, we call the endpoint from where agent $a_i$ enters the corridor the entrance vertex $b_i$ of agent $a_i$.
Similarly, if the target vertex of agent $a_i$ ($i=1,2$) is inside the corridor, we call it the exit vertex $e_i$ of agent $a_i$; otherwise, we call the endpoint from where agent $a_i$ leaves the corridor the exit vertex $e_i$ of agent $a_i$.
If $b_1\neq b_2$, $e_1 \neq  e_2$, and the direction of moving from $b_1$ to $b_2$ is opposite to the direction of moving from $e_1$ to $e_2$, 
then agents $a_1$ and $a_2$ must cross each other. \Cref{fig:2goals_in_corridor} shows more examples. 

\begin{figure}[t]
    \centering
    \includegraphics[width=\textwidth]{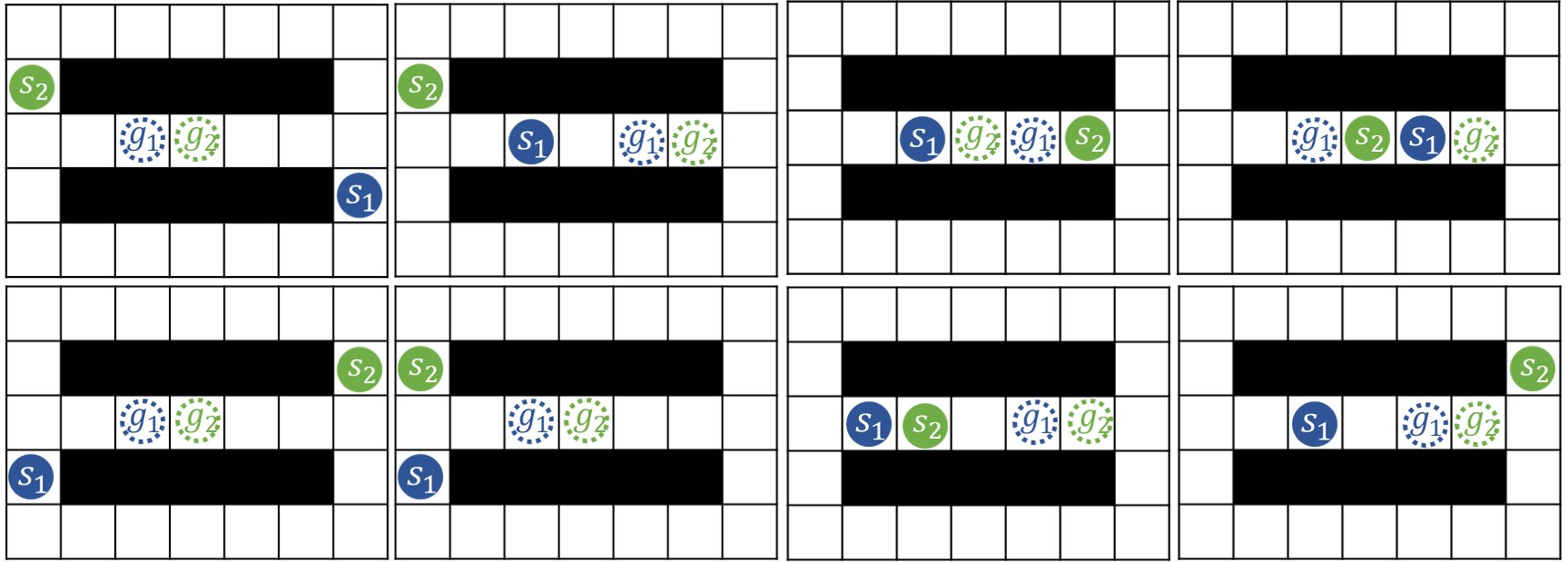}
    \caption{Examples for cases where the target vertices of agents $a_1$ and $a_2$ are inside corridor $C$. Only the cases shown in the first row are classified as corridor conflicts by Function \textsc{MustCross($a_1, a_2, C$)}.}
    \label{fig:2goals_in_corridor}
\end{figure}

\subsubsection{Resolving Corridor-Target Conflicts}
\label{sec:resolve-corridor-target}

We combine the corridor reasoning technique with the target reasoning technique to resolve corridor-target conflicts.

\paragraph{Case 1: only one target vertex is inside the corridor}
Without loss of generality, we assume that $g_1$ is inside the corridor and $g_2$ is not. 
Let us use \Cref{fig:corridor-1goal} as a running example. Agent $a_2$ can choose to use the corridor or not. If it uses the corridor, then agent $a_1$ has to finally use the corridor after agent $a_2$ because it has to eventually wait at its target vertex forever. 
If agent $a_1$ enters the corridor from its entrance endpoint $e_2$ (i.e., cell A3 in \Cref{fig:corridor-1goal}), then it has to let agent $a_2$ traverse through the corridor first. So the earliest timestep for it to enter the corridor from vertex $e_2$ is $\max\{t_1(e_2), t_2(e_2) + 1\}$, and, therefore, the earliest timestep for it to reach its target vertex $g_1$ is $\max\{t_1(e_2), t_2(e_2) + 1\} + dist(e_2, g_1)$. 
Similarly, if agent $a_1$ enters the corridor from the entrance endpoint $e_1$ of agent $a_2$ (i.e., cell D3 in \Cref{fig:corridor-1goal}), then the earliest timestep for it to reach its target vertex $g_1$ is $\max\{t_1(e_1), t_2(e_1) + 1\} + dist(e_1, g_1)$. In other words, if agent $a_1$ reaches its target vertex at or before timestep 
$l = \min_{i=1, 2}\{\max\{t_1(e_i) - 1, t_2(e_i)\} + dist(e_i, g_1)\}$, then agent $a_2$ cannot traverse through the corridor without conflicting with $a_1$, i.e., the earliest timestep for it to reach endpoint $e_2$ is $t_2'(e_2)$ (i.e., using a bypass that does not traverse the corridor).
Therefore, to resolve this corridor-target conflict, we generate two child CT nodes, each with one of the constraint sets
$C_1=\{l_1 > l\}$ and
$C_2=\{l_1 \leq l, \langle a_2, e_2, [0, t_2'(e_2) - 1] \rangle\}$.

\paragraph{Case 2: both target vertices are inside the corridor}
The reasoning is similar to case 1. Let us use \Cref{fig:corridor-2goals} as a running example. Agent $a_2$ has to enter the corridor to reach its target vertex, but it can either enter from its entrance endpoint $e_1$ (i.e., cell D3 in \Cref{fig:corridor-2goals}) or the entrance endpoint $e_2$ of agent $a_1$ (i.e, cell A3 in \Cref{fig:corridor-2goals}).  If it enters the corridor from $e_1$, then it has to traverse vertex $g_1$ before agent $a_1$ eventually reaches it and wait there forever. 
If agent $a_1$ enters the corridor from its entrance endpoint $e_2$, then it has to let agent $a_2$ traverse through the corridor first. So the earliest timestep for it to enter the corridor from vertex $e_2$ is $\max\{t_1(e_2), t_2(e_2) + 1\}$, and, therefore, the earliest timestep for it to reach its target vertex $g_1$ is $\max\{t_1(e_2), t_2(e_2) + 1\} + dist(e_2, g_1)$. 
Similarly, if agent $a_1$ enters the corridor from the entrance endpoint $e_1$ of agent $a_2$ (i.e., cell D3 in \Cref{fig:corridor-1goal}), then the earliest timestep for it to reach its target vertex $g_1$ is $\max\{t_1(e_1), t_2(e_1) + 1\} + dist(e_1, g_1)$. In other words, if agent $a_1$ reaches its target vertex at or before timestep 
$l = \min_{i=1, 2}\{\max\{t_1(e_i) - 1, t_2(e_i)\} + dist(e_i, g_1)\}$, then agent $a_2$ cannot traverse through vertex $g_1$ without conflicting with $a_1$, i.e., the earliest timestep for it to reach its target vertex $g_2$ is $t_2'(g_2)$, which represents the earliest timestep for agent $a_2$ to reach its target vertex via a bypass, i.e., a path that enters the corridor from vertex $e_2$.
Therefore, to resolve this corridor-target conflict, we generate two child CT nodes, each with one of the constraint sets
$C_1=\{l_1 > l\}$ and
$C_2=\{l_1 \leq l, l_2 > t_2'(g_2) - 1\}$.

\subsubsection{Classifying Corridor-Target Conflicts}
We reuse the method in \Cref{sec:classify-corridor} to classify corridor-target conflicts.

\subsubsection{Theoretical Analysis}
\label{sec:corridor-target-proof}

\begin{pro} \label{pro:optimal5}
For all combinations of paths of agents $a_1$ and $a_2$ with a corridor-target conflict, if one path violates constraint set $C_1$ and the other path violates constraint set $C_2$, then the two paths have one or more vertex or edge conflicts inside the corridor.
\qed
\end{pro}

Since we already intuitively prove \Cref{pro:optimal5} when we introduce constraint sets $C_1$ and $C_2$ in \Cref{sec:resolve-corridor-target}, 
we move the formal proof to \ref{app:corridor-target-proof}.
\Cref{pro:optimal5} tells us that constraint sets $C_1$ and $C_2$ are mutually disjunctive, and thus, according to \Cref{thm:cbs-optimal},
using them to split a CT node preserves the completeness and optimality of CBS. 


\subsection{Summary}
\label{sec:identify-all-corridors}

\begin{algorithm}[ht]
\caption{Generalized Corridor Reasoning} \label{alg:generalized-corridor}
\small
\KwIn{Vertex conflict $c=\langle a_1, a_2, v, t \rangle$ or edge conflict $c=\langle a_1, a_2, v, u, t \rangle$. }
\BlankLine
Construct the corridor $C$ from vertex $v$ or edge $(v, u)$\;
\If{$C$ is of length 1}
{
    \Return \textsc{PseudoCorridorReasoning}(c)\;
}
\If{\textsc{MustCross($a_i, a_j, C$)} returns False }
{
    \Return ``Not Corridor''\; 
}
\uIf{$g_1$ and $g_2$ are inside the corridor}
{
    $l \leftarrow \min_{s=1, 2}\{\max\{t_1(e_s) - 1, t_2(e_s)\} + dist(e_s, g_1)\}$\;
    $C_1 \leftarrow \{l_1 > l\}$\;
    $C_2 \leftarrow \{l_1 \leq l, l_2 > t'_2(g_2) - 1\}$\;
}
\uElseIf{$g_1$ or $g_2$ is inside the corridor}
{
    WLOG, let $a_1$ be the agent whose target vertex is inside the corridor\;
    $l \leftarrow \min_{s=1, 2}\{\max\{t_1(e_s) - 1, t_2(e_s)\} + dist(e_s, g_1)\}$\;
    $C_1 \leftarrow \{l_1 > l\}$\;
    $C_2 \leftarrow \{l_1 \leq l, \langle a_{2}, e_{2}, [0, t'_{2}(e_{2}) - 1] \rangle\}$\;
}
\Else 
{
    $C_1 \leftarrow \{\langle a_1, e_1, [0, \min\{t'_1(e_1) - 1, t_2(e_1) + dist(e_1, e_2)\}] \rangle$\}\;
    $C_2 \leftarrow \{\langle a_2, e_2, [0, \min\{t'_2(e_2) - 1, t_1(e_2) + dist(e_2, e_1)\}] \rangle$\}\;
}

\uIf{The path of $a_1$ violates $C_1$ and the path of $a_2$ violates $C_2$}
{
    \Return $C_1$ and $C_2$\;
}
\Else
{
    \Return ``Not Corridor''\; 
}
\end{algorithm}

Up to now, we have discussed all types of generalized corridor conflicts, namely standard corridor conflicts (including the cases when start vertices are inside the corridor), corridor-target conflicts, and pseudo-corridor conflicts. \Cref{alg:generalized-corridor} shows the pseudo-code for generalized corridor reasoning.

Combining the theoretical analysis for each type of generalized corridor conflicts, we have the following theorem.
\begin{thm}
Resolving generalized corridor conflicts with the constraint sets returned by \Cref{alg:generalized-corridor} preserves the completeness and optimality of CBS.
\qed
\end{thm}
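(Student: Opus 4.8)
The plan is to prove the theorem by a case analysis that follows the branching structure of \Cref{alg:generalized-corridor}, reducing every branch to the single sufficient condition already available, namely \Cref{thm:cbs-optimal}: once the two constraint sets returned by the algorithm are known to be mutually disjunctive, completeness and optimality follow at once. The whole burden of the proof is therefore to verify, branch by branch, that each returned pair $(C_1,C_2)$ is mutually disjunctive, and to argue that the branches that do not return a pair are harmless.

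First I would dispose of the branches in which the algorithm produces no new pair of constraint sets. If \textsc{MustCross} returns False, or if the final guard (``the path of $a_1$ violates $C_1$ and the path of $a_2$ violates $C_2$'') fails, the algorithm returns ``Not Corridor'' and CBS falls back to its default vertex/edge branching, whose completeness and optimality are the base guarantees recalled in \Cref{sec:cbs}; these branches thus need nothing new. I would then treat the four branches that do return a pair. For a length-1 corridor the algorithm delegates to \textsc{PseudoCorridorReasoning}, and the mutual disjunctiveness of its output was established in \Cref{sec:pseudo-corridor}, where the properties of \Cref{sec:corridor-theory} were shown to carry over verbatim. For the residual standard branch (no target vertex inside $C$), the returned range constraints are the corridor range constraints of \Cref{sec:resolve-corridor} with the corridor length replaced by $dist(e_1,e_2)$, so their mutual disjunctiveness is \Cref{pro:optimal4}; the same proof was already argued to cover the start-inside extension of \Cref{sec:corridor-start}. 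For the one-goal-inside and both-goals-inside branches, the returned sets $C_1$ and $C_2$ are exactly those constructed in \Cref{sec:resolve-corridor-target}, so their mutual disjunctiveness is \Cref{pro:optimal5}. In each case I would then invoke \Cref{thm:cbs-optimal} to conclude.

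The part deserving care, rather than calculation, is establishing that the case split is genuinely exhaustive and that each conflict is routed to a branch whose mutual-disjunctiveness proof actually applies to it. Concretely I would verify that every vertex or edge conflict for which a corridor $C$ is successfully constructed lands in exactly one of the cases length-1, \textsc{MustCross}-False, both-goals-inside, one-goal-inside, or residual-standard, and that the endpoints $e_1,e_2$ and the entrance/exit vertices $b_i,e_i$ used inside each branch satisfy the hypotheses of the corresponding proposition. The key link to check here is that \textsc{MustCross} returning True is precisely the ``the two agents must cross each other inside the corridor'' premise on which the proofs of \Cref{pro:optimal4,pro:optimal5} rely, so that replacing the original ``opposite directions'' requirement by the crossing test does not invalidate those proofs. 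Finally I would note explicitly that the last guard only enforces that branching makes progress (the child plans differ from the parent plan) and plays no role in optimality, so its failure can safely be absorbed into the harmless ``Not Corridor'' case.
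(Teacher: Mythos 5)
Your proposal is correct and matches the paper's own argument, which is exactly the one-line ``combining the theoretical analysis for each type of generalized corridor conflict'' justification: each branch of \Cref{alg:generalized-corridor} returns constraint sets already shown mutually disjunctive (\Cref{pro:optimal4} for standard and start-inside corridors, its unchanged reuse for pseudo-corridors, \Cref{pro:optimal5} for corridor-target conflicts), after which \Cref{thm:cbs-optimal} applies. Your additional checks --- that ``Not Corridor'' returns harmlessly fall back to standard CBS splitting, that the final guard only enforces progress, and that \textsc{MustCross} supplies the crossing premise the propositions need --- merely make explicit what the paper leaves implicit, so the two proofs coincide.
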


\subsection{Empirical Evaluation on Generalized Corridor Reasoning}
\label{sec:corridor-exp}

\begin{figure}[t]
\centering
\includegraphics[width=\textwidth]{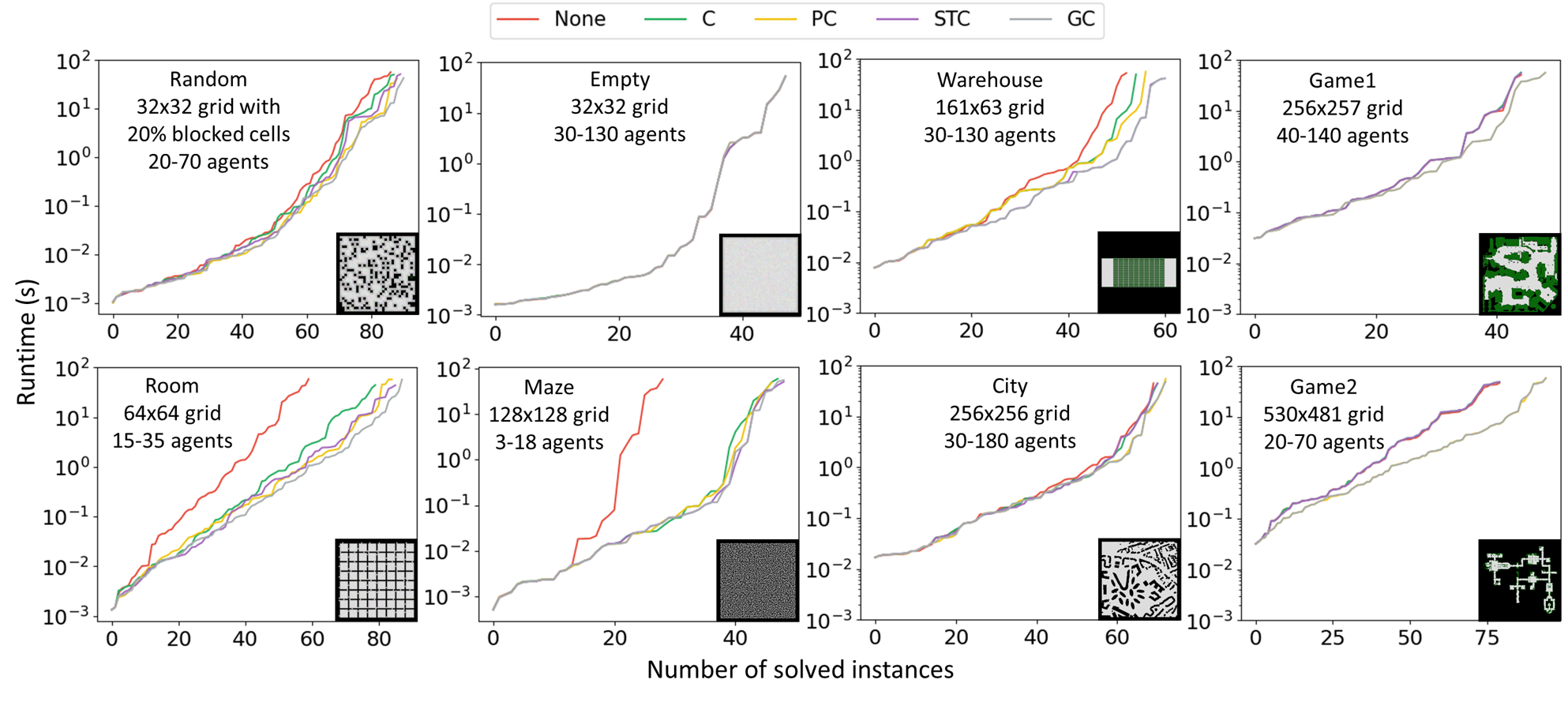}
\caption{\label{fig:exp-corridor}
Runtime distribution of CBSH with different corridor reasoning techniques. 
In the two game-map figures, red and green lines are hidden by purple lines, while yellow lines are hidden by grey lines.
}
\end{figure}

In this subsection, we empirically compare the effectiveness of all corridor reasoning techniques. The results are shown in \Cref{fig:exp-corridor}.
In particular, 
\textbf{None} represents CBSH, 
\textbf{C} represents CBSH with the basic corridor reasoning technique described in \Cref{sec:corridor},
\textbf{PC} represents CBSH with the basic corridor reasoning technique described in \Cref{sec:corridor} plus the pseudo-corridor reasoning described in \Cref{sec:pseudo-corridor}, 
\textbf{STC} represents CBSH with the basic corridor reasoning technique described in \Cref{sec:corridor} plus the modification of handling start and target vertices differently, as described in \Cref{sec:corridor-start,sec:corridor-target},
and \textbf{GC} represents CBSH with generalized corridor reasoning, namely the reasoning technique shown in \Cref{alg:generalized-corridor}.

Map \texttt{Empty} contains no obstacles and thus no corridors. So none of the corridor reasoning techniques can speed up CBSH, but luckily, they do not slow down CBSH either.
Maps \texttt{Game1}, \texttt{City}, and \texttt{Game2} rarely have corridors, but they all have obstacles of various shapes, where pseudo-corridor reasoning can be useful. As a result, although C and STC do not improve the performance of CBSH, PC and GC do.
Maps \texttt{Random}, \texttt{Warehouse}, \texttt{Room}, and \texttt{Maze} all have many 
corridors, and as a result, all corridor techniques speed up CBSH. Among all maps, the improvements on map \texttt{Maze} are the largest. Among all corridor reasoning techniques, GC is always the best. 

\section{Symmetry Reasoning Framework}
\label{sec:framework}

Until now, we have described and empirically evaluated each symmetry reasoning technique independently.
In this section, we present the complete framework of our pairwise symmetry reasoning technique, namely how to identify different classes of symmetry conflicts and, when multiple conflicts exist, which conflict to choose to resolve first. We then show some empirical results for combining all symmetry reasoning techniques together.

\subsection{Framework}

\begin{algorithm}[ht]
\caption{Symmetry Reasoning} \label{alg:symmetry-reasoning}
\small
\KwIn{Vertex conflict $c=\langle a_1, a_2, v, t \rangle$ or edge conflict $c=\langle a_1, a_2, v, u, t \rangle$. }
\BlankLine
$\{C_1, C_2\} \leftarrow \textsc{GeneralizedCorridorReasoning}(c)$\label{line:corridor}\;
\If{$\{C_1, C_2\} \neq $ ``Not Corridor''}
{
    \Return ``Corridor Conflict'' and constraint sets $\{C_1, C_2\}$\;
}
\If{$t$ is larger than the length of the path of agent $a_1$ or $a_2$\label{line:check-target}}
{
    $\{C_1, C_2\} \leftarrow \textsc{TargetReasoning}(c)$\label{line:target}\;
    \Return ``Target Conflict'' and the constraint sets $\{C_1, C_2\}$\;
}

\If{$c$ is a semi-/non-cardinal vertex conflict}
{
    $\{C_1, C_2\} \leftarrow \textsc{GeneralizedRectangleReasoning}(c)$\label{line:rectangle}\;
    \If{$\{C_1, C_2\} \neq $ ``Not Rectangle''}
    {
        \Return ``Rectangle Conflict'' and the constraint sets $\{C_1, C_2\}$\;
    }
}
$\{C_1, C_2\} \leftarrow \textsc{StandardCBSSplitting}(c)$\label{line:standard}\;
\Return ``Vertex/Edge Conflict'' and the constraint sets $\{C_1, C_2\}$\; 
\end{algorithm}

During the expansion of a CT node, we run symmetry reasoning for each vertex and edge conflict.  
\Cref{alg:symmetry-reasoning} shows the pseudo-code.
We first run generalized corridor reasoning by calling \Cref{alg:generalized-corridor} (\Cref{line:corridor}). 
If the input conflict $c$ turns out not to be a corridor conflict, we then check whether it is a target conflict by comparing the path length of the agents with the conflicting timestep $t$ (\Cref{line:check-target}). If, say, agent $a_1$'s path length is smaller than or equal to $t$, then it is a target conflict, and we generate the constraint sets $\{C_1=\{l_1 >t\}, C_2=\{l_1 \leq t\}\}$ by function $\textsc{TargetReasoning}(c)$ (\Cref{line:target}).
If conflict $c$ is not a target conflict but a semi- or non-cardinal vertex conflict, we then run generalized rectangle reasoning by calling the algorithm described in \Cref{sec:rect-algo} (\Cref{line:rectangle}). 
If conflict $c$ turns out not to be any class of symmetric conflicts, we use the standard CBS splitting method to generate constraints (\Cref{line:standard}).

When choosing conflicts for expansion, we prioritize conflicts by resolving cardinal conflicts first, then semi-cardinal conflicts, and last non-cardinal conflicts. The cardinality of symmetric conflicts are determined during the symmetry reasoning procedure, although we do not show it explicitly in \Cref{alg:symmetry-reasoning}.
When there are multiple conflicts of the same cardinality, we break ties using the same motivation described in \Cref{sec:CBSH}, i.e., in favor of conflicts that can increase the costs of the child CT nodes more.
To be specific, we give target conflicts the highest priority because, when resolving a target conflict, the cost of at least one child node is larger than the cost of the current CT node by at least one and often by much more.
Corridor conflicts have the second highest priority because, when resolving a corridor conflict, the costs of the child CT nodes can be more than one larger than the cost of the parent CT node.
Rectangle conflicts have the third highest priority because, when resolving a rectangle conflict, the costs of the child CT nodes are typically at most one larger. 
Vertex and edge conflicts have the lowest priority because we prefer to resolve all symmetric conflicts first, and also, when resolving a vertex or edge conflict, the costs of the child CT nodes are typically at most one larger.

\subsection{Empirical Evaluation}

\begin{figure}[!th]
\centering
\includegraphics[width=\textwidth]{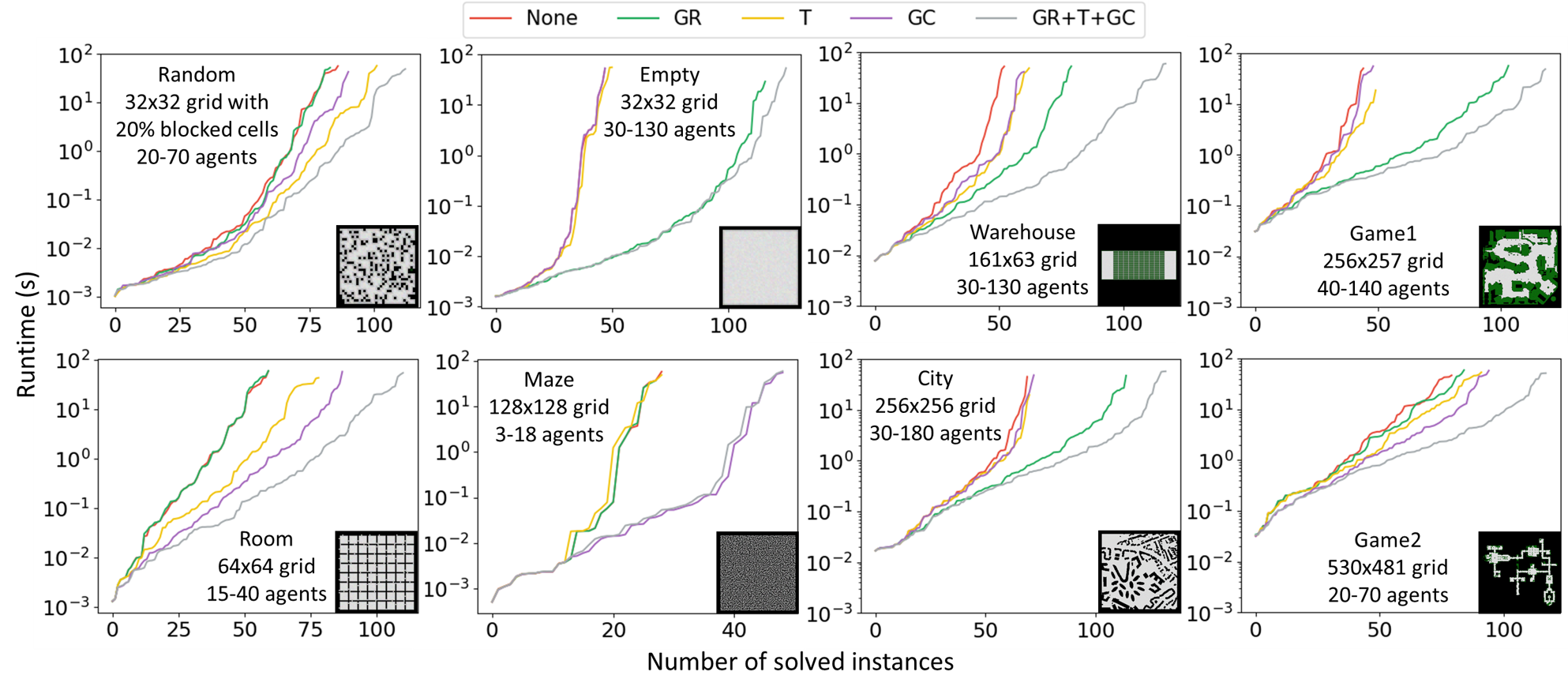}
\caption{Runtime distribution of CBSH with different symmetry reasoning techniques. 
}\label{fig:exp-rct}
\includegraphics[width=\textwidth]{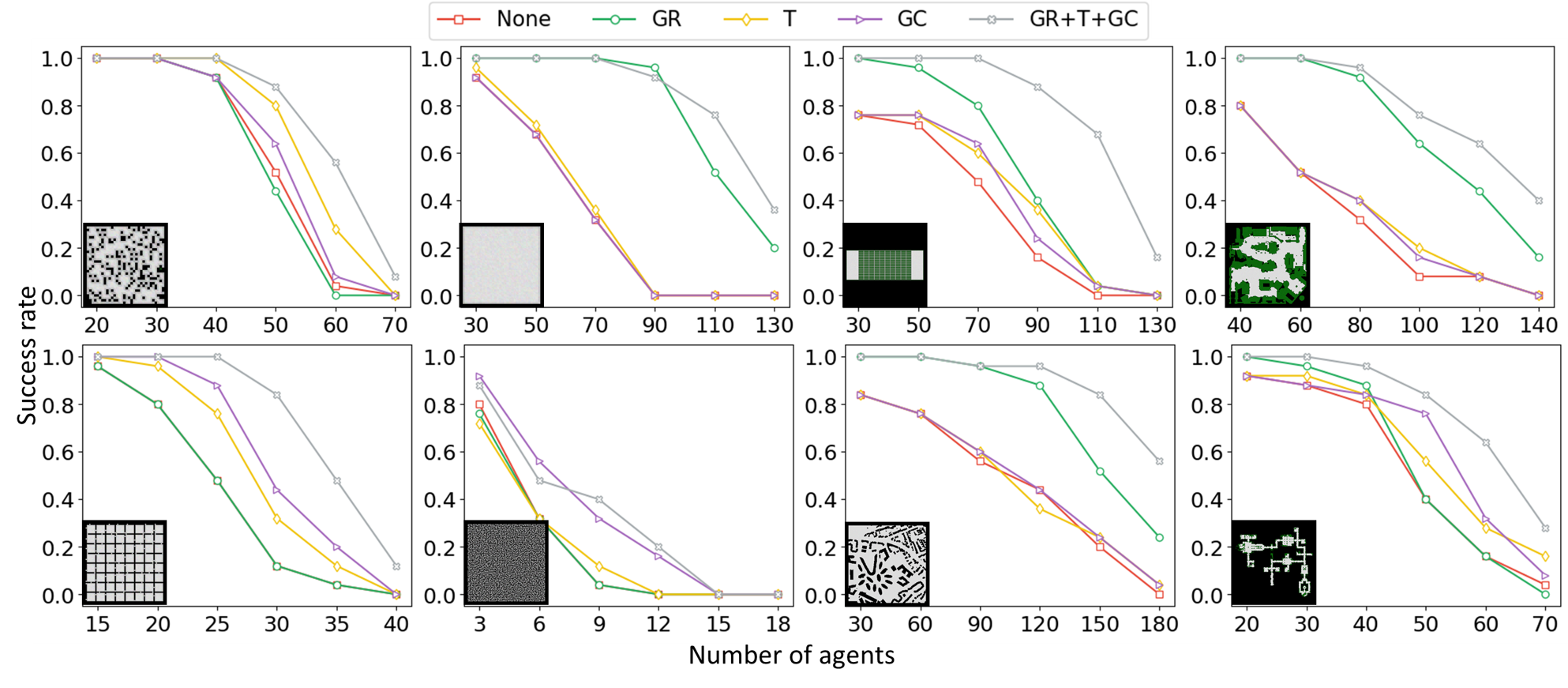}
\caption{Success rates of CBSH with different pairwise symmetry reasoning techniques. 
}\label{fig:success-rct}
\end{figure}
In this subsection, we compare CBSH (denoted \textbf{None}), CBSH with the best variant of each of the reasoning technique, namely generalized rectangle reasoning (denoted \textbf{GR}), target reasoning (denoted \textbf{T}), and generalized corridor reasoning (denoted \textbf{GC}), and CBSH with their combination (denoted \textbf{GR+T+GC}, or \textbf{RTC} for short). 

\paragraph{Runtimes and Success Rates}
\Cref{fig:exp-rct} presents the runtimes, and \Cref{fig:success-rct} presents the \emph{success rates}, i.e., the percentage of instances solved within the time limit of one minute.
As expected, all of GR, T, and GC are able to speed up CBSH, and the significance of their speedup depends on the structure of the maps. The combination of them, i.e.,  RTC, is always the best.
In \Cref{fig:success-rct}, we notice an interesting behavior on many maps, such as \texttt{Empty}, \texttt{Warehouse}, \texttt{Game1}, and \texttt{City}: the success rate improvements of the combination RTC is substantially larger than those of GR, T, and GC separately. This is because when an instance contains more than one class of symmetric conflicts, solving any class of symmetric conflicts with the standard splitting method of CBSH could result in
unacceptable runtimes. Thus, CBSH with only one of the reasoning techniques does not solve many instances within the time limit, while CBSH with all techniques does.

\begin{table}[t]
    \centering
    \small
    \caption{Scalability of CBSH with and without RTC, i.e., the largest number of agents that each algorithm can solve with a success rate of 100\%.}
    \label{tab:scalability}
    \resizebox{\columnwidth}{!}{
    \begin{tabular}{|c|rr||c|rr||c|rr||c|rr|}
       \hline
       Map  & \multicolumn{1}{c}{None} & \multicolumn{1}{c||}{RTC} &
       Map  & \multicolumn{1}{c}{None} & \multicolumn{1}{c||}{RTC} &
       Map  & \multicolumn{1}{c}{None} & \multicolumn{1}{c||}{RTC} &
       Map  & \multicolumn{1}{c}{None} & \multicolumn{1}{c|}{RTC} \\
       \hline
       \texttt{Random}      & 35 & 47 & 
       \texttt{Empty}       & 18 & 82 &
       \texttt{Warehouse}   & 17 & 84 & 
       \texttt{Game1}       &  5 & 67 \\
       \hline
       \texttt{Room}        & 10 & 27 &
       \texttt{Maze}        &  2 &  2 &
       \texttt{City}        &  3 & 89 &
       \texttt{Game2}       & 11 & 31 \\
        \hline
    \end{tabular}}
\end{table}

\paragraph{Scalability}
To show the scalability of CBSH with and without our reasoning techniques, instead of using the instances described in \Cref{tab:instance}, we run None and RTC on the same 6 maps with the number of agents increasing by one at a time, starting from 2. 
We report the largest number of agents that each algorithm can solve with a success rate of 100\% in \Cref{tab:scalability}. We see that, except for map \texttt{Maze}, RTC dramatically improves the scalability of CBSH, especially on large maps with many open space, such as maps \texttt{Game1} (with an improvement of 13 times) and \texttt{City} (with an improvement of 30 times).

\begin{figure}[t]
\centering
\includegraphics[width=0.3\textwidth]{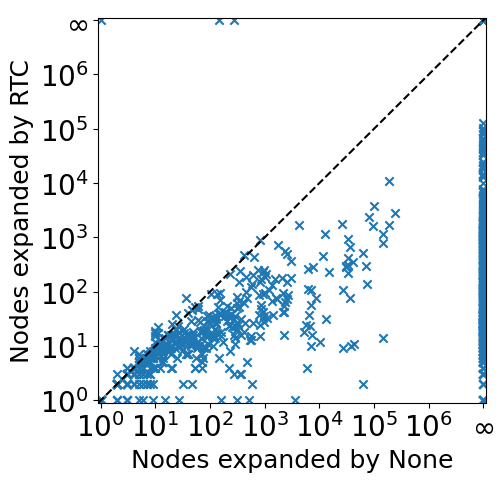}
\caption{
CT node expansions of None and RTC. If an instance is not solved within the time limit, we set its node expansions infinite. 
Among the 1,200 instances, 310 instances are solved by neither algorithm; 418 instances are solved by RTC but not by None; and only 3 instances are solved by None but not by RTC. Among the 469 instances solved by both algorithms, RTC expands fewer nodes than None on 364 instances, the same number of nodes on 84 instances, and more nodes only on 21 instances. 
}\label{fig:nodes}
\end{figure}
\paragraph{Size of CTs}
\Cref{fig:nodes} compares the number of expanded CT nodes of None and RTC. We can see that our reasoning techniques can reduce the size of CTs by up to four orders of magnitude. Among the 890 instances that are solved by at least one of the algorithms, RTC performs worse than None only on 24 (= 2\% of) instances and beats it on 782 (= 88\% of) instances. 

\begin{table}[t]
    \centering
    \small
    \caption{Percentage of runtime of RTC spent on rectangle reasoning (denoted ``Rect'') and corridor reasoning (denoted ``Corr''). The runtime overhead of target reasoning is negligible, and thus is not reported here.}
    \label{tab:overhead}
    \resizebox{\columnwidth}{!}{
    \begin{tabular}{|c|rr||c|rr||c|rr||c|rr|}
       \hline
       Map  & \multicolumn{1}{c}{Rect} & \multicolumn{1}{c||}{Corr} &
       Map  & \multicolumn{1}{c}{Rect} & \multicolumn{1}{c||}{Corr} &
       Map  & \multicolumn{1}{c}{Rect} & \multicolumn{1}{c||}{Corr} &
       Map  & \multicolumn{1}{c}{Rect} & \multicolumn{1}{c|}{Corr} \\
       \hline
       \texttt{Random}      & 3.62\% & 10.86\% & 
       \texttt{Empty}       & 5.79\% & 0.30\%  &
       \texttt{Warehouse}   & 1.26\% & 5.69\%  & 
       \texttt{Game1}       & 1.73\% & 1.80\%  \\
       \hline
       \texttt{Room}        & 2.42\% & 30.12\% &
       \texttt{Maze}        & 0.14\% & 0.57\%  &
       \texttt{City}        & 1.12\% & 0.98\%  &
       \texttt{Game2}       & 6.32\% & 8.52\%  \\
        \hline
    \end{tabular}}
\end{table}

\paragraph{Runtime Overhead}
\Cref{tab:overhead} reports the runtime overhead of rectangle and corridor reasoning in RTC. 
The runtime overhead of rectangle reasoning mainly comes from manipulating MDDs because it has to search on the MDDs twice, once for finding the generalized rectangle and once for classifying rectangle conflicts. However, they can both be done relatively fast, and, as a result, the overall runtime overhead of rectangle reasoning is manageable, i.e., always less than 7\% in \Cref{tab:overhead}.
The runtime overhead of corridor conflicts mainly comes from calculating $t_i(x)$ and $t_i'(x)$, as each of them, in our implementation, is a state-time A* search. We see that, on most maps, this overhead is small. But there are some maps, such as \texttt{Random} and \texttt{Game2}, where the overhead is more than 10\%. Overall, thanks to the effectiveness of the symmetry-breaking constraints for reducing the sizes of CTs, the overhead pays off in \Cref{fig:exp-rct,fig:success-rct}.

\begin{table}[t]
    \centering
    \small
    \caption{Conflict distributions for RTC. ``Nodes'' represents the number of expanded CT nodes within the time limit. ``Rectangle'', ``Target'' , ``Corridor'', and ``Vertex/Edge'' represent the percentage of CT nodes expanded by generalized rectangle, target, generalized corridor reasoning, and standard CBS splitting, respectively.}\label{tab:conflicts}
    \begin{tabular}{|cc|rrr|r|}
       \hline
       Map & Nodes & \multicolumn{1}{c}{Rectangle} & \multicolumn{1}{c}{Target} & \multicolumn{1}{c|}{Corridor} & \multicolumn{1}{c|}{Vertex/Edge} \\
       \hline
       \texttt{Random}     &  25,840 &  6.528\% & 54.391\% &  10.812\% & 28.269\% \\ 
       \texttt{Empty} &  17,946 &  9.016\% & 61.856\% & 0.016\% & 29.112\% \\
       \texttt{Warehouse} &  959&  4.745\% & 55.579\% &  10.337\% & 29.339\% \\ 
       \texttt{Game1} & 535 &  7.776\% &  50.851\% &  10.901\% & 30.472\% \\
       \texttt{Room} & 8,848 & 3.443\% &  10.135\% & 55.036\% & 31.386\% \\
       \texttt{Maze} & 30 &  0.000\% &  2.556\% &  44.315\% & 53.129\% \\
       \texttt{City} &     401 &  6.183\% &  48.422\% & 5.364\% & 40.031\% \\
       \texttt{Game2} &    345 &  2.400\% &  11.768\% &  67.998\% & 17.834\% \\
        \hline
    \end{tabular}
\end{table}

\paragraph{Conflict Distribution}
Table~\ref{tab:conflicts} reports how often RTC uses each reasoning technique 
to expand CT nodes, which also indicates how often different conflicts occur on different maps. 
Clearly, rectangle conflicts are more frequent on maps with more open space. An extreme case is on map \texttt{Maze}, where RTC does not branch on any rectangle conflicts as there is no open space on this map.
Target conflicts are highly frequent on all maps for two reasons: one is that we always choose to resolve target conflicts first, and the other is that the likelihood of a target conflict happening is high given the high density of the agents in our instances and regardless of the structures of the maps. The only exception is map \texttt{Maze}, because there most target conflicts are classified as corridor-target conflicts by generalized corridor reasoning.
Corridor conflicts are detected on all maps and frequent on maps with obstacles. Thanks to pseudo-corridor reasoning, we find many corridor conflicts not only on maps with many corridors, such as \texttt{Random}, \texttt{Warehouse}, \texttt{Room}, and \texttt{Maze}, but also maps with few or even zero corridors, such as \texttt{Empty}, \texttt{Game1}, \texttt{City}, and \texttt{Game2}.
Rectangle, target, and corridor conflicts together account for approximately 70\% of conflicts that are used to expand CT nodes on many of the maps.
Together with the efficiency of our reasoning techniques and the effectiveness of our symmetry-breaking constraints, this high frequency results in the gains that we see in~\Cref{fig:exp-rct,fig:success-rct,fig:nodes}.

\begin{table}
    \small
    \centering
    \caption{Number of expanded CT nodes for None and RTC to resolve a two-agent MAPF instance.
    Numbers in column $> n$ represent the percentage of instances that are solved by expanding more than $n$ CT nodes.
    }
    \label{tab:2agent-exp}
    \resizebox{\columnwidth}{!}{
    \begin{tabular}{|cc|c|rrrrr|}
    \hline
    Map & Agents & Algorithm & $> 1$ & $> 2$ & $> 9$ & $> 99$ & $> 999$ \\
    \hline
    \multirow{2}{*}{\texttt{Random}} & \multirow{2}{*}{100}
    &  None  & 13.577\% & 5.852\%  & 0.754\%  & 0.215\%  & 0.055\% \\
    && RTC   &  1.748\% & 0.806\%  & 0.428\%  & 0.031\%  & 0.014\% \\
    \hline
    \multirow{2}{*}{\texttt{Empty}} & \multirow{2}{*}{200}
    &  None  & 8.997\% & 8.262\% & 6.892\% & 5.583\% & 4.689\% \\
    && RTC   & 2.808\% & 0.588\% & 0.006\% & 0.001\% & 0.000\% \\
    \hline
    \multirow{2}{*}{\texttt{Warehouse}} & \multirow{2}{*}{200}
    &  None  & 20.896\% & 14.237\% & 1.049\% & 0.484\% & 0.297\% \\
    && RTC   &  0.948\% &  0.187\% & 0.029\% & 0.011\% & 0.011\% \\
    \hline
    \multirow{2}{*}{\texttt{Game1}} & \multirow{2}{*}{300}
    &  None  & 18.952\% & 4.477\% & 3.159\% & 2.926\% & 2.813\% \\
    && RTC   & 10.150\% & 0.502\% & 0.060\% & 0.050\% & 0.000\% \\
    \hline
    \multirow{2}{*}{\texttt{Room}} & \multirow{2}{*}{100}
    &  None  & 49.291\% & 28.169\% & 4.031\% & 0.007\% & 0.000\% \\
    && RTC   & 14.517\% &  3.283\% & 0.123\% & 0.003\% & 0.000\% \\
    \hline
    \multirow{2}{*}{\texttt{Maze}} & \multirow{2}{*}{20}
    &  None  & 96.886\% & 93.426\% & 69.550\% & 46.713\% & 16.609\% \\
    && RTC   &  6.484\% &  6.180\% &  1.418\% &  1.216\% &  0.405\% \\
    \hline
    \multirow{2}{*}{\texttt{City}} & \multirow{2}{*}{400}
    &  None  & 18.732\% & 7.338\% & 5.146\% & 3.531\% & 3.203\% \\
    && RTC   &  5.756\% & 0.189\% & 0.029\% & 0.029\% & 0.029\% \\
    \hline
    \multirow{2}{*}{\texttt{Game2}} & \multirow{2}{*}{150}
    &  None  & 38.282\% & 6.180\% & 0.116\% & 0.097\% & 0.093\% \\
    && RTC   & 18.930\% & 2.422\% & 0.024\% & 0.024\% & 0.000\% \\
    \hline
    \end{tabular}}
\end{table}

\paragraph{Two-Agent Analysis}
An interesting question to our reasoning techniques is that: do rectangle, target, and corridor reasoning find all pairwise symmetries in MAPF?
To answer this question, we design a two-agent experiment. Recall that CBSH2 (introduced in \Cref{sec:CBSH2}) solves a 2-agent sub-MAPF instance for each pair of conflicting agents at each CT node to compute heuristics.\footnote{In practice, CBSH2 does not do so for all pairs, as it uses a memoization technique to avoid solving the same 2-agent sub-MAPF instance (at different CT nodes) twice.}
Here, we record the number of CT nodes to solve such 2-agent instances by None and RTC, respectively, and report the results in \Cref{tab:2agent-exp}.
Compared to None, RTC requires substantially fewer nodes to resolve 2-agent instances. And impressively, RTC is able to solve up to 99\% of 2-agent instances by expanding only one CT node. Even in the worst case, it solves 81\%. 
Except for map \texttt{Maze}, there are less than 0.5\% of instances that RCT expands more than 10 CT nodes to resolve. As for map \texttt{Maze}, the percentage is less than 1.5\%. 
Therefore, we conclude that RTC is able to identify most of the pairwise symmetries in MAPF.

\section{Empirical Comparison with Existing Algorithms}
\label{sec:exp}

In this section, we compare our reasoning techniques with existing related algorithms, namely mutex propagation and CBSH2.

\subsection{Comparison with Mutex Propagation}\label{sec:exp-mutex}

\begin{figure}[t]
\centering
\includegraphics[width=\textwidth]{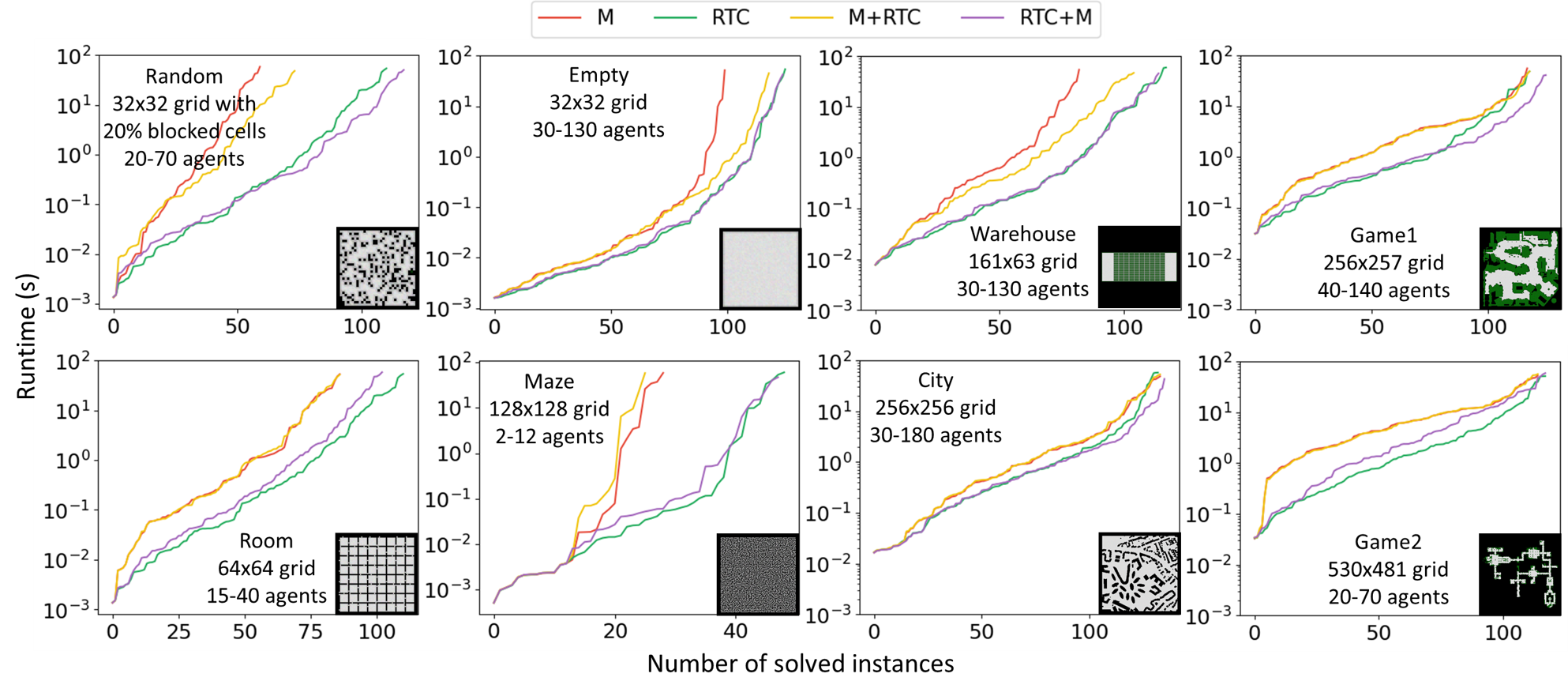}
\caption{
Runtime distribution of CBSH with RTC and mutex propagation.
}\label{fig:exp-mutex}
\end{figure}

As we introduced in \Cref{sec:existing}, mutex propagation is a symmetry reasoning technique that can identify all cardinal symmetric conflicts and resolve them with a pair of vertex constraint sets. To provide a extensive comparison of RTC and mutex propagation, we test four versions of CBSH: (1) CBSH with mutex propagation only (denoted \textbf{M}); (2) CBSH with RTC only (denoted \textbf{RTC}); (3) CBSH with both techniques where, for each vertex/edge conflict, we always perform mutex propagation first and then perform RTC only if mutex propagation fails to identify this conflict as a symmetric conflict (denoted \textbf{M+RTC}); and (4) CBSH with both techniques where, for each vertex/edge conflict, we always perform RTC first and then perform mutex propagation only if RTC fails to identify this conflict as a symmetric conflict (denoted \textbf{RTC+M}).

\Cref{fig:exp-mutex} reports the runtime distribution of these four algorithms. First, RTC alone always performs better than mutex propagation alone. One of the reasons is that mutex propagation only reasons about cardinal symmetric conflicts but ignores semi- and non-cardinal symmetric conflicts. Therefore, when we apply RTC after mutex propagation, M+RTC performs better than M in many cases. However, RTC still always performs better than M+RTC for two reasons, namely mutex propagation has larger runtime overhead than RTC and mutex propagation uses vertex constraint sets to resolve target (and corridor-target) conflicts, which are less effective than the length constraints that RTC uses. The performance of RTC and RTC+M is competitive. In some cases, RTC is slightly better than RTC+M because RTC+M has larger runtime overhead. In other cases, RTC is slightly worse than RTC+M because mutex propagation can identify some cardinal symmetric conflicts that RTC fails to identify. The negligible improvement of RTC+M over RTC also implies that, although we develop RTC by enumerating possible symmetries manually, it is already able to identify most of the cardinal symmetric conflicts. 
In summary, RTC is a more effective symmetry reasoning technique than mutex propagation on the instances we test. The combination of them does not outperform RTC alone.


\subsection{Comparison with CBSH2}\label{sec:exp-CBSH2}

\begin{figure}[!th]
    \centering
    \includegraphics[width=\textwidth]{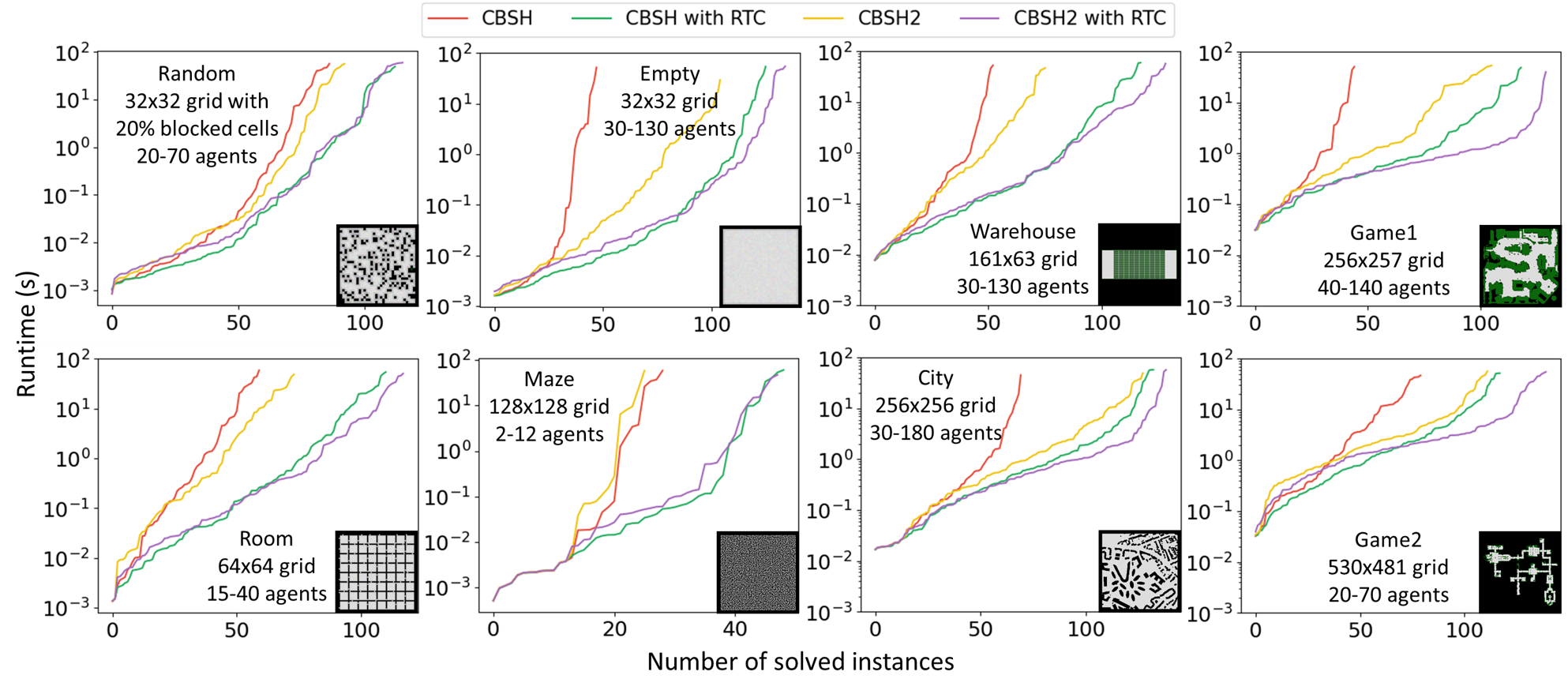}
    \caption{
    Runtime distribution of CBSH and CBSH2 with and without RTC. 
    }\label{fig:exp-wdg}
    
    \includegraphics[width=0.3\textwidth]{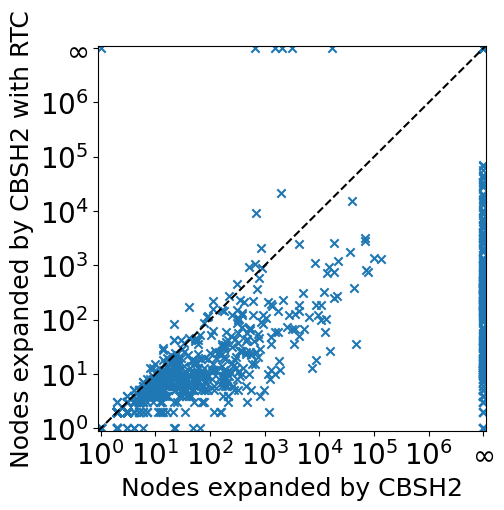}
    \caption{
    CT node expansions of CBSH2 and CBSH2 with RTC. If an instance is not solved within the time limit, we set its node expansions infinite. 
    Among the 1,200 instances, 239 instances are solved by neither algorithm; 241 instances are solved by CBSH2 with RTC but not by CBSH; and only 6 instances are solved by CBSH but not by CBSH2. Among the 714 instances solved by both algorithms, CBSH2 with RTC expands fewer nodes than CBSH2 on 572 instances, the same number of nodes on 104 instances, and more nodes only on 38 instances. 
    }\label{fig:WDG-nodes}
\end{figure}

CBSH2 uses CBSH to solve a two-agent sub-MAPF instance for each pair of agents in the original MAPF instance to generate informed heuristic guidance for the
high-level search of CBS.
We already show in \Cref{tab:2agent-exp} that our reasoning technique can significantly reduce the number of CT nodes required by CBSH for solving the two-agent instances. Now we show that our reasoning technique can also reduce the number of CT nodes required by CBSH2 for solving the original MAPF instance and thus reduce its runtime.
In addition, we add the bypassing strategy~\cite{BoyarskiICAPS15} to CBSH2 which can greedily resolve some semi- and non-cardinal conflicts without branching. 

\Cref{fig:exp-wdg} shows the runtime distribution of CBSH and CBSH2 with and without RTC. As expected, both CBSH with RTC and CBSH2 outperform CBSH in most cases. In particular, RTC always performs better than CBSH2, which indicates that, although RTC and the heuristics used in CBSH2 both reasons about pairs of agents, RTC using symmetry-breaking constraints to resolve symmetries directly is more effective than CBSH2 relying on the heuristics to eliminate symmetries. Not surprisingly, CBSH2 with RTC performs the best as it makes use of both symmetry-breaking constraints and informed heuristics. 

In order to show that the gain of RTC over CBSH2 is not just because it speeds up CBSH to solve the two-agent instances, we plot the number of CT nodes expanded by CBSH2 with and without RTC in \Cref{fig:WDG-nodes}. 
We see that RTC can reduce the size of CTs of CBSH2 by up to three orders of magnitude.  Among the 961 instances that are solved  by  at  least  one  of  the  algorithms, CBSH2 with RTC  performs  worse  than  CBSH2 only on 44 (= 5\% of) instances and beats it on 676 (= 70\% of) instances.

\section{Summary and Future Work}
\label{sec:conclusion}
Researchers have made significant progress on scaling up MAPF algorithms in the past decade. 
Most previous work focuses on developing advanced techniques for particular MAPF algorithms, like partial expansion for A*, node pruning for ICTS, and conflict selection for CBS. 
Here, we try to improve our understanding of what makes MAPF hard. The symmetry issues we identify must be eventually resolved by every optimal MAPF algorithm although the encodings are algorithm specific. 
We give instantiations for classic MAPF with optimal CBS. 
Other recent work has applied these ideas in other optimal MAPF algorithms like BCP~\cite{LamIJCAI19, LamICAPS20}, bounded-suboptimal MAPF algorithms like EECBS~\cite{LiAAAI21a}, and other MAPF variants like $k$-robust MAPF~\cite{ChenAAAI21b}.

We showed that symmetric conflicts arise extremely frequently in MAPF.
Rectangle conflicts occur when two agents must cross paths and have many equivalent ways to do so. 
The generalized rectangle reasoning applies to any planar graphs, which represents almost all the real-world circumstances for MAPF problems in 2D scenarios. 
Generalized corridor and target reasoning concentrate on spatial and temporal reasoning where we try to avoid symmetries resulting from multiple waiting actions. 
Both of them are applicable to any graphs and critical problems for one of the main commercial uses of MAPF, namely routing robots in automated warehouses.
We showed that 
our reasoning techniques scaled up CBSH by up to thirty times and reduced its node expansion by up to four orders of magnitude. They significantly outperformed mutex propagation and significantly improved CBSH2. 

There remain many open questions. 
As \Cref{tab:2agent-exp} indicates, although our reasoning techniques resolve most pairwise symmetries in a single branching step, there remain some undetected pairwise symmetries. 
Also, complex interactions between more than two agents can arise in congested settings.
Our work can also be extended to more complex MAPF problems. For example, in $k$-robust MAPF~\cite{atzmon2018robust}, agents need to keep safety time between each other. So, agents being at the same vertex at different timesteps can conflict with each other, which introduces more types of symmetric conflicts. The rectangle, target and corridor reasoning techniques have been shown to be effective there~\cite{ChenAAAI21b}.
Similarly, in large-agent MAPF~\cite{LiAAAI19a}, agents are of different sizes. So, agents at different vertices/edges can conflict with each other, which also introduces more types of symmetric conflicts. 
In addition, if we allow agents to have different speeds, then a chasing symmetry arises when a fast agent tries to overtake a slow agent.

\section*{Acknowledgment}
This research at Monash University was partially supported by Australian Research Council Grant DP200100025.
The research at the University of Southern California was supported by the National Science Foundation (NSF) under grant numbers 1409987, 1724392, 1817189, 1837779, and 1935712 as well as a gift from Amazon. The views and conclusions contained in this document are those of the authors and should not be interpreted as representing the official policies, either expressed or implied, of the sponsoring organizations, agencies or the U.S. government.

\bibliographystyle{elsarticle-num-names} 
\bibliography{references}

\appendix

\section{Proof for Rectangle Reasoning Techniques}
\label{sec:rect-proof}

\begin{custompro}{\ref{pro:rectangle1}}
For agents $a_1$ and $a_2$ with a rectangle conflict found by the rectangle reasoning technique \Rom{1}, 
all paths for agent $a_1$ that visit a node on the exit border ${R_1R_g}$ must visit a node on the entrance border ${R_sR_2}$, and 
all paths for agent $a_2$ that visit a node on the exit border ${R_2R_g}$ must visit a node on the entrance border ${R_sR_1}$.
\end{custompro}
\begin{proof}
We assume that the vertex conflict between agents $a_1$ and $a_2$ is at node $C$.
We then assume $S_1.x \leq C.x$ and $S_1.y \leq C.y$ without loss of generality (because the problem is invariant under rotations of axes).
According to \Cref{eqn:manhattan1,eqn:manhattan2,eqn:same-direction1,eqn:same-direction2},
\begin{linenomath}
\begin{gather}
\max\{S_1.x, S_2.x\} \leq C.x \leq \min\{G_1.x,G_2.x\}  \label{eqn:proof3}\\
\max\{S_1.y, S_2.y\} \leq C.y \leq \min\{G_1.y,G_2.y\}  \label{eqn:proof4}\\
(C.x-S_1.x)+(C.y-S_1.y) = (C.x-S_2.x)+(C.y-S_2.y). \label{eqn:proof1}
\end{gather}
\end{linenomath}
From \Cref{eqn:proof1}, we know
\begin{linenomath}
\begin{gather}
S_1.x+S_1.y=S_2.x+S_2.y. \label{eqn:proof2}
\end{gather}
\end{linenomath}
We assume that $S_1.x \geq S_2.x$ without loss of generality (because the problem is invariant under swaps of the indexes of agents), which implies $S_1.y \leq S_2.y$.
According to the definition of the four corners of the rectangle in \Cref{def:four-corners}, we have
$R_s.x = S_1.x$,
$R_s.y = S_2.y$,
$R_g.x = \min\{G_1.x,G_2.x\} \geq S_1.x$, 
$R_g.y = \min\{G_1.y,G_2.y\} \geq S_2.y$, 
$R_1.x=S_1.x$, 
$R_1.y=R_g.y$, 
$R_2.x=R_g.x$ and 
$R_2.y=S_2.y$.
Thus,
\begin{linenomath}
\begin{gather}
S_2.x = R_s.x \leq S_1.x = R_1.x \leq R_g.x = R_2.x \label{eqn:proof-assumption1} \\
S_1.y = R_s.y \leq S_2.y = R_2.y \leq R_g.y = R_1.y \label{eqn:proof-assumption2}.
\end{gather}
\end{linenomath}
Consequently, the relative locations of the start, target and rectangle corner nodes are exactly the same as given in \Cref{fig:rectangles}.
Since the $S_1$-$R_g$ rectangle and the $R_s$-$R_g$ rectangle are of the same width (i.e., $|S_1.x-R_g.x|=|R_s.x-R_g.x|=|R_1.x-R_g.x|$) and any sub-path $p_1$ from node $S_1$ to a node on border ${R_1R_g}$ must be Manhattan-optimal, sub-path $p_1$ must visit a node on border ${R_sR_2}$. 
Similarly, since the $S_2$-$R_g$ rectangle and the $R_s$-$R_g$ rectangle are of the same length (i.e., $|S_2.y-R_g.y|=|R_s.y-R_g.y|=|R_2.y-R_g.y|$) and any sub-path from node $S_2$ to a node on border ${R_2R_g}$ must be Manhattan-optimal, sub-path $p_2$ must visit a node on border ${R_sR_1}$. 
Therefore, the property holds.
\end{proof}

\begin{custompro}{\ref{pro:rectangle2}}
For agents $a_1$ and $a_2$ with a rectangle conflict found by the rectangle reasoning technique \Rom{2}, 
all paths for agent $a_1$ that visit a node constrained by $B(a_1,R_1,R_g)$ must visit a node on the entrance border $R_sR_2$, and 
all paths for agent $a_2$ that visit a node constrained by $B(a_2,R_2,R_g)$ must visit a node on the entrance border $R_sR_1$.
\end{custompro}
\begin{proof}
By \Cref{pro:start}, we need to prove that 
any path for agent $a_1$ from its start node $S_1$ to one of the nodes constrained by $B(a_1,R_1,R_g)$ must visit a node on the entrance border $R_sR_2$ and 
any path of agent $a_2$ from its start node $S_2$ to one of the nodes constrained by $B(a_2,R_2,R_g)$ must visit a node on the entrance border $R_sR_1$. This holds by applying the proof for \Cref{pro:rectangle1} after replacing \Cref{eqn:proof1,eqn:proof2} by \Cref{eqn:RM1}.
\end{proof}

\section{Proof for the Generalized Rectangle Reasoning Technique}
\label{sec:gr-proof}

For a given generalized rectangle $\mathcal{G}=(\mathcal{V}, \mathcal{E})$, we use $\mathcal{V}' = \{u | (u,t) \in \mathcal{V}\}$ to denote the vertices in the conflicting area. 

\begin{lem} \label{lem:generalized-rect1}
Any path for agent $a_i$ ($i=1,2$) that visits a node in the generalized rectangle $\mathcal{V}$ must visit an entrance edge in $E_i$.
\end{lem}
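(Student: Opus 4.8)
The plan is to trace a qualifying path from its start until it first touches the conflicting area and to argue that the edge by which it enters the area is forced to be one of the entrance edges in $E_i$. Concretely, let $p$ be any path for agent $a_i$ that visits some node of $\mathcal{V}$. First I would observe that the start node $(s_i,0)$ lies outside $\mathcal{V}$: since $\mathcal{V}\subseteq\MDD{3-i}$ by condition (1) of \Cref{def:conflicting-area}, having $(s_i,0)\in\mathcal{V}$ would force $(s_i,0)\in\MDD{3-i}$ and hence $s_{3-i}=s_i$, contradicting that the two agents have distinct start vertices (this is exactly the remark made after \Cref{def:entrance-edges}). Consequently $p$ begins outside $\mathcal{V}$ and later enters it, so there is a well-defined first node $(u',t')$ of $p$ that lies in $\mathcal{V}$, immediately preceded on $p$ by a node $(w,t'-1)\notin\mathcal{V}$.

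The core of the argument is then to show that the traversed edge $((w,t'-1),(u',t'))$ is a directed MDD edge of $\MDD{i}$, for then its ``from'' node is outside $\mathcal{V}$ and its ``to'' node is inside $\mathcal{V}$, so by \Cref{def:entrance-edges} it belongs to $E_i$, and $p$ visits it. To establish that it is an MDD edge I would first note that $(u',t')\in\mathcal{V}\subseteq\MDD{i}$, so $u'$ is reachable at depth $t'$ on a shortest path, whence the graph distance from $s_i$ to $u'$ equals $t'$. Since $p$ reaches $u'$ at timestep $t'$ using exactly $t'$ moves, the prefix of $p$ from $(s_i,0)$ to $(u',t')$ is itself a shortest $s_i$-to-$u'$ path (it contains no wait action). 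Appending to it the shortest completion from $(u',t')$ to the target node $(g_i,l)$ that witnesses $(u',t')\in\MDD{i}$ yields a genuine shortest path of agent $a_i$ that traverses the edge $((w,t'-1),(u',t'))$; therefore this edge lies on a shortest path and is an MDD edge. \Cref{pro:mdd} gives the same conclusion for the node $(w,t'-1)\in\MDD{i}$ and also rules out the degenerate case $w=u'$, since $(u',t'-1)\notin\MDD{i}$ whenever $(u',t')\in\mathcal{V}$.

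I expect the main obstacle to be exactly this last point: the lemma speaks of \emph{any} path, not only shortest paths, so the entering edge is a priori just a graph edge, and having both endpoints in the MDD is not by itself enough to make it an MDD edge. The decisive observation that overcomes this is that the first $\mathcal{V}$-node is necessarily reached at its MDD-optimal depth $t'$, which lets me splice an optimal completion and exhibit a concrete shortest path through the edge. Once that is in place, the remaining bookkeeping --- that $(w,t'-1)\notin\mathcal{V}$ by the choice of the first entry point while $(u',t')\in\mathcal{V}$ --- immediately yields membership in $E_i$ via \Cref{def:entrance-edges}, completing the proof.
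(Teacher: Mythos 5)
Your overall route coincides with the paper's: locate the first edge of $p$ that crosses into $\mathcal{V}$ (which exists because the start node lies outside $\mathcal{V}$, by exactly the distinct-starts argument you give, matching the remark after \Cref{def:entrance-edges}), place both of its endpoints in $\MDD{i}$, and invoke \Cref{def:entrance-edges}. The paper handles the ``from'' node by citing \Cref{pro:mdd}, and your splicing construction is literally the proof of that property, so in substance the two arguments are the same. If anything, you are more careful than the paper on one point: \Cref{def:entrance-edges} requires the entering edge to be a directed MDD \emph{edge}, whereas the paper's proof passes from node membership to edge membership without comment; your explicit exhibition of a shortest path traversing the edge closes that small gap.

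However, one intermediate claim of yours is wrong as stated, though repairably so. You assert that $(u',t')\in\MDD{i}$ forces the graph distance from $s_i$ to $u'$ to equal $t'$, and hence that the prefix of $p$ is a wait-free shortest $s_i$-to-$u'$ path. Neither holds in the setting where this lemma is actually used: the MDDs are built from shortest paths \emph{subject to the CT node's constraints}, so an MDD node's depth can strictly exceed the graph distance from the start vertex --- this is precisely the situation in \Cref{fig:rect-empty-map}, where the barrier constraint forces agent $a_2$ to wait --- and the arbitrary path $p$ may of course contain waits in its prefix as well. Fortunately, your splice never needed these claims: the prefix of $p$ arrives at $(u',t')$ at timestep $t'$, the completion witnessing $(u',t')\in\MDD{i}$ takes exactly $l-t'$ further timesteps, and both pieces satisfy the time-indexed constraints of the CT node, so the concatenation reaches $(g_i,l)$ at timestep $l$ and is therefore a shortest feasible path traversing the edge $((w,t'-1),(u',t'))$. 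With that one-line correction --- arrival-time arithmetic in place of wait-freeness --- your proof is sound; your concluding remarks, including the use of condition (3) of \Cref{def:conflicting-area} to exclude $w=u'$, are correct as written.
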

\begin{proof}
Consider an arbitrary path $p$ for agent $a_i$ that visits a node in $\mathcal{V}$.
Let edge $e=((u, t), (w, t+1))$ be the edge on path $p$ such that $(u, t) \notin \mathcal{V}$ and $(w, t+1) \in \mathcal{V}$. Since $(w, t+1) \in \mathcal{V}$, node $(w, t+1)$ is in $\MDD{i}$. By \Cref{pro:mdd}, node $(u, t)$ is also in $\MDD{i}$.
By the definition of the entrance edges in \Cref{def:entrance-edges}, edge $e \in E_i$. 
Therefore, any path for agent $a_i$ that visits a node in $\mathcal{V}$ must visit one of the entrance edges in $E_i$.
\end{proof}

\begin{lem} \label{lem:generalized-rect2}
Any path for agent $a_i$ ($i=1,2$) that visits a node in $\mathcal{V}$ must visit one of the entrance edges in $E_i^p$.
\end{lem}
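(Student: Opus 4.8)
The plan is to sharpen \Cref{lem:generalized-rect1}: that lemma already guarantees that any path for agent $a_i$ reaching a node of $\mathcal{V}$ traverses at least one entrance edge of $E_i = E_i^b \cup E_i^h$, and the remaining work is to force the relevant entrance edge into the outer-border set $E_i^b$ collected during the border scan of Step~2, rather than into the hole set $E_i^h$. I would begin by recording the timestep structure of the generalized rectangle: by condition~(3) of \Cref{def:conflicting-area} every vertex of the conflicting area is visited at a single fixed timestep, so the timestep strictly increases along every edge of $\mathcal{G}$. Hence $\mathcal{G}$ is a DAG whose unique earliest and latest nodes are $R_s$ and $R_g$, and whose outer border splits into exactly the two arcs traversed from $R_s$ to $R_g$ during scanning. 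The case actually needed by \Cref{pro:optimal3} is the one where the visited node lies on the exit border $R_iR_g \subseteq \mathcal{V}$, i.e.\ is a node constrained by $B(a_i,R_i,R_g)$.

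Fixing such an exit-border node $(v,t)$ on the path $p$, I would look at the \emph{last} entrance of $p$ into the conflicting area before it reaches $(v,t)$: the edge $e=((u,t'),(w,t'+1))$ on $p$ with $(u,t')\notin\mathcal{V}$, $(w,t'+1)\in\mathcal{V}$, and with $w,\dots,v$ an uninterrupted sub-walk inside $\mathcal{V}$. By \Cref{pro:mdd} every prefix of $p$ stays in $\MDD{i}$, so $(u,t')\in\MDD{i}$ and, by \Cref{def:entrance-edges}, $e\in E_i$; it remains to prove $e\in E_i^b$. The crux, and the step I expect to be the main obstacle, is to exclude the possibility that this final entrance occurs across the boundary of a hole (an edge of $E_i^h$) while the sub-walk still terminates on the outer exit border. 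I would rule this out using the single-color-per-hole guarantee enforced in Step~3 (\Cref{sec:check-holes}) together with planarity of $G$: a walk confined to $\mathcal{V}$ that begins on the boundary of a hole and ends on the outer border must, by a Jordan-curve / topological-crossing argument, cross the outer border of the conflicting area, and such a crossing edge is an outside-to-inside MDD edge on the outer border, hence an element of $E_i^b$ recorded during the scan.

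The remaining bookkeeping is routine given the earlier results: \Cref{pro:mdd} keeps all prefixes in $\MDD{i}$, the strict timestep monotonicity makes $\mathcal{G}$ acyclic with well-defined $R_s$ and $R_g$, and the border scan of Step~2 exhausts precisely the outer-border entrances into $E_i^b$. The genuinely delicate point is the planar crossing argument, since it is the only place where the two-dimensional embedding of $G$ (not merely the combinatorial MDD structure) is invoked, and where the one-color-per-hole property of Step~3 is essential to prevent a hole from providing an alternative, non-crossing route to the exit border.
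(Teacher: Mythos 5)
Your overall decomposition (reduce via \Cref{lem:generalized-rect1} to showing that an edge of $E_i^p$ --- your $E_i^b$ --- must also be traversed) matches the paper, but the step you identify as the crux is wrong, in two ways. First, your central claim --- that the \emph{last} entrance of the path into $\mathcal{V}$ before reaching the constrained node must lie in $E_i^b$ --- is false, so the contradiction you hope for is not available: a path can enter $\mathcal{V}$ across the outer border, leave into a hole, re-enter $\mathcal{V}$ through a hole entrance edge of $E_i^h$, and only then proceed to the exit border. Its last entrance is then a hole edge, yet the lemma holds for this path because an \emph{earlier} entrance was a border edge; the lemma only asserts that \emph{some} edge of $E_i^p$ is visited. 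The paper accordingly argues about the \emph{first} visit to $\mathcal{V}'$: since $s_i$ lies in the outer region and every hole is enclosed by $\mathcal{V}'$, any path traversing a hole entrance edge $((u,t),(w,t+1)) \in E_i^h$ must already have visited some vertex of $\mathcal{V}'$; taking $u'$ to be the first such vertex and $u''$ its predecessor on the path, the prefix up to $u''$ never touches $\mathcal{V}'$ and so cannot have entered a hole, hence $u''$ lies in the outer region; \Cref{pro:mdd} (applied backwards from $(w,t+1)\in\MDD{i}$) together with condition (3) of \Cref{def:conflicting-area} forces the visit to $u'$ to occur at the unique timestep $t_{u'}$ at which $\MDD{i}$ contains $u'$, so $((u'',t_{u'}-1),(u',t_{u'}))$ is an entrance edge whose ``from'' vertex is in the outer region, i.e., an element of $E_i^p$. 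Note also that your Jordan-curve claim is incoherent as written: a walk \emph{confined to} $\mathcal{V}$ that ends on the outer border simply arrives there from inside; it never traverses an outside-to-inside edge, so no element of $E_i^b$ can be extracted from it. The topological content you need is that a hole cannot be reached from $s_i$ without first crossing $\mathcal{V}'$, which is precisely the first-entry argument.

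Second, the one-color-per-hole check of Step 3 (\Cref{sec:check-holes}), which you call essential here, plays no role in this lemma --- the argument needs only that holes are enclosed by $\mathcal{V}'$ and that $s_i$ is outside. Where Step 3 actually matters is later, in the proof of \Cref{pro:optimal3}, to guarantee that the geometric crossing vertex of the two agents' sub-paths lies in $\mathcal{V}'$ rather than inside a hole. A smaller issue: you restrict attention to nodes on the exit border $R_iR_g$ because that is ``the case needed by \Cref{pro:optimal3}'', but the lemma as stated covers every node of $\mathcal{V}$, and the first-entry argument proves it in this generality at no extra cost, so the restriction weakens the statement without simplifying anything.
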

\begin{proof}
According to \Cref{lem:generalized-rect1} and the fact that $E_i=E_i^p \cup E_i^h$, we only need to prove that any path for agent $a_i$ that visits an edge in $E_i^h$ also visits an edge in $E_i^p$. 
Consider an arbitrary path $p$ for agent $a_i$ that visits an edge $e=((u, t), (w, t+1))$ in $E_i^h$.
In geometry, since vertex $s_i$ is outside the conflicting area while vertex $u$ is in a hole, path $p$ must visit at least one vertex in $\mathcal{V}'$. We use $u'$ to denote the first vertex in $\mathcal{V}'$ visited by path $p$, $u''$ to denote the vertex visited by path $p$ right before vertex $u'$,  and $(u', t_{u'})$ to denote the corresponding node in $\mathcal{V}$. By \Cref{def:conflicting-area}, node $(u', t_{u'})$ is the only MDD node in $\MDD{i}$ that visits vertex $u'$. 
By \Cref{pro:mdd} and the fact that node $(w, t+1)$ is in $\MDD{i}$, all nodes before timestep $t+1$ on path $p$, including the node whose vertex is $u'$, are in $\MDD{i}$.
So path $p$ visits vertex $u'$ at timestep $t_{u'}$ and vertex $u''$ at timestep $t_{u'} - 1$. So $(u'', t_{u'} - 1) \notin \mathcal{V}$, $(u', t_{u'}) \in \mathcal{V}$, and both node  $(u'', t_{u'} - 1)$ and node $(u', t_{u'})$ are in $\MDD{i}$. Therefore, edge  $e'=((u'', t_{u'} - 1),(u', t_{u'}))$ is an entrance edge in $E_i^p$. 
Therefore, the lemma holds.
\end{proof}

\begin{custompro}{\ref{pro:optimal3}}
For all combinations of paths of agents $a_1$ and $a_2$ with a generalized rectangle conflict, if one path violates $B(a_1,R_1,R_g)$ and the other path violates $B(a_2,R_2,R_g)$, then the two paths have one or more vertex conflicts within the conflicting area $\mathcal{G}$.
\end{custompro}

\begin{proof}
Since all nodes prohibited by $B(a_i,R_i,R_g)$ ($i=1,2$) are in $\mathcal{V}$, from \Cref{lem:generalized-rect2}, any path for agent $a_i$ ($i=1,2$) that visits a node prohibited by $B(a_i,R_i,R_g)$ must visit one of the entrance edges in $E_i^p$.
The four nodes $R_s, R_g, R_1$ and $R_2$ cut the border of the generalized rectangle $\mathcal{G}$ into four segments $R_sR_2$, $R_2R_g$, $R_gR_1$ and $R_1R_s$, denoted $Seg_1, Seg_2, Seg_3$ and $Seg_4$, respectively.
The ``to'' nodes of all entrance edges in $E_1^p$ is on $Seg_1$ and the ``to'' nodes of all entrance edges in $E_2^p$ is on segment $Seg_4$. 
The nodes prohibited by $B(a_1,R_1,R_g)$ is on segment $Seg_3$ and the nodes prohibited by $B(a_2,R_2,R_g)$ is on segment $Seg_2$.
Therefore, we only need to prove that
any path $p_1$ for agent $a_1$ that visits a node on $Seg_1$ and a node on segment $Seg_3$ must conflict with any path $p_2$ for agent $a_2$ that visits a node on $Seg_4$ and a node on segment $Seg_2$.
By the geometric property,  
paths $p_1$ and $p_2$ must cross each other, i.e., must visits at least one common vertex $u$. According to \Cref{sec:check-holes}, vertex $u$ is not in one of the holes, i.e., $u \in \mathcal{V}'$. Let node $(u, t_u)$ be the corresponding node in $\mathcal{V}$. Then both path $p_1$ and path $p_2$ must visit node $(u, t_u)$, i.e., they conflict at vertex $u$ at timestep $t_u$. 
Therefore, the property holds.
\end{proof}


\section{Proof for the Corridor Reasoning Technique}
\label{sec:corridor-proof}

\begin{custompro}{\ref{pro:optimal4}}
For all combinations of paths of agents $a_1$ and $a_2$ with a corridor conflict, if one path violates $\tuple{a_1, e_1, [0, \min(t_1'(e_1) - 1, t_2(e_2) + k)]}$ and the other path violates $\tuple{a_2, e_2, [0, \min(t_2'(e_2) - 1, t_1(e_1) + k)]}$, then the two paths have one or more vertex or edge conflicts inside the corridor.
\end{custompro}
\begin{proof}
Let path $p_1$ be an arbitrary path of agent $a_1$ that visits vertex $e_1$ at timestep $\tau_1 \in [0, \min(t_1'(e_1) - 1, t_2(e_2) + k)]$ and path $p_2$ be an arbitrary path of agent $a_2$ that visits vertex $e_2$ at timestep $\tau_2 \in [0, \min(t_2'(e_2) - 1, t_1(e_1) + k)]$. We need to prove that paths $p_1$ and $p_2$ have one or more vertex or edge conflicts inside the corridor.  

Since $\tau_1 \leq \min(t_1'(e_1) - 1, t_2(e_2) + k) \leq t_1'(e_1) -1 < t_1'(e_1)$ (where $t_1'(e_1)$ is the earliest timestep when agent $a_1$ can reach vertex $e_1$ without using the corridor between vertices $e_1$ and $e_2$), path $p_1$ must traverse the corridor.
Similarly, path $p_2$ must traverse the corridor as well.

Since $\tau_1 \leq \min(t_1'(e_1) - 1, t_2(e_2) + k) \leq t_2(e_2)+k$ (where $k$ is the distance between vertices $e_1$ and $e_2$), the latest timestep when path $p_1$ visits vertex $e_2$ is no larger than timestep $t_2(e_2)$.
$t_2(e_2)$ is the earliest timestep when path $p_2$ can visit vertex $e_2$, so path $p_1$ visits vertex $e_2$ before path $p_2$.
Similarly, path $p_2$ visits vertex $e_1$ before path $p_1$.
Therefore, paths $p_1$ and $p_2$ must have a conflict in the corridor between vertices $e_1$ and $e_2$.
Therefore, the property holds.
\end{proof}

\section{Proof for the Corridor-Target Reasoning Technique}
\label{app:corridor-target-proof}
\begin{custompro}{\ref{pro:optimal5}}
For all combinations of paths of agents $a_1$ and $a_2$ with a corridor-target conflict, if one path violates constraint set $C_1$ and the other path violates constraint set $C_2$, then the two paths have one or more vertex or edge conflicts inside the corridor.
\end{custompro}
\begin{proof}
Since a path of agent $a_1$ cannot violate the length constraints $l_1 > l$ and $l_1 \leq l$ simultaneously, we only need to consider the case where a path of agent $a_1$ violates $l_1 > l$ and a path of agent $a_2$ violates $\langle a_2, e_2, [0, t_2'(e_2) - 1] \rangle$ or $l_2 > t_2'(g_2)$. 

\paragraph{Case 1}
Let us first consider the case where the target vertex of agent $a_2$ is not inside the corridor.
Let path $p_1$ be an arbitrary path of agent $a_1$ that is of length no larger than $l$ and path $p_2$ be an arbitrary path of agent $a_2$ that visits vertex $e_2$ at timestep $\tau_2 \in [0, t_2'(e_2) - 1]$. We need to prove that paths $p_1$ and $p_2$ have one or more vertex or edge conflicts inside the corridor.  
Since $\tau_2 \leq t_2'(e_2) - 1 < t_2'(e_2)$ (where $t_2'(e_2)$ is the earliest timestep when agent $a_2$ can reach vertex $e_2$ without using the corridor between vertices $e_1$ and $e_2$), path $p_2$ must traverse the corridor.
Since the target vertex of $a_1$ is inside the corridor, eventually path $p_1$ must enter the corridor via endpoints $e_1$ or $e_2$ without leaving again. 
Assume that path $p_1$ enters the corridor via endpoint $e_i$ ($i=1,2$) at timestep $\tau_1$ (without leaving again), then 
\begin{align}
\tau_1  & \leq |p_1| - dist(e_i, g_1) \nonumber \\
        & \leq l - dist(e_i, g_1) \nonumber  \\
        & \leq (\max\{t_1(e_i) - 1, t_2(e_i)\} + dist(e_i, g_1)) - dist(e_i, g_1) \nonumber  \\
        & = \max\{t_1(e_i) - 1, t_2(e_i)\} \nonumber \\
        & \leq \max\{\tau_1 - 1, t_2(e_i)\} \nonumber \\
        & = t_2(e_i), \label{eqn:tau1}
\end{align}
where $|p_1|$ represents the length of path $p_1$. This equation indicates that path $p_1$ enters the corridor via endpoint $e_i$ at at before path $p_2$ without leaving again. Therefore, paths $p_1$ and $p_2$ must have one or more vertex or edge conflicts inside the corridor.  

\paragraph{Case 2}
Now let us consider the case where the target vertices of both agents are inside the corridor.
Let path $p_1$ be an arbitrary path of agent $a_1$ that is of length no larger than $l$ and path $p_2$ be an arbitrary path of agent $a_2$ that is of length no larger than $t_2'(g_2) - 1$. We need to prove that paths $p_1$ and $p_2$ have one or more vertex or edge conflicts inside the corridor.  
Since $|p_2| \leq t_2'(g_2) - 1 < t_2'(g_2)$ (where $t_2'(g_2)$ is the earliest timestep when agent $a_2$ can reach its target vertex $g_2$ via vertex $e_2$), path $p_2$ must reach its target vertex $g_2$ via vertex $e_1$, i.e., path $p_2$ reaches its target vertex $g_2$ via vertex $g_1$.
Since the target vertex of $a_1$ is inside the corridor, eventually path $p_1$ must enter the corridor via endpoints $e_1$ or $e_2$ without leaving again. 
If path $p_1$ enters the corridor via endpoint $e_2$, then path $p_1$ reaches its target vertex $g_2$ via vertex $g_1$. So paths $p_1$ and $p_2$ have one or more vertex or edge conflicts inside the corridor.
If path $p_1$ enters the corridor via endpoint $e_1$, say at timestep $\tau_1$, then according to \Cref{eqn:tau1}, we know $\tau_1 \leq t_2(e_1)$, which indicates that path $p_1$ enters the corridor via endpoint $e_1$ at at before path $p_2$ without leaving again. Therefore, paths $p_1$ and $p_2$ must have one or more vertex or edge conflicts inside the corridor. 

Therefore, the property holds.
\end{proof}








\end{document}